\documentclass{article}
\usepackage[main, final]{neurips_2026}

\usepackage[utf8]{inputenc} 
\usepackage[T1]{fontenc}    
\usepackage{hyperref}       
\usepackage{url}            
\usepackage{booktabs}       
\usepackage{amsfonts}       
\usepackage{nicefrac}       
\usepackage{microtype}      
\usepackage{xcolor}         

\usepackage{graphicx} 
\usepackage{amsmath, amsfonts, amssymb, amsthm, dsfont}
\usepackage{algorithm}
\usepackage{algpseudocode}
\usepackage{xcolor}
\usepackage{tikz}
\usepackage{hyperref}
\usepackage{subcaption}
\usepackage{wrapfig}
\usepackage{booktabs}
\usepackage{subcaption}
\renewcommand{\arraystretch}{1.2}
\usepackage[normalem]{ulem}

\newtheorem{definition}{Definition}
\newtheorem{proposition}{Proposition}
\newtheorem{theorem}{Theorem}
\newtheorem{lemma}{Lemma}

\newcommand{\one}{\mathds{1}}
\newcommand{\indi}[1]{\mathds{1}\{#1\}}

\newcommand{\E}{\mathbb{E}}
\renewcommand{\P}{\mathbb{P}}
\newcommand{\R}{\mathbb{R}}

\newcommand{\cf}{\mathcal{F}}

\newcommand{\cl}{\mathcal{L}}

\newcommand{\cp}{\mathcal{P}}

\newcommand{\sumn}{\sum_{n=1}^N}

\newcommand{\Phat}{\hat{P}}
\newcommand{\Pbhat}{\hat{\mathbf{P}}}

\DeclareMathOperator*{\argmin}{argmin}

\title{Online Quantile Omniprediction for Proper Losses}

\author{
Hyunsuk Kim \\
Department of Statistics\\
University of California, Berkeley
\And
Isaac Gibbs \\
Department of Statistics\\
University of Toronto
\AND
Michael W. Mahoney \\
ICSI, LBNL, and Dept. of Statistics\\
University of California, Berkeley
\And
Ryan J. Tibshirani \\
Department of Statistics\\
University of California, Berkeley
}

\begin{document}

\maketitle
\begin{abstract}
The task of constructing a predictor that is simultaneously accurate under multiple evaluation metrics has been popularized recently under the name omniprediction. 
Prior work in this area has largely focused on binary prediction tasks, and existing results for multiclass and continuous outcomes typically suffer from slow error rates. 
In this paper, we consider forecasts expressed as a set of quantiles over a discrete but otherwise arbitrary set of probability levels. 
We develop sample-efficient algorithms for constructing forecasts that are accurate simultaneously across all proper losses that satisfy mild regularity conditions, where proper losses are those minimized by predicting the true quantiles. 
Our algorithms are online and can operate in a dynamic environment, while simultaneously taking multiple losses into account, and their forecasts offer the potential to provide diverse end-users with rich information for evaluating risk and informing a range of practical decision-making tasks. 
As an example, we demonstrate how our procedure can be used to ensemble quantile forecasts of COVID-19 hospitalizations made by experts during the pandemic. 
Code is available at \url{https://github.com/hysk79/online-quantile-omniprediction}. 
\end{abstract}

\section{Introduction} 
\label{sec:intro}

Probabilistic forecasting, i.e., forecasting the distribution of a target variable, has become a popular framework across a variety of prediction tasks \cite{Gneiting2014, Vannitsem2021, Schorlemmer2018, cramer2022violinplot}. 
In contrast to point predictions, probabilistic forecasts provide a richer description of uncertainty and offer end-users greater information with which to evaluate risk and guide effective decision-making \cite{Raftery2016}. 
This flexibility is particularly valuable in settings in which a single prediction is shown to a diverse set of users with varying and unknown objectives, as it allows users to extract the information most relevant to their needs.

A conventional approach for training and evaluating a probabilistic forecast is to measure its distance from the true distribution using a proper loss \cite{Gneiting2005, Gneiting2007}.  
In general, we say that a loss is proper if (in expectation) it is minimized by predicting the true distribution of the outcome. 
There are a number of popular proper losses in the literature, and a key difficulty is determining which loss one should use for training. 
Indeed, while all proper losses share the same global optimum, they penalize distance from the ground truth asymmetrically, and thus obtaining a near minimizer in one loss does not guarantee good performance for other losses~\cite{Buchweitz2025}. 

In response to this, a recent line of work aims to construct ``omnipredictors'' that achieve optimal performance on multiple losses simultaneously \cite{gopalan2022omnipredictors,  gopalan2023swap,  garg2024oracle, gopalan2024omni_regression, Noarov2025, lu2025omni_nonlinear, okoroafor2025near, isaac2025omni}. 
Much of this work has focused on binary prediction tasks. 
In that context, it has been shown that, given a baseline class of prediction functions, one can construct a single unified predictor that approximately matches the performance of the best baseline on every proper loss simultaneously  \cite{garg2024oracle,okoroafor2025near, isaac2025omni}.  
Notably, the error rates achieved in these works are identical (up to logarithmic factors) to the standard error rates for minimizing a single loss function: ($\tilde{O}(\sqrt{1/T})$ in the online setting over $T$ timesteps; and $\tilde{O}(\sqrt{1/n})$ in the offline setting over $n$ samples).  
Thus, omniprediction offers the potential to provide a practical path to improving existing learning algorithms.  

In contrast to this work on binary prediction tasks, results for multiclass and continuous targets are much more modest. 
Prior work in \cite{gopalan2022omnipredictors,gopalan2024omni_regression,lu2025omni_nonlinear} gives algorithms for multiclass or continuous outcome omniprediction over restricted classes of losses satisfying various convexity and Lipschitz conditions, while \cite{Noarov2025} considers multiclass Bregman losses that are separable in the predicted probability for each class.  
In all of these cases, omniprediction is achieved at significantly slower rates than in the binary setting. 
For more general losses, \cite{Hu2026} propose a method for obtaining omniprediction over all proper losses in the multiclass setting. 
However, its error rate deteriorates rapidly as the number of classes grows, with realized rates of $(k/n)^{1/(k+1)}$ and $(k/T)^{1/(k+1)}$ for online/offline $k$-class classification, respectively.  

To achieve more practical methods, we pursue omniprediction guarantees for simplified probabilistic forecasts expressed as dense grids of quantile predictions for the outcome.  
Our motivation for considering such quantile forecasts is two-fold. 
First, on a technical level, proper losses for quantiles admit a decomposition as mixtures over a smaller two-parameter class of elementary losses. 
This will allow us to focus our algorithms on a small, tractable set of losses and thus enable the development of efficient omniprediction algorithms that mirror those for the binary prediction setting. 
Second, quantile forecasts are a popular way to summarize uncertainty in many real-world forecasting domains \cite{Hong2016, cramer2022violinplot}. 
As a demonstration, we will show how our methods can be used to improve quantile-level forecasts of COVID-19 hospitalizations made by expert forecasting groups during the pandemic. 

The remainder of this paper is structured as follows.  
Section \ref{sec:preliminaries} formally defines our prediction framework and presents the key structural results for proper quantile losses used throughout this paper.  
Section \ref{sec:theory} develops our main method, which is an online forecasting algorithm for constructing quantile predictions that achieve low regret on all (sufficiently smooth) proper losses simultaneously. 
Analogous to results for the binary setting, we show that the omniprediction error of this method over $T$ timesteps scales as $\tilde{O}(\sqrt{1/T})$ and thus matches rates for learning under a single loss (up to logarithmic factors).  
Section \ref{sec:experiments} gives our main empirical results, where we show that our procedure can be used to improve quantile forecasts made during the COVID-19 pandemic.
We conclude in Section \ref{sec:discussion} with a discussion of limitations and directions for future work. Additional technical details, including proofs of all theoretical results, efficient subroutines for faster implementation of our algorithm, and additional empirical results and plots, are deferred to the appendix.

\section{Preliminaries} 
\label{sec:preliminaries}

\subsection{Problem statement} \label{ssec:prob_statement}

We consider an online adversarial setting in which at each timestep $t \in [T]$ the goal is to forecast an outcome $Y_t$ whose distribution is allowed to depend on the entire history of previous outcomes and predictions.  For simplicity, we assume that $Y_t$ lies in $[0,1]$, although our results readily extend to any bounded interval by translation and rescaling. 

Given quantile levels $0 \!<\! \tau_1 \!<\! \tau_2 \!<\! \cdots \!<\! \tau_N \!<\! 1$, our forecasts take the form of a vector $q(t) = (q_{1}(t),\dots,q_N(t))$ where $q_n(t)$ denotes our estimate of the $\tau_n$ quantile of $Y_t$.  We evaluate the performance of these predictions by comparing them to a finite set of $B$ base forecasters $\mathcal{F} = \{ \{f^n_b(t)\}_{n \in [N], t \in [T]} : b \in [B]\}$, where $f^n_b(t) \in [0,1]$ denotes the forecast of the $\tau_n$-quantile made by base forecaster $f_b$ at timestep $t$. 

As motivating examples, one may consider problems in which $\mathcal{F}$ is a collection of parametric functions mapping some covariates $X_t$ to $Y_t$ or problems in which $\mathcal{F}$ contains candidate forecasts from many different baseline learning algorithms. In either case, at every timestep $t$, we will observe the previous sequence of outcomes $(Y_s)_{s < t}$ and the current and preceding base forecasts $ \{ \{f^n_b(s)\}_{n \in [N], s \leq t} : b \in [B]\}$, and our goal will be to construct $q(t)$ so that it matches the performance of the best base forecaster on every loss $\ell$ in some collection $\cl$.  More precisely, following \cite{gopalan2022omnipredictors}, we aim to minimize the omniprediction error, defined as
\begin{equation}\label{eq:omni}
  \sup_{\ell \in \mathcal{L}, b \in [B]} \frac{1}{T}\sum_{t=1}^T \ell(q(t),Y_t) - \ell(f_b(t),Y_t),
\end{equation}
where $\cf = \{f_b\}_{b=1}^B$ and $f_b(t):=(f_b^1(t),...,f_b^N(t))$. It is important to emphasize that in this definition the comparator forecaster in $\mathcal{F}$ is allowed to depend on the loss $\ell \in \mathcal{L}$. As a result, to achieve low omniprediction error it is not sufficient to set $q(t)$ to be a fixed element of $\mathcal{F}$, as the best base forecaster may vary between losses.

We will be interested in pursuing low omniprediction error with respect to the class of proper losses. Broadly, we say that a loss function is proper for forecasting a given set of quantiles if it is minimized by forecasting the true quantiles of the target.\footnote{Losses with this property are also sometimes referred to as consistent losses or consistent scoring functions.} This is formalized in the following definition.
\begin{definition}[Proper loss for quantile levels]
For a distribution $P^y$ on $[0,1]$, denote its $\tau$-level quantile by $q^*_{\tau}(P^y) := \inf\{q \in [0,1] : \mathbb{P}_{Y \sim P^y}(Y \geq q) \geq \tau\}$. A loss function $\ell : [0,1]^N \times [0,1] \to \mathbb{R}_{\geq 0}$ is referred to as a \textbf{proper loss for quantile levels}  $0 < \tau_1  < \tau_2 < \cdots < \tau_N < 1 $ if 
\[
(q^*_{\tau_n}(P^y))_{n=1}^N \in \argmin_{q \in \mathbb{R}^N} \mathbb{E}_{Y \sim P^y}[\ell(q,Y)], \quad \text{ for all distributions $P^y$ supported on $[0,1]$}.
\]
\end{definition}
By definition, the class of proper losses is the largest class of losses that share the true quantiles as their common minimizer, and thus it is the largest class over which we can reasonably expect to achieve low omniprediction error.

\subsection{Existing characterizations of proper losses}\label{ssec:characterization}

In order to make our omniprediction objective tractable, we will rely on structural characterizations of  proper losses for quantiles. Such characterizations have a long history in the forecasting literature, dating back to the pioneering work of \cite{savage1971decompose}.  We will in particular rely on results for quantiles developed in \cite{thomson1979decompose, Saerens2000, gneiting2011point, ehm2016binarydecomp} which show that any (sufficiently smooth) proper loss for a single quantile level can be decomposed as a mixture over a one-dimensional family of elementary losses. To outline this result, \cite{ehm2016binarydecomp} summarizes the characterization of proper loss for single quantile level from previous works as follows:

\paragraph{Characterization.}
For any non-decreasing function $g: [0,1] \to \R$, the loss function $\ell : [0,1] \times [0,1] \to \mathbb{R}_{\geq 0}$ given by
\begin{align}\label{eq:q_consistent_score}
\ell(q,y) = (\one(y<q)-\tau)(g(q)-g(y))
\end{align}
is proper for the $\tau$-level quantile. Moreover, this representation is \textbf{exhaustive}, in the sense that any proper loss function for the $\tau$-level quantile that satisfies the mild regularity conditions: (L0) $\ell(q,y) \geq 0$ with equality if $q=y$; (L1) $\ell$ is continuous; and (L2) $\ell$ is twice continuously differentiable for $q \neq y$ can be expressed in this form.

Let $\cl_{\tau}$ denote the class of loss functions of the form \eqref{eq:q_consistent_score} with $g$ being a left-continuous, non-decreasing function. Then, \cite{ehm2016binarydecomp} shows that every loss in $\cl_{\tau}$ can be decomposed as a mixture of so-called elementary losses as follows.

\begin{theorem}[Theorem 1a in \cite{ehm2016binarydecomp}] \label{thm:decomp_single_q}
Fix any $\tau \in (0,1)$. For any loss function $\ell \in \mathcal{L}_{\tau}$, there exists a non-negative Lebesgue--Stieltjes measure $H$ on $[0,1]$, given by $dH(\theta)=dg(\theta)$, such that for all $q,y \in [0,1]$,
\begin{align}\label{eq:elem_score_q}
\ell(q,y) = \int_{0}^{1} \ell_{\tau,\theta}(q,y) dH(\theta),
\end{align}
where for any $\theta \in [0,1]$, $\ell_{\tau, \theta}$ is the elementary loss 
\begin{align}\label{eq:elementary_loss}
\ell_{\tau, \theta}(q,y) = (\one(y<q) - \tau)(\one(\theta<q) - \one(\theta<y)) 
=
\begin{cases}
  1-\tau, & y \leq \theta < q,  \\
  \tau, & q \leq \theta < y, \\
  0, & \text{otherwise}.
\end{cases}
\end{align}
\end{theorem}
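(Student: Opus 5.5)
The plan is a direct computation: take the measure $dH = dg$ on $[0,1]$, integrate the elementary losses against it, and check that the result is exactly \eqref{eq:q_consistent_score}. Since $g$ is left-continuous and non-decreasing on $[0,1]$, it induces a unique non-negative Lebesgue--Stieltjes measure $H$ with $H([a,b)) = g(b) - g(a)$ for $0 \le a \le b \le 1$. Left-continuity is exactly what matches half-open intervals of the form $[a,b)$ to increments $g(b)-g(a)$. This measure is finite because $g(1)-g(0) < \infty$, and it is the natural candidate for the statement.

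Fix $q, y \in [0,1]$. The factor $(\one(y<q)-\tau)$ in \eqref{eq:elementary_loss} does not depend on $\theta$, so I pull it out of the integral. It remains to show
\[
\int_0^1 \bigl(\one(\theta<q)-\one(\theta<y)\bigr)\,dH(\theta) = g(q)-g(y).
\]
I split into cases.
\begin{itemize}
\item If $y \le q$, the integrand is $\one(y \le \theta < q)$, so the integral is $H([y,q)) = g(q)-g(y)$.
\item If $q < y$, the integrand is $-\one(q \le \theta < y)$, so the integral is $-H([q,y)) = -(g(y)-g(q)) = g(q)-g(y)$.
\item If $q = y$, both sides vanish.
\end{itemize}
Multiplying back by $(\one(y<q)-\tau)$ gives $\ell(q,y)$, which proves \eqref{eq:elem_score_q}. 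The case description in \eqref{eq:elementary_loss} also follows from this computation: for $y \le \theta < q$ the elementary loss equals $(1-\tau)\cdot 1$, for $q \le \theta < y$ it equals $(-\tau)\cdot(-1) = \tau$, and otherwise it is $0$.

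The main obstacle is measure-theoretic bookkeeping at the endpoints. I need to make sure $H$ is defined so that $H([a,b)) = g(b)-g(a)$ holds for all $0 \le a \le b \le 1$, including $a = 0$ and $b = 1$. To do this I would extend $g$ by constants outside $[0,1]$, set $H$ as the Stieltjes measure of that extension restricted to $[0,1]$, and check that no atom at $1$ affects the integral. An atom at $1$ could only be detected by $\theta = 1$, but the integrand vanishes there since $q, y \le 1$.
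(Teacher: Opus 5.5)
Your proposal is correct. The paper gives no proof of its own here (it imports the result from \cite{ehm2016binarydecomp}), and your direct Stieltjes computation $\int_0^1(\one(\theta<q)-\one(\theta<y))\,dg(\theta)=g(q)-g(y)$ is the standard argument, with left-continuity supplying $H([a,b))=g(b)-g(a)$. Your endpoint bookkeeping (extending $g$ by constants, so there is no effective atom at $\theta=1$) also covers the one consequence the paper later relies on, namely $g_n(1)-g_n(0)=H_n([0,1))$ in the proof of Lemma~\ref{lem:discretize_multi}.
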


It may be instructive to note that for any $\tau, \theta  \in [0,1]$ the elementary loss $\ell_{\tau,\theta}$ is itself proper.  To see this, fix any $\tau, \theta  \in (0,1)$\footnote{The cases $\tau \in \{0,1\}$ and $\theta \in \{0,1\}$ can be proven similarly and are omitted.} and note that by definition $\ell_{\tau,\theta}$ essentially only admits two forecasts: namely, we may either forecast a value $q > \theta$ or a value $q \leq \theta$. Which of these forecasts is optimal will depend on the sign of
\[
\mathbb{E}_{Y \sim P^y}[\ell_{\tau,\theta}(1,Y)] - \mathbb{E}_{Y \sim P^y}[\ell_{\tau,\theta}(0,Y)]  = (1-\tau) \mathbb{P}(y \leq \theta) - \tau \mathbb{P}(y > \theta), 
\] 
which is nonnegative (resp.~nonpositive) exactly when $\theta$ is below (resp.~above) the $\tau$-quantile of $P^y$. 

\paragraph{Decision-making interpretation of elementary losses.}

\renewcommand{\arraystretch}{1.2}
\begin{wraptable}{r}{0.29\textwidth}
\centering
\caption{Payoff table} \label{tab:payoff}
\begin{tabular}{|c|c|c|}
\hline
$s(a,o)$ & $o=1$ & $o=0$ \\
\hline
$a=1$ & $C$ & $C$ \\
\hline
$a=0$ & $L$ & $0$ \\
\hline
\end{tabular}
\end{wraptable}

Following the exposition in \cite{mills2025payoff}, one may also interpret $\ell_{\tau,\theta}$ in terms of a decision-making task in which one must decide whether to take a preventative action against an adverse event. More formally, let $C$ denote the cost of taking this action (e.g., expanding hospital capacity) and $L$ denote the price paid if the adverse event occurs (e.g., COVID-19 cases surge) and no preventative action was taken. Let $a = \one\{\text{action taken}\}$ and $o = \one\{\text{adverse event occurs}\}$ denote binary indicators for the action and outcome, respectively, and define a utility $s$ expressing the payoffs for these outcomes as in Table \ref{tab:payoff}. Finally, suppose the adverse event occurs exactly when some outcome $Y$ (e.g., COVID-19 cases) exceeds a given threshold $\theta$ (e.g., the hospital's current capacity). Then, if we take the preventative action if and only if the forecast quantile $q$ exceeds $\theta$, one can check~that 
\[
s\left(\indi{q > \theta}, \indi{Y > \theta}\right) - s(\indi{Y > \theta}, \indi{Y > \theta}) = L \cdot \ell_{1-C/L,\theta}(q,Y),
\]
and thus up to renormalization correctly forecasting the $(1-C/L)$-level quantile of $Y$ is equivalent to choosing the correct action.

To extend beyond single quantile forecasts we will use a result of \cite{fissler2016multiq} which shows that proper losses for vectors of quantiles decompose as sums of proper losses for single quantiles. 

\begin{theorem}[Simplified version of Proposition 4.2 in \cite{fissler2016multiq}] \label{thm:decomp_multi_q}
Fix any $0 < \tau_1 < \cdots < \tau_N < 1$ and let $\ell: [0,1]^N \times [0,1] \to \R_{\geq 0}$ be a proper loss for quantile levels $\{\tau_n\}_{n=1}^N$ that satisfies mild regularity conditions (See Appendix \ref{appssec:regularity}). Then, there exists losses $\ell_1,\dots,\ell_N :[0,1] \times [0,1] \to  \mathbb{R}_{\geq 0}$ such that $\ell_n$ is a proper loss for the $\tau_n$-level quantile and
\begin{align}\label{eq:ell_to_ell_n}
    \ell(q_1, \ldots, q_N, y) = \sum_{n=1}^{N} \ell_n(q_n, y).
\end{align}
\end{theorem}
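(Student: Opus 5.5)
The plan is to use first-order optimality conditions on the expected loss, applied to a rich family of test distributions, to show that $\ell$ has no interaction terms between different coordinates $q_n$ and $q_m$. The regularity conditions in Appendix~\ref{appssec:regularity} are assumed to give the following. First, $\ell$ is continuous. Second, $\ell(\cdot,y)$ is continuously differentiable (at least piecewise, away from $q_n=y$). Third, the expected loss $\bar\ell(q,P):=\E_{Y\sim P}[\ell(q,Y)]$ is differentiable in $q$ whenever $P$ has a density. Given these, properness forces $\partial_{q_n}\bar\ell(q,P)=0$ at $q=q^*(P):=(q^*_{\tau_n}(P))_{n=1}^N$ for every $n$ and every $P$ with a positive density on $[0,1]$.

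\textbf{Step 1 (identification functions).} Let $V_n(q_n,y)=\one(y<q_n)-\tau_n$. Following the Osband-type argument used in \cite{fissler2016multiq}, I would show there exist functions $h_{nm}$ with
\[
\partial_{q_n}\bar\ell(q,P)=\sum_{m}h_{nm}(q)\,\E_P[V_m(q_m,Y)]
\]
for all $q$ in the open set $\{q_1<\dots<q_N\}$. The idea is as follows. The map $P\mapsto\partial_{q_n}\bar\ell(q,P)$ is linear in $P$. It vanishes on every $P$ where all $\E_P[V_m(q_m,Y)]=0$, that is, on every $P$ for which $q$ is its quantile vector. A linear functional that vanishes on the common kernel of the $N$ linear functionals $P\mapsto\E_P V_m(q_m,Y)$ must be a linear combination of them. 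This uses a convexity/spanning argument over a sufficiently rich convex class of densities, which is the standard lemma in \cite{fissler2016multiq}.

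\textbf{Step 2 (diagonality).} Next I would show that $h_{nm}=0$ for $n\ne m$. The key observation is that perturbing $P$ so that only $\E_P V_m$ changes, while $q^*_n$ stays fixed, must not move the optimum in coordinate $n$. More concretely, the second-order condition (a local minimum at $q^*$ for all $P$) together with the symmetry of mixed partials $\partial_{q_m}\partial_{q_n}\bar\ell=\partial_{q_n}\partial_{q_m}\bar\ell$ gives two facts:
\begin{itemize}
\item evaluated at $q=q^*(P)$, it yields $h_{nm}(q)f_P(q_m)=h_{mn}(q)f_P(q_n)$;
\item since the density values $f_P(q_m)$ and $f_P(q_n)$ can be chosen independently, this forces $h_{nm}=h_{mn}=0$ for $n\neq m$.
\end{itemize}
Here $f_P$ denotes the density of $P$. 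Hence $\partial_{q_n}\bar\ell(q,P)=h_{nn}(q)\,\E_P V_n(q_n,Y)$. Applying the same mixed-partial symmetry argument once more shows that $h_{nn}$ depends only on $q_n$. Call it $g_n'(q_n)$; minimality implies $g_n'\ge 0$.

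\textbf{Step 3 (integration).} Integrating back in $q$ and then taking $P=\delta_y$ (a limit of densities, justified by continuity (L1)) yields
\[
\ell(q,y)=\sum_n(\one(y<q_n)-\tau_n)(g_n(q_n)-g_n(y))+a(y).
\]
Using (L0) at $q=(y,\dots,y)$ forces $a\equiv0$. Each summand is then of the form \eqref{eq:q_consistent_score}, and by the Characterization it is a nonnegative proper loss $\ell_n$ for level $\tau_n$. This gives \eqref{eq:ell_to_ell_n}.

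\textbf{Main obstacle.} The hard part will be Step 1. One must ensure the class of test distributions is rich enough for the linear-span argument, and handle differentiability of $\bar\ell$ across the diagonal $q_n=y$. I would first try restricting to distributions with continuous positive densities and quantile vectors in the interior of the ordered simplex, and then extend to all of $[0,1]^N$ by continuity.
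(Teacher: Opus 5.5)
Your three steps (Osband's principle for $V_n(q_n,y)=\one(y<q_n)-\tau_n$, diagonalizing $h$ via symmetry of the Hessian of $\bar\ell$ at $q^*(P)$, then integrating back to a pointwise identity) are the argument behind Proposition 4.2 of \cite{fissler2016multiq}. The paper only cites that result and gives no proof, so your route is the same. The genuine problem is your final move, ``extend to all of $[0,1]^N$ by continuity.''

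\textbf{Why the extension fails.} Every identity you derive holds only at points $q$ that are the quantile vector of some test distribution, i.e.\ at strictly ordered $q$. Continuity carries the formula to the closure $\{q_1\le\cdots\le q_N\}$ and no further. At a point with $q_n>q_{n+1}$ the first-order machinery is vacuous, because
\[
\E_P V_n(q_n,Y)-\E_P V_{n+1}(q_{n+1},Y)=F_P(q_n)-F_P(q_{n+1})+\tau_{n+1}-\tau_n\ge\tau_{n+1}-\tau_n>0 .
\]
So the richness condition (V1) fails there, and properness imposes only the inequality $\bar\ell(q,P)\ge\bar\ell(q^*(P),P)$.

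\textbf{The conclusion is false off the ordered cone.} For $N=2$ take
\[
\ell(q,y)=\rho_{\tau_1}(y-q_1)+\rho_{\tau_2}(y-q_2)+\bigl(\max\{q_1-q_2,0\}\bigr)^3 .
\]
This loss is nonnegative and continuous, and it satisfies (L0), (VS1) and (S2). It is proper, since the penalty vanishes at every ordered $q$, in particular at the true quantile vector. Yet it is not of the form $\ell_1(q_1,y)+\ell_2(q_2,y)$. For fixed $y$ the penalty would then have to be additively separable in $(q_1,q_2)$, but its rectangle difference over $q_1\in\{0,1\}$, $q_2\in\{0,\tfrac12\}$ equals $1-\tfrac18\neq0$.

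So what your argument actually proves is the decomposition for $q\in\{q_1\le\cdots\le q_N\}$. That is also all the cited Osband-type result can deliver, since it describes $\ell$ only on the image of the quantile-vector functional. To get a statement on all of $[0,1]^N$ you must either restrict the domain of $\ell$ to that cone or add a hypothesis. This caveat applies equally to the theorem as stated in the paper.

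Two smaller points:
\begin{itemize}
\item (L0) is not among this theorem's hypotheses, but you do not need it. Since $a(y)=\ell((y,\dots,y),y)\ge0$, you can absorb $a$ into $\ell_1$ without losing nonnegativity or properness.
\item The second use of Hessian symmetry, to show $\partial_{q_m}h_{nn}=0$, cannot be made at $q^*(P)$: once $h$ is diagonal, both mixed partials vanish there identically. It has to be made at non-optimal $q$ where $\E_P V_n\neq0$. There (S2) gives only a locally Lipschitz gradient, so this is an a.e.\ or loop-integral argument, not a repeat of the earlier computation.
\end{itemize}
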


Now, define the following two classes of loss functions $\ell: [0,1]^N \times [0,1] \to \R_{\geq 0}$:
\begin{align}\label{def:cl}
\cl := \left\{\ell \!=\! \sum_{n=1}^N \ell_n \,\middle|\, \ell_n(q_n,y) \!=\! \big(\one(y\!<\!q_n) \!-\! \tau_n\big) \big(g_n(q_n) \!-\! g_n(y)\big) \right\}, \,\, \cl_1 := \left\{\ell \in \cl \mid \|\ell\|_{\infty} \leq 1\right\} ,
\end{align}
where each $g_n$ is non-decreasing and left-continuous. Putting the results above together, any loss function $\ell \in \cl$ can be expressed as $\ell(q_1, \ldots, q_N, y) = \sum_{n=1}^{N} \int_{-\infty}^{\infty} \ell_{\tau_n, \theta}(q_n, y) \, dH_{n}(\theta)$ for some non-negative mixing measures $H_n$ and is a proper loss for quantile levels $\{\tau_n\}_{n=1}^N$. Moreover, this representation is \textbf{exhaustive}: $\cl$ contains \textbf{all} proper losses for quantile levels $\{\tau_n\}_{n=1}^N$ that satisfies mild regularity conditions. Finally, we add the additional restriction in $\mathcal{L}_1$ that $\ell$ is bounded because this will give a common scale to our definition of omniprediction and will ensure that the mixing distributions have bounded total measure.
\section{Main theoretical results} \label{sec:theory}

We now turn to our main theoretical results which develop an algorithm for constructing quantile forecasts that achieves low omniprediction error with respect to the loss class $\mathcal{L}_1$.

\subsection{Discretization} \label{nssec:discretization}

Our first step towards deriving our main method will be to reduce from optimization over all losses in $\mathcal{L}_1$ to optimization over a finite class. To accomplish this, we will combine the characterizations of proper losses from Section~\ref{sec:preliminaries} with a discretization lemma that reduces the number of elementary losses that we need to consider to a finite set. For simplicity, we will assume that the outcomes $Y_t$ and base forecasts $f_b^n(t)$ lie in the discrete grid $\{0, \frac{1}{m}, \ldots, 1\}$, for some large fixed value $m \in \mathbb{N}$. This assumption is relatively minor and is made primarily for technical convenience to avoid edge cases in which the elementary loss $\ell_{\tau,\theta}$ varies significantly at nearby values of $\theta$. In our experiments we will enforce these conditions explicitly by rounding the base forecasters and target outcomes to an appropriate grid.  More broadly, our results can be readily extended to other sets of assumptions. For example, if $Y_t$ is continuously distributed with a bounded density the same discretization lemma holds up to an extra additive error of size $O(1/m)$. 

Now, let $\theta_i = \frac{i}{m} - \frac{1}{2m}$ for $i \in [m]$. The following lemma shows that to control the omniprediction error over the class $\cl_1$, it suffices to control the error over the finite collection of elementary losses $\{\ell_{\tau_n,\theta_i}\}_{i \in [m], n \in [N]}$. This  discretization scheme  is visualized in Figure \ref{fig:discretization}; and a proof of this result can be found in Appendix~\ref{pf:discretize_multi}.

\begin{figure}[t] 
\centering
\begin{tikzpicture}[scale=0.9]
  \draw[-] (0,0) -- (15,0);
  \foreach \x/\name in {0/0, 3/$\frac{1}{m}$, 6/$\frac{2}{m}$, 15/1} {
    \draw (\x,0.1) -- (\x,-0.1) node[below=4pt] {\name};
    \filldraw[black] (\x,0) circle (1pt);
  }

  \foreach \x/\name in {1.5/$\theta_1\!=\!\frac{1}{2m}$, 4.5/$\theta_2\!=\!\frac{1}{2m}\!+\!\frac{1}{m}$, 7.5/$\theta_2\!=\!\frac{1}{2m}\!+\!\frac{2}{m}$, 13.5/$\theta_{m}\!=\!\frac{1}{2m}\!+\!\frac{m\!-\!1}{m}$} {
    \draw (\x,0.1) -- (\x,-0.1) node[above=4pt] {\name};
    \filldraw[black] (\x,0) circle (1pt);
  }
  \draw (10, 0) node[above=4pt] {$\cdots$};
  \draw (10, 0) node[below=4pt] {$\cdots$};

\end{tikzpicture}
\caption{Visualization of the discretization on the real line. Outcomes $Y_t$ and forecasts $f_b^n(t)$ take values in the grid $\{0, 1/m, ..., 1\}$ that is interleaved by the values of $\theta_i$.} \label{fig:discretization}
\end{figure}

\begin{lemma}\label{lem:discretize_multi}
Let $\cl_1$ denote the class of bounded proper losses defined in \eqref{def:cl} and suppose $Y_t, q_n(t), f^n_b(t) \in \{0, 1/m, ..., 1\}$ for all $t \in [T]$, $n \in[N]$, and $b \in [B]$. Then, 
\[
1 \leq 
\frac{\sup_{\ell \in \cl_1, b} \frac{1}{T}\sum_{t=1}^T \left[\ell(q(t),Y_t) - \ell(f_b(t),Y_t)\right]}
{\sup_{i,n,b} \frac{1}{T}\sum_{t=1}^T \left[\ell_{\tau_n, \theta_i}(q_n(t),Y_t) - \ell_{\tau_n, \theta_i}(f_b^n(t),Y_t)\right]}
\leq 2.
\]
\end{lemma}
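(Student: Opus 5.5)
The plan is to prove two inequalities: the numerator of the ratio is at least the denominator, and at most twice the denominator. Write $R_{n,i} := \frac{1}{T}\sum_{t=1}^T [\ell_{\tau_n,\theta_i}(q_n(t),Y_t) - \ell_{\tau_n,\theta_i}(f_b^n(t),Y_t)]$ for a fixed $b$. The denominator is $\sup_{i,n,b} R_{n,i}$. As the ratio is meaningful only when the denominator is positive, I will assume it is. The upper-bound argument below shows that if the denominator is $\le 0$, then the numerator is $\le 0$ as well.

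\emph{Lower bound.} Each elementary loss, applied to a single coordinate, already lies in $\cl_1$. Take $g_n(x) = \one(\theta_i < x)$, which is non-decreasing and left-continuous, and take $g_{n'} \equiv 0$ for $n' \neq n$. The resulting loss in \eqref{def:cl} is $\ell(q,y) = \ell_{\tau_n,\theta_i}(q_n,y)$. By \eqref{eq:elementary_loss} its sup-norm is $\max(\tau_n, 1-\tau_n) \le 1$. So every term in the denominator's supremum also appears in the numerator's supremum, and the numerator is at least the denominator.

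\emph{Upper bound, step 1: reduce to a finite mixture.} Fix $\ell \in \cl_1$. By Theorems~\ref{thm:decomp_single_q} and \ref{thm:decomp_multi_q}, we can write $\ell(q,y) = \sum_n \int \ell_{\tau_n,\theta}(q_n,y)\,dH_n(\theta)$ with $H_n \ge 0$. Since $q_n$ and $y$ lie on the grid $\{0,1/m,\dots,1\}$, the map $\theta \mapsto \one(\theta<q_n) - \one(\theta<y)$ is constant on each cell $[(i-1)/m, i/m)$, where it equals its value at $\theta_i$. Outside $[0,1)$ this map vanishes, because both indicators agree there. Hence on grid inputs
\[
\ell(q,y) = \sum_{n=1}^N \sum_{i=1}^m w_{n,i}\, \ell_{\tau_n,\theta_i}(q_n,y), \qquad w_{n,i} := H_n\big([(i-1)/m, i/m)\big) \ge 0 .
\]
Consequently the regret of $\ell$ against $f_b$ equals $\sum_{n,i} w_{n,i} R_{n,i}$. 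This is at most $\big(\sum_{n,i} w_{n,i}\big)\max_{n,i} R_{n,i}$, using that the maximum is nonnegative; if it is not, the regret is nonpositive and the claim is trivial.

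\emph{Upper bound, step 2: bound the total weight.} This is the only real step: we must show $\sum_{n,i} w_{n,i} \le 2$ from $\|\ell\|_\infty \le 1$. Let $W_n := \sum_i w_{n,i}$. First evaluate at $y=0$ and $q_n = 1$ for all $n$. Every $\theta_i$ then satisfies $y \le \theta_i < q_n$, so $\ell = \sum_n (1-\tau_n) W_n \le 1$. Next evaluate at $y=1$ and $q_n = 0$. Now $q_n \le \theta_i < y$, giving $\sum_n \tau_n W_n \le 1$. Adding the two inequalities gives $\sum_n W_n \le 2$. Taking the supremum over $\ell$ and $b$ then yields the factor $2$.

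The one delicate point is the cell-constancy argument in step 1, specifically the boundary behaviour at grid points. It works because the indicator $\one(\theta<q)$ is strict and the $\theta_i$ sit at midpoints between consecutive grid values.
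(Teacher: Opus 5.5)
Your proposal is correct and follows the paper's own route. The paper likewise collapses each $H_n$ onto the cells $[(i-1)/m,i/m)$ using grid-constancy of the elementary losses, bounds $\sum_n H_n([0,1))\le 2$ by evaluating $\ell$ at $(1,\dots,1,0)$ and $(0,\dots,0,1)$, and gets the lower bound from $\ell_{\tau_n,\theta_i}\in\cl_1$; your explicit handling of a nonpositive denominator, where the stated ratio degenerates, is extra care that the paper omits.
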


Results analogous to Lemma \ref{lem:discretize_multi} have been used in many places in the omniprediction problem to simplify an infinite class of losses to a more tractable finite set. Most similar to our work,  \cite{isaac2025omni} gives a similar discretization result for binary outcomes $Y \in \{0,1\}$ and the class of bounded proper losses for the probability that $Y=1$. In all cases, the key conclusion is that in order to achieve low omniprediction error it sufficies to work with the discretized collection of losses. In particular, in our problem it suffices to minimize the objective
\begin{align} \label{eq:omni_objective_discretized}
\sup_{i \in [m], n \in [N], b\in[B]} \frac{1}{T}\sum_{t=1}^T \left[\ell_{\tau_n, \theta_i}(q_n(t),Y_t) - \ell_{\tau_n, \theta_i}(f_b^n(t),Y_t)\right].
\end{align}
This will be the focus of the next section.

\subsection{Algorithm}\label{nssec:algorithm}

We now introduce our main method as Algorithm~\ref{alg:omni_online}. 

To accomplish this, we will apply an algorithmic framework for multiobjective learning developed in \cite{lee2022online} and popularized in a number of recent works on omniprediction and related problems \citep[e.g.,][]{Noarov2025, garg2024oracle,lu2025omni_nonlinear,  isaac2025omni}. The core idea proposed by \cite{lee2022online} is to consider a two-player game in which one player constructs a mixture of the elementary loss functions $\ell_{\tau_n, \theta_i}$ and the other player best responds with a (randomized) predictor that minimizes the worst-case loss gap on the mixture. Following that work, in Algorithm~\ref{alg:omni_online}, this is implemented by having the first player use the well-known Hedge algorithm \citep{Vovk1990, Littlestone1994, Freund1997} to learn weights $w_{ib}^n(t)$ over the quantiles levels $\tau_n$, parameters $\theta_i$, and baselines $f^n_b$ appearing in (\ref{eq:omni_objective_discretized}), so that the time-average weighted mixture loss, 
\[
 \frac{1}{T} \sum_{t=1}^T \sum_{i,n,b} w_{ib}^n(t) \left[\ell_{\tau_n, \theta_i}(q_n(t),Y_t) - \ell_{\tau_n, \theta_i}(f_b^n(t),Y_t)\right],
\]
is guaranteed to be close to the supremum in (\ref{eq:omni_objective_discretized}). Then, letting $\Delta_m$ denote the set of distributions over $[m]$, the second player constructs their (randomized) forecast as $\mathbf{q}(t) = (q_n(t))_{n=1}^N \sim \hat{\mathbf{P}}(t)$, where $\hat{\mathbf{P}}(t) := (\hat{P}_n(t))_{n=1}^N$ solves the minmax program,
\begin{align}\label{eq:minimax}
  \hat{\mathbf{P}}(t) = \argmin_{P_1,...P_N \in \Delta_m} \max_{P^y \in \Delta_m} \sum_{i=1}^m \sumn \sum_{b=1}^B w_{ib}^n\E_{q_n\sim P_n, Y \sim P^y}[\ell_{\tau_n, \theta_i}(q_n,Y) - \ell_{\tau_n, \theta_i}(f_b^n(t),Y)].
  \end{align}
  By definition, this forecast minimizes the mixture loss over the worst-case (and potentially adversarially chosen) value of $Y_t$.

The following theorem shows that Algorithm \ref{alg:omni_online} achieves our desired omniprediction guarantee; and the proof of this result can be found in Appendix \ref{pf:omni_guarantee}.
  
\begin{theorem}\label{thm:omni_guarantee}
Suppose target values $Y_t$, forecaster predictions $f_b^n(t)$ for $t \in [T], n \in [N], b \in [B]$ each takes values in a uniform grid of size $m$. With learning rate $\eta = c\sqrt{(\log mNB)/T}$, the predictions $\Pbhat(t)=(\hat{P}_n(t))_{n=1}^N$ from Algorithm~\ref{alg:omni_online} achieve the omniprediction error bound
\begin{align*}
\sup_{\ell \in \cl_1,\, b \in [B]} \frac{1}{T} \sum_{t=1}^T \E_{q_n \sim \Phat_n(t)} \left[\ell\left((q_n)_{n=1}^N, Y_t\right) - \ell(f_b(t), Y_t)\right]
\leq 2\left(c+\frac{1}{c}\right)\left(\sqrt{\frac{\log mNB}{T}}\right).
\end{align*}
\end{theorem}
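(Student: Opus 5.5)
The proof follows the standard reduction of \cite{lee2022online}. First, Lemma~\ref{lem:discretize_multi} replaces the supremum over $\cl_1$ by the supremum over the finitely many elementary losses, at the cost of a factor $2$. This accounts for the leading constant $2$ in the bound. One caveat is that Lemma~\ref{lem:discretize_multi} is stated for deterministic $q_n(t)$ on the grid, whereas here we have an expectation over $\Phat_n(t)$ supported on the grid. Linearity of the expectation in the loss handles this. Because $\Phat_n(t) \in \Delta_m$, every realized $q_n$ lies on the grid. The expected gap for $\ell = \sum_n \int \ell_{\tau_n,\theta}\,dH_n$ is therefore an integral of expected elementary gaps. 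The discretization argument then goes through pointwise in $\theta$: on each grid cell the elementary loss at $\theta$ agrees with $\ell_{\tau_n,\theta_i}$, and the bound $\|\ell\|_\infty \le 1$ controls the total mixing mass. It therefore suffices to show that $\max_{i,n,b} \frac1T\sum_t \E_{q_n\sim\Phat_n(t)}[\ell_{\tau_n,\theta_i}(q_n,Y_t)-\ell_{\tau_n,\theta_i}(f_b^n(t),Y_t)] \le (c+1/c)\sqrt{\log(mNB)/T}$.

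Next comes the game-theoretic step. Write $r_{inb}(t)$ for the expected elementary gap at time $t$. These gaps lie in $[-1,1]$, since elementary losses take values in $[0,1]$. Hedge runs on the $mNB$ experts with gains $r_{inb}(t)$. Its standard regret bound gives
\[
\max_{i,n,b}\sum_t r_{inb}(t) \le \sum_t \sum_{i,n,b} w^n_{ib}(t) r_{inb}(t) + \frac{\log(mNB)}{\eta} + \eta T .
\]
The exact constant depends on the version of Hedge and on whether the gain range is normalized. With $\eta = c\sqrt{\log(mNB)/T}$, the last two terms give $(1/c + c)\sqrt{T\log(mNB)}$, which after dividing by $T$ is the claimed rate.

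It remains to show that the weighted mixture term $\sum_{i,n,b} w^n_{ib}(t) r_{inb}(t)$ is $\le 0$ for every $t$, whatever $Y_t$ turns out to be. By the definition of $\Pbhat(t)$ in \eqref{eq:minimax}, this term is at most the minimax value, since $\delta_{Y_t} \in \Delta_m$ is one of the adversary's choices. Von Neumann's minimax theorem applies because the objective is bilinear in $(\mathbf{P},P^y)$ over products of simplices. It lets us swap the order and bound the value by $\max_{P^y}\min_{\mathbf P}$. For fixed $P^y$, the objective separates across $n$. For each $n$ we then choose $P_n = \delta_{q^*}$, where $q^*$ is the grid point that is the $\tau_n$-quantile of $P^y$; this lies on the grid because $P^y$ is supported there. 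Every elementary loss $\ell_{\tau_n,\theta_i}$ is proper, as verified after Theorem~\ref{thm:decomp_single_q}. Hence $\E_{P^y}\ell_{\tau_n,\theta_i}(q^*,Y) \le \E_{P^y}\ell_{\tau_n,\theta_i}(f_b^n(t),Y)$ for every $i,b$. Since the weights are nonnegative, the mixture is $\le 0$.

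I expect the main obstacle to be this last step. The properness argument must work simultaneously for all $\theta_i$ with a single forecast, which it does because the true quantile is a common minimizer for all of them. The points needing care are the grid and left-continuity conventions in the quantile definition, and the condition that the minimizer lies in the forecast grid.
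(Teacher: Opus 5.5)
Your proof is correct. The outer skeleton matches the paper's: Lemma~\ref{lem:discretize_multi} supplies the factor $2$, the Hedge regret bound supplies the $\eta + \log(mNB)/(T\eta)$ term, and the value of \eqref{eq:minimax} is nonpositive at every round. You establish that last step by a genuinely different route.

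The paper's route is constructive. Lemma~\ref{lem:jn_sort} and Propositions~\ref{prop:Pn_def}--\ref{prop:Pn_is_minimax_soln} exhibit an explicit saddle point $((P_n^*)_n, P^{y*})$ with $P^{y*}=\tau_1\delta_0+(1-\tau_N)\delta_1+\sum_{n}(\tau_{n+1}-\tau_n)\delta_{j_n^*/m}$. They compute the value $-\sum_{n=1}^{N-1}(\tau_{n+1}-\tau_n)V_n^*\le 0$, using properness only at that particular $P^{y*}$.

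Your route is non-constructive. For every $P^y$, the tuple of $\tau_n$-quantiles of $P^y$ makes every elementary gap nonpositive, so $\max_{P^y}\min_{\mathbf P}\le 0$. The minimax theorem for the bi-affine objective on $\Delta_m^N\times\Delta_m$ then transfers this bound to $\min_{\mathbf P}\max_{P^y}$.

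The caveat you flagged resolves cleanly. $\E_{P^y}\ell_{\tau,\theta}(q,Y)$ equals $(1-\tau)F(\theta)$ for $q>\theta$ and $\tau(1-F(\theta))$ for $q\le\theta$. Hence the lower quantile $\inf\{q:F(q)\ge\tau\}$ lies on the optimal side of every $\theta$, by right-continuity of $F$. When $P^y$ is supported on the grid, this quantile is itself a grid point. The paper's displayed definition with $\P(Y\ge q)$ would always return $0$, so the usual $\P(Y\le q)\ge\tau$ is what is meant.

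Your argument is shorter, needs none of the sorting machinery, and uses nothing about the elementary losses beyond properness. The paper's construction costs more work but also delivers three things: the closed form behind the $O(mNB)$ subroutine, the structural properties of Theorem~\ref{thm:Pn_property} (two-point support, non-crossing, range), and the exact game value, which can be strictly negative.
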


\begin{algorithm}[htbp]
  \caption{Omniprediction under all proper losses for multiple quantile levels} \label{alg:omni_online}
  \begin{algorithmic}[1]
  \State \textbf{Data:} Data stream $(Y_t)_{t=1}^T$, quantile levels $\{\tau_n\}_{n=1}^N$, grid size $m \!\in\! \mathbb{N}$, learning rate $\eta\!>\! 0$, base forecasters $\cf = \{f_b\}_{b=1}^{B}$.
  \State \textbf{Initialize:} $w_{ib}^n(1) = \frac{1}{mnB}$, for all $i \in [m], n \in [N], b \in [B]$;
  \For{$t = 1, \ldots, T$}
  \State \textbf{1) Make prediction at time $t$}:
  \begin{align}\label{eq:Pn_def_minimax}
  \Pbhat(t) = \argmin_{P_1,...P_N \in \Delta_m} \max_{P^y \in \Delta_m} \sum_{i=1}^m \sumn \sum_{b=1}^B w_{ib}^n\E_{q_n\sim P_n, Y \sim P^y}[\ell_{\tau_n, \theta_i}(q_n,Y) - \ell_{\tau_n, \theta_i}(f_b^n(t),Y)];
  \end{align}
  \State \textbf{2) Update the weights:} For all $i,n,b$,
  \begin{align}\label{eq:w_update}
  w_{ib}^n(t+1) = \frac{w_{ib}^n(t) \exp\bigl(\eta \big(\mathbb{E}_{q_n \sim \hat{P}_n(t)}[\ell_{\tau_n, \theta_i}(q_n, Y_t)] 
  - \ell_{\tau_n, \theta_i}(f_b^n(t), Y_t)\big)\bigr)}
  {\sum_{i',n',b'} w_{i'b'}^{n'}(t) \exp\bigl(\eta \big(\mathbb{E}_{q_{n'} \sim \hat{P}_{n'}(t)}[\ell_{\tau_{n'}, \theta_{i'}}(q_{n'}, Y_t)] 
  - \ell_{\tau_{n'}, \theta_{i'}}(f_{b'}^{n'}(t), Y_t)\big)\bigr)}.
  \end{align}
  \EndFor
  \end{algorithmic}
\end{algorithm}

A critical issue when implementing Algorithm \ref{alg:omni_online} is solving the minmax program (\ref{eq:minimax}) in an efficient manner. Our detailed procedure for accomplishing this is given in Appendix \ref{appsec:fast}. 
As a summary of those results, the following theorem states the key structural properties of our computed solution. 
Briefly, the first property states that although $\Pbhat(t)$ is randomized, this randomization is supported on only two nearby values; the second property states that the quantile forecasts produced by $\Pbhat(t)$ follow the anticipated order $q_1(t) \leq \cdots \leq q_N(t)$; and the third property gives a bound on the range of $\Pbhat(t)$'s support. We provide more details and the proof of this theorem in Appendix \ref{pf:Pn_property}. 

\begin{theorem}\label{thm:Pn_property} 
There exists a solution for $\Pbhat(t) = (\hat{P}_n(t))_{n=1}^N$ in Algorithm~\ref{alg:omni_online} such that:
\begin{enumerate}
    \item For all $n \in [N]$, $\hat{P}_n(t)$ is supported on a maximum of two values.
    \item There exists a coupling of $(\Phat_1(t),..., \Phat_n(t))$ such that any sampled value $\mathbf{q}(t) \sim \Pbhat(t)$ is ordered as $q_1(t) \leq \cdots \leq q_N(t)$. 
    \item For all $n \in [N]$, $\Phat_n(t)$ is supported on  $[\min_{n \in [N], b \in [B]} f_b^n(t), \,\, \max_{n \in [N], b \in [B]} f_b^n(t)]$. 
\end{enumerate}
\end{theorem}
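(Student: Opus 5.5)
The plan is to reduce the minmax program \eqref{eq:Pn_def_minimax} to a finite bilinear zero-sum game and then exploit the one-dimensional threshold structure of the elementary losses. Set $W_i^n := \sum_b w_{ib}^n(t)$. The baseline term $\sum_{i,n,b} w_{ib}^n \E_{Y\sim P^y}[\ell_{\tau_n,\theta_i}(f_b^n(t),Y)]$ does not depend on $(P_n)$ and is linear in $P^y$. The objective is bilinear in $((P_n)_n, P^y)$ over products of simplices, so von Neumann's minimax theorem gives an equilibrium $(\hat{\mathbf P}, P^{y*})$, and we may choose among equilibria.

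\emph{Best responses are threshold-type.} Write $F(\theta)=\P_{P^y}(Y\le\theta)$. Using \eqref{eq:elementary_loss}, the $n$-th block of the objective for a point forecast $q_n=k/m$ against $P^y$ is
\[
C_n(k)=\sum_{i\le k} W_i^n(1-\tau_n)F(\theta_i)+\sum_{i>k}W_i^n\,\tau_n\bigl(1-F(\theta_i)\bigr).
\]
Its increments are $C_n(k)-C_n(k-1)=W_k^n\bigl(F(\theta_k)-\tau_n\bigr)$, which change sign only once because $F$ is nondecreasing. Hence $C_n$ is unimodal in $k$, and its minimizers form an interval around the grid point where $F$ crosses $\tau_n$. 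Since $\tau_1<\dots<\tau_N$, these crossing intervals are ordered in $n$.

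\emph{Constructing the two-point, ordered solution.} Against $P^{y*}$, every $\hat P_n$ must be supported on the minimizers of $C_n$. I will then construct an equilibrium $\hat P_n$ explicitly. Since the payoff against a point mass $Y=y$ depends on $P_n$ only through the CDF values $G_n(\theta_i)=P_n(q_n\le\theta_i)$, the maximizer's indifference conditions on the support of $P^{y*}$ can be written as linear conditions on the $G_n(\theta_k)$. Comparing $y$ just below and just above $\theta_k$ gives conditions of the form
\[
\sum_n W_k^n\bigl[\tau_n G_n(\theta_k)-(1-\tau_n)(1-G_n(\theta_k))\bigr]=\text{(baseline difference)} .
\]
The aim is to show that these conditions can be met with each $G_n$ jumping at no more than two adjacent grid points $k_n/m,(k_n+1)/m$ inside the best-response interval, which gives property 1. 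Ordering (property 2) then follows from the ordered crossing intervals: couple all $\hat P_n$ through a single uniform $U$ via $q_n=G_n^{-1}(U)$. Because the supports are ordered, and when two consecutive levels share a crossing interval their mixing weights are monotone in $\tau_n$, we get $q_1\le\dots\le q_N$ almost surely.

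\emph{Range property.} For property 3, take $\theta_i$ outside $[\min f,\max f]$, with $\min,\max$ over $n,b$. I will argue that replacing $\hat P_n$ by its clipping onto this interval does not increase the worst case over $y$. The key observation is that a deviation of $q_n$ past all baselines is exactly offset in the maximizer's choice: for such $\theta_i$ the baseline side is the same for every $b$. Any gain from $y$ on one side is therefore dominated by the adversary choosing $y$ on the other side. So a point $y$ that was worst-case before clipping stays at least as bad, and no new worse point appears. Equivalently, the equilibrium $P^{y*}$ can be taken supported so that $F$ crosses every $\tau_n$ inside this range.

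I expect the main obstacle to be property 1, i.e.\ proving that the indifference system admits a solution in which each $G_n$ has at most two jumps. The plan is first to try a constructive argument: the fast subroutine of Appendix~\ref{appsec:fast} suggests solving for a single jump-splitting fraction per $n$, which I would find by a monotone (bisection-type) search on the crossing index. If that fails, the fallback is an extreme-point argument showing that the set of optimal $(P_n)$ is a polytope whose vertices have this sparse form.
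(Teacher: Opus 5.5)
Your reduction to a finite bilinear game and your best-response computation are correct, and they already give Property 2. Since $C_n(k)-C_n(k-1)=W_k^n(F(\theta_k)-\tau_n)$ with $W_k^n>0$, every optimal $\hat P_n$ is supported on the grid interval $I_n=\arg\min_k C_n(k)$ computed against an optimal $P^{y*}$. For $n<n'$ one checks $\max I_n\le \min I_{n'}$, so consecutive supports share at most one point and the ordering holds under any coupling; the clause about monotone mixing weights is not needed.

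The genuine gap is Property 1, which is the heart of the statement and which your proposal leaves as a plan. Best response only confines $\hat P_n$ to $I_n$, on which $C_n$ is flat, and $I_n$ can contain many grid points. You never solve the indifference system. As written it is also not the right system: equalities hold only for the telescoped sums between consecutive atoms of $P^{y*}$, with inequalities elsewhere. Your fallback is false as stated. Counting tight constraints at a vertex bounds only the \emph{total} number of atoms (by $N$ plus the number of tight adversary constraints), not the support of each $\hat P_n$. Concretely, take $N=1$, $m=3$, baselines $f_1(t)=0$, $f_2(t)=2/3$, and weights (up to a common constant) $(w_{i1},w_{i2})=(0.3,0.7),(0.7,0.3),(0.5,0.5)$ for $i=1,2,3$. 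The set of optimal $\hat P_1$ is then the segment between $\delta_{1/3}$ and $0.3\delta_0+0.4\delta_{1/3}+0.3\delta_{2/3}$, so it has a three-atom vertex.

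The missing idea is a uniform comparison that makes Property 1 easy. Let $S_n(i)=\hat P_n(q_n>\theta_i)$ and $M_n=\sum_i W_i^n S_n(i)$. At $Y=j/m$, the $n$-th block of the objective equals $\sum_{i>j}W_i^nS_n(i)-\tau_nM_n+\tau_n\sum_{i\le j}W_i^n$, minus a term free of $\hat P_n$. Among all laws with the same $M_n$, the left-filled one, $S_n=(1,\dots,1,c,0,\dots,0)$, i.e.\ $(1-c)\delta_{k/m}+c\delta_{(k+1)/m}$, minimizes every tail sum $\sum_{i>j}W_i^nS_n(i)$ simultaneously. This is because $\sum_{i\le j}W_i^nS_n(i)\le\min\{M_n,\sum_{i\le j}W_i^n\}$, with equality for the left-filled law. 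Replacing each $\hat P_n$ of an optimal solution by its left-filled version therefore lowers the objective at every $y$ and keeps it optimal, which gives Property 1. Property 2 then follows from your best-response argument.

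Property 3 follows from your clipping step, which is correct but for a more precise reason than the one you give. After clipping to $[\min f,\max f]$, for $y$ beyond $\max f$ every forecast and every baseline lies below $y$. Moving $y$ further therefore adds the same $\tau_n$ per newly crossed $\theta_i$ to $\ell(q_n,y)$ and to each $\ell(f_b^n,y)$, so the objective is constant there; the case below $\min f$ is symmetric. For $y$ inside the range, clipping lowers each term. Clipping also preserves two adjacent atoms and the ordering.

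This route is shorter than the paper's. The paper instead constructs the solution explicitly from sorted minimizers $j_n^*$ of the potentials $V_n$ (Lemma~\ref{lem:jn_sort}), using two-point laws on $[j_{n-1}^*/m,j_n^*/m]$ carrying weighted mass $V_{n-1}^*-V_n^*$. It then verifies a saddle point against $P^{y*}=\tau_1\delta_0+(1-\tau_N)\delta_1+\sum_{n}(\tau_{n+1}-\tau_n)\delta_{j_n^*/m}$. What that explicit form buys is the closed-form value and the $O(mNB)$ algorithm.
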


Theorem \ref{thm:Pn_property} demonstrates that $\Pbhat(t)$ produces sensible quantile forecasts with minimal randomization. 

Our second key result shows that $\Pbhat(t)$ can be computed in a highly efficient manner. This is crucial because although the minimax program in (\ref{eq:minimax}) is convex-concave, it involves optimization over a high-dimensional parameter space consisting of $N$ distributions on $[m]$, making standard solution methods computationally intractable. As above, our detailed method for computing $\Pbhat(t)$ is deferred to Appendix \ref{appsec:fast}, {where we also provide an empirical runtime analysis that confirms the linear scaling in $m$, $N$, and $B$.}

\begin{proposition}\label{prop:omni_subroutine}
At each step of Algorithm \ref{alg:omni_online}, $\Pbhat(t)$ can be computed in $O(mNB)$ time. Consequently, Algorithm~\ref{alg:omni_online} can be implemented in $O(mNB)$ time complexity per step.
\end{proposition}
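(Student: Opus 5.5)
The plan is to exploit the structure of the elementary losses in two ways. First, all quantities depending on the base forecasters collapse into $O(mN)$ summary arrays that can be built in $O(mNB)$ time. Second, the minimax program \eqref{eq:Pn_def_minimax} over $N$ distributions reduces to a one-dimensional threshold computation per quantile level, which can be solved with prefix sums. The per-step cost of Algorithm~\ref{alg:omni_online} is then dominated by this preprocessing and by the Hedge update \eqref{eq:w_update}. The update clearly costs $O(mNB)$: each of the $mNB$ exponents needs $\E_{q_n\sim\hat P_n(t)}[\ell_{\tau_n,\theta_i}(q_n,Y_t)]$, which by Theorem~\ref{thm:Pn_property}(1) is an average over at most two support points, and $\ell_{\tau_n,\theta_i}(f_b^n(t),Y_t)$, which is $O(1)$. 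So the substance lies in computing $\hat{\mathbf P}(t)$.

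\textbf{Step 1 (preprocessing).} Set $W_i^n=\sum_b w_{ib}^n$; these $mN$ numbers cost $O(mNB)$. For each grid value $y$, define the baseline term
\[
A(y)=\sum_{n,i,b} w_{ib}^n\,\ell_{\tau_n,\theta_i}(f_b^n(t),y).
\]
Evaluating $A$ naively costs $O(m^2NB)$, but \eqref{eq:elementary_loss} gives a cheaper route. As $y$ moves from grid point $(j-1)/m$ to $j/m$ it crosses exactly $\theta_j$. Across this step, for each $n$, $A$ changes by
\[
\tau_n\sum_{b:\,f_b^n\le\theta_j}w_{jb}^n\;-\;(1-\tau_n)\sum_{b:\,f_b^n>\theta_j}w_{jb}^n .
\]
Both sums are computable for all $j,n$ in $O(mNB)$ total. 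A cumulative sum over $j$, anchored at $A(0)$, which itself costs $O(mNB)$, then yields all of $A$ in $O(mN)$.

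\textbf{Step 2 (reduction of the game).} The payoff is bilinear, so von Neumann's theorem lets us swap min and max. For a fixed $P^y$ with CDF $F$, the objective separates over $n$. Moving $q_n$ upward across $\theta_k$ changes the expected loss by $W_k^n\,(F(\theta_k)-\tau_n)$. Hence the learner's best response is any $q_n$ at a point where the weighted "excess" $F(\theta_k)-\tau_n$ changes sign, and for fixed $P$ the adversary's value is piecewise linear in $F$. I would then characterize an equilibrium in which $\hat P_n$ mixes two adjacent grid points, in line with Theorem~\ref{thm:Pn_property}. The adversary's payoff for $Y=y$ is
\[
\sum_n \E_{q_n\sim P_n}\Big[\sum_i W_i^n\ell_{\tau_n,\theta_i}(q_n,y)\Big]-A(y),
\]
which for two-point $P_n$ can be evaluated for all $y$ in $O(mN)$ by the same prefix-sum trick as in Step 1. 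The aim is to find $P$ that equalizes, or dominates, this payoff on the support of $P^y$ by a single monotone sweep over grid indices. Because the sign structure $F(\theta_k)\gtrless\tau_n$ is monotone in $k$ and ordered in $n$, this should take amortized $O(m+N)$ pointer moves, each updated in $O(1)$ or $O(N)$ from the stored prefix sums.

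\textbf{Main obstacle.} The hard step is the sweep in Step 2. It requires proving that the equilibrium is found by a monotone scan, with no general LP solve, and that the adversary's equilibrium $P^y$ is supported on $O(1)$ or at most $O(N)$ grid points. That support claim is what keeps each update cheap. I would first try to characterize the optimal value as the maximum over $y$ of the upper envelope of the $N$ piecewise-linear functions. I would then show that the two-point mixing weights solve a scalar indifference equation at the envelope's kink, so that the whole procedure costs $O(mN)\subseteq O(mNB)$.
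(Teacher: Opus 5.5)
\textbf{There is a genuine gap: Step 2, which carries the whole proposition, is left unproved.} Your cost accounting for the Hedge update is correct, as is your prefix-sum evaluation of $A(y)$ (the increment across $\theta_j$ is exactly as you state). But these are the easy parts. In Step 2 you describe what you would like to be true (``this should take amortized $O(m+N)$ pointer moves'') without identifying the saddle point or proving that a sweep finds it. Citing Theorem~\ref{thm:Pn_property}(1) does not close this: it says that \emph{some} solution has two-point marginals, not where their supports lie or how to get the mixing weights. Solving the bilinear game generically (an LP over a product of $N$ simplices against an $(m+1)$-point simplex) costs far more than $O(mNB)$. As written, you have neither an algorithm nor a correctness argument.

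\textbf{The missing idea is an explicit closed-form saddle point.} In your notation $W_i^s=\sum_b w_{ib}^s$, set $I_i^s=\sum_b w_{ib}^s\indi{\theta_i<f_b^s(t)}$. For $n\in\{0,\dots,N\}$ and $j\in\{0,\dots,m\}$ define
\[
V_n(j)=\sum_{i=j+1}^{m}\sum_{s=1}^{n} I_i^s+\sum_{i=1}^{j}\sum_{s=n+1}^{N}\bigl(W_i^s-I_i^s\bigr),\qquad V_n^*=\min_{j} V_n(j),
\]
so that $V_0^*=V_N^*=0$.
\begin{itemize}
\item \emph{Ordering.} An exchange argument (Lemma~\ref{lem:jn_sort}) shows that minimizers $j_n^*$ of $V_n$ can be chosen nondecreasing in $n$, with $j_0^*=0$ and $j_N^*=m$.
\item \emph{Adversary.} The equilibrium is $P^{y*}=\tau_1\delta_0+\sum_{n=1}^{N-1}(\tau_{n+1}-\tau_n)\delta_{j_n^*/m}+(1-\tau_N)\delta_1$.
\item \emph{Learner.} $\hat P_n(t)$ lives on two adjacent grid points in $[j_{n-1}^*/m,\,j_n^*/m]$. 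It is pinned down by the scalar equation $\E_{q_n\sim\hat P_n}\bigl[\sum_{i,b}w_{ib}^n(\indi{\theta_i<q_n}-\indi{\theta_i<f_b^n(t)})\bigr]=V_{n-1}^*-V_n^*$.
\end{itemize}

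Both halves of optimality are then direct.
\begin{itemize}
\item Against $\hat{\mathbf P}(t)$, the adversary's payoff at $y=j/m$ equals, up to a $y$-independent constant, $V_n^*-V_n(j)\le 0$ for an index $n$ depending on $j$. Equality holds at the atoms of $P^{y*}$.
\item Against $P^{y*}$, its CDF satisfies $F(\theta_k)=\tau_n$ for every $\theta_k$ strictly between $j_{n-1}^*/m$ and $j_n^*/m$, with the correct sign outside. By your own marginal computation $W_k^n(F(\theta_k)-\tau_n)$, every such $q_n$ is a best response.
\end{itemize}

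The complexity follows.
\begin{itemize}
\item The arrays $W_i^s$ and $I_i^s$ cost $O(mNB)$, and prefix sums over $s$ cost $O(mN)$.
\item Each $V_n(\cdot)$ on the whole grid follows in $O(m)$ from $V_n(j)-V_n(j-1)=\sum_{s>n}(W_j^s-I_j^s)-\sum_{s\le n}I_j^s$.
\item $k_n$ and $C_n$ come from an $O(m)$ scan of $\sum_{i\le k}W_i^n$.
\end{itemize}
Your guess that $P^{y*}$ has $O(N)$ atoms is right. But without the functions $V_n$ and the monotonicity of their minimizers, you have neither the sweep nor the indifference equation you are hoping for.
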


{
The next proposition shows that Algorithm \ref{alg:omni_online}'s forecasts depend on the discretization only through its resolution $1/m$, not through the interval one starts with. Consequently, the data range need not be known in advance: one may start on any interval and, whenever an out-of-range value appears, enlarge the grid. Additional detail and the proof is deferred to Appendix \ref{pf:grid_consistency}.

\begin{proposition}[Consistency under grid expansion]
\label{prop:grid_consistency}
Consider two runs of Algorithm \ref{alg:omni_online} on the same data stream and at the same resolution $1/m$, differing only in their initial discretization interval, the second being an enlargement of the first. Suppose that up to round $t$ every forecast $f_b^n(s)$ and every target $Y_s$ ($s\le t$) lies in the smaller interval. Then the two runs produce identical outputs $\Pbhat(s)$ for all $s\le t$.
\end{proposition}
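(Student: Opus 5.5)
We argue by induction on rounds $s \le t$, showing that the two runs have identical states at every round. Concretely, write the smaller grid as $G = \{a, a+1/m, \dots, a+K/m\}$ and the enlarged grid as $G' \supseteq G$ at the same resolution. Let $\Theta$ and $\Theta'$ be the corresponding interleaved midpoints $\theta_i$, with $\Theta \subseteq \Theta'$.

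The run on $G'$ carries extra weights $w^n_{ib}$ for $\theta_i \in \Theta'\setminus\Theta$. The first step is to show that these extra coordinates are inert as long as all data lie in $G$. For such a $\theta_i$, every $Y_s$ and $f^n_b(s)$ lies on the same side of $\theta_i$. By Theorem~\ref{thm:Pn_property}(3), every forecast in the support of $\hat P_n(s)$ also lies in $[\min f,\max f] \subseteq \mathrm{conv}(G)$, so it sits on that same side. Hence $\ell_{\tau_n,\theta_i}(q_n,Y_s) = \ell_{\tau_n,\theta_i}(f^n_b(s),Y_s) = 0$. The Hedge update \eqref{eq:w_update} therefore multiplies these weights by $\exp(0)=1$, while the in-range weights receive exactly the same exponents as in the small run.

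Two consequences follow. The extra weights only change the normalizing constant. The normalized weights restricted to $\Theta$ in the big run therefore equal the small-run weights times a common positive scalar $\lambda_s$.

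The initialization needs a caveat. The initial weights are uniform over a set whose size depends on the grid, so the two runs already differ by a global scalar at $s=1$. That is consistent with the invariant, so the inductive hypothesis at round $s$ is: big-run weights on $\Theta$ equal $\lambda_s$ times small-run weights, and the remaining weights are arbitrary.

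The second step shows that the minimax program \eqref{eq:Pn_def_minimax} has the same solution under this invariant. Positive scaling of the objective does not change the argmin. The extra $\theta_i$ terms must be handled more carefully, because in the big run the adversary's $P^y$ ranges over $\Delta$ on $G'$, and out-of-range $\theta_i$ terms can be nonzero when $Y\notin G$ or $q_n\notin G$. This is the main obstacle. Agreement of the optimal values alone is not enough; we need to identify the specific solution chosen by the procedure of Appendix~\ref{appsec:fast}.

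I would first argue that the solution of Appendix~\ref{appsec:fast} is built, for each $n$, from a one-dimensional threshold rule. It compares cumulative weighted sums $\sum_b w^n_{ib}(\cdot)$ across $\theta_i$, and picks a crossing point together with a two-point mixture, as in Theorem~\ref{thm:Pn_property}(1). I would then verify that this rule is restricted to the range of the current base forecasts, which by Theorem~\ref{thm:Pn_property}(3) lies in $G$. It should therefore depend only on $\theta_i$ in that range, and only through ratios of weights. Under the invariant those ratios coincide, so the computed $\hat{\mathbf P}(s)$ agree.

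If instead one argues directly via minimax, the approach is as follows. Take the small-run solution, viewed as distributions on $G'$. For any adversary $P^y$ on $G'$, project its mass outside $G$ onto the nearest endpoint of $G$. This projection does not change any in-range elementary loss term, since those depend only on which side of each $\theta_i\in\Theta$ the value $Y$ falls. For the out-of-range terms, both $q_n$ and $f_b^n$ lie in $G$, so each such term equals $\ell(q_n,Y)-\ell(f^n_b,Y)$ with $q_n,f^n_b$ on the same side of $\theta_i$, which is $0$. Thus the big-run objective at any feasible point supported in $G$ coincides with the small-run objective. Combined with a symmetric argument showing that the optimal $P_n$ can be restricted to $G$, this yields equal solutions.

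Finally, since $\hat{\mathbf P}(s)$ agree and the data are identical, the update step preserves the invariant with a new scalar $\lambda_{s+1}$, and induction closes the argument up to round $t$.
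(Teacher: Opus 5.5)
Your induction skeleton matches the paper's: in-range weights proportional with a common factor $\lambda_s$, out-of-range weights inert, and the update preserving the invariant. You also correctly identify the crux. However, neither of your two routes actually establishes $\tilde{\hat P}_n(s)=\hat P_n(s)$ from the invariant.

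\textbf{The direct minimax route.} Your projection argument does show that, for $P_n$ supported on $G$, the big-run worst-case objective is $\lambda_s$ times the small-run one. Granting your claim that optimal $P_n$ can be restricted to $G$, the small-run output is then \emph{a} solution of the big program. But this says nothing about \emph{which} solution the algorithm returns, because \eqref{eq:Pn_def_minimax} is generally not uniquely solvable. Already at round $1$, take $N=1$, $m=2$, $B=2$, forecasts $0$ and $1$, and $w_{ib}=1/4$. The objective against $\delta_y$, $y\in\{0,\tfrac12,1\}$, equals $\E_{q\sim P}[\rho_\tau(y-q)]-\tfrac12\bigl(\tau y+(1-\tau)(1-y)\bigr)$. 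Both $\delta_{1/2}$ and $\tfrac12\delta_0+\tfrac12\delta_1$ attain the minimax value $0$. So ``equal solutions'' does not follow from this route.

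\textbf{The construction route.} Your first route is the paper's, but the verification it needs is missing, and your justification does not supply it. Theorem~\ref{thm:Pn_property}(3) constrains where the output lives, not which weights the construction reads. The construction is also not a per-$n$ threshold rule on in-range $\theta_i$. The index $j_n^*$ minimizes $V_n(j)$, which couples all levels, sums over every index $i$ including the new one, and is minimized over every $j$ including the new value.

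The missing check is the paper's computation, which goes as follows.
\begin{itemize}
\item Since $\indi{\theta_{m+1}<f_b^n(s)}=0$ for all in-range forecasts, the new index contributes nothing to $\tilde V_n(j)$ for $j\le m$. Hence $\tilde V_n(j)=\lambda_s V_n(j)$ there.
\item At the new value, $\tilde V_n(m+1)-\tilde V_n(m)=\sum_{s'>n}\sum_b \tilde w^{s'}_{m+1,b}>0$.
\item Therefore $\tilde V_n^*=\lambda_s V_n^*$, and the selected indices $j_n^*$ coincide, including under the $\min$ tie-break.
\item Consequently $\tilde k_n=k_n$, and $\tilde C_n=C_n$ because \eqref{def:cn} is a ratio of quantities each scaled by $\lambda_s$.
\end{itemize}
For a cell added on the left, the new indicator equals $1$, and its contribution to the numerator of \eqref{def:cn} cancels rather than vanishes. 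So ``depends only on in-range $\theta_i$'' is not even literally true in that case.

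Your observation that the rule is homogeneous of degree zero in the weights is the right second half of the argument. Without the explicit computation above, however, the induction does not close.
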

}

\subsection{Comparison with Separate Optimization of Quantile Levels} \label{subsec:separate_multi_q}

Here, we illustrate an advantage of optimizing multiple quantile levels jointly as in Algorithm~\ref{alg:omni_online}. In particular, in light of Theorem~\ref{thm:decomp_multi_q}, one might ask whether running a separate omniprediction algorithm, analogous to Algorithm~\ref{alg:omni_online}, for each quantile level individually would suffice. In that case, the minimax program~\eqref{eq:Pn_def_minimax} would be replaced by 
\begin{align*}
\hat{P}_n(t) = \argmin_{P \in \Delta_m} \max_{P^y \in \Delta_m} \sum_{i=1}^m \sum_{b=1}^B w_{ib}^n\E_{q_n\sim P, Y \sim P^y}[\ell_{\tau_n, \theta_i}(q_n,Y) - \ell_{\tau_n, \theta_i}(f_b^n(t),Y)], \quad \forall n \in [N].
\end{align*}
By Theorem~\ref{thm:Pn_property}, Algorithm~\ref{alg:omni_online} outputs a vector of quantile estimates that is ordered according to the quantile levels. In contrast, no such guarantee holds when each quantile level is optimized separately. Moreover, simple procedures such as post-sorting the resulting quantile estimates may increase the loss and thus fail to provide an omniprediction guarantee (see Appendix~\ref{appsec:post_sorting}).
\section{Empirical example}
\label{sec:experiments}

To illustrate Algorithm~\ref{alg:omni_online}, we consider a real epidemiological forecasting task from the COVID-19 forecasting hub, a repository of real-time forecasts made during the COVID-19 pandemic \cite{cramer2022covidhub}. In particular, we consider forecasting weekly aggregated COVID-19 hospitalizations across 51 state-level series (50 states + DC) and the U.S. aggregate (US-agg), at $N=23$ quantile levels and horizons of $h \in \{1,2,3,4\}$ weeks ahead. We evaluate performance across all location–horizon pairs (each defining a time series). During the pandemic, research groups throughout the country submitted forecasts to this repository. We use the predictions from $B=18$ of these groups as the base forecasts for our algorithm. We compare our method against two ensembles constructed from the submitted forecasts by researchers during the pandemic, COVIDhub-Ens and JHUAPL-SLPEns, as well as Algorithm~\ref{alg:wql_hedge} (Hedge-QL, in Appendix \ref{appsec:multi_wql}). Hedge-QL applies the Hedge algorithm independently at each quantile level, forming predictions as weighted averages of base forecasts with weights chosen to minimize the pinball loss. This method acts as a simple baseline for ensembling quantile forecasts against which we compare our omniprediction procedure. We use 127 weeks of data from December 2020 to June 2023. Additional details about the data can be found in Appendix~\ref{appsubsec:experiment_details}.

\vspace{-0.2cm}
\paragraph{Metrics.}
We evaluate performance using three metrics. The first is the approximate cumulative omniprediction error at time \(t\) (referred to as the omni-error at time \(t\)):
\begin{align}\label{eq:omni_error_def}
\sup_{i,n,b} \frac{1}{t}\sum_{s=1}^t \left[\ell_{\tau_n, \theta_i}(q_n(s),Y_s) - \ell_{\tau_n, \theta_i}(f^n_b(s),Y_s)\right].
\end{align}
Note that Lemma~\ref{lem:discretize_multi} shows that this quantity approximates the true omniprediction error (Eq.~\eqref{eq:omni}) within a factor of 2. The second is the cumulative quantile loss at time \(t\), defined as
\begin{align*}
\frac{1}{Y_{\max}}\frac{1}{t}\sum_{s=1}^t \sum_{n=1}^N \mathbb{E}_{q_n \sim \hat{P}_n(s)} \big[\rho_{\tau_n}(Y_s - q_n)\big],
\end{align*}
where $\rho_{\tau}(x) = \tau x - \min\{x,0\}$ is the canonical pinball loss for the $\tau$-quantile and $Y_{\text{max}}$ is the maximum value of the ground truth in the given time-series. To clarify terminology, we refer to the omni-error and cumulative quantile loss at the final time step $T$ as the \textit{total omni-error} and \textit{total quantile loss}, respectively.  Finally, as our last metric, we also consider the instantaneous normalized quantile loss at time \(t\): $\frac{1}{Y_t}\sum_{n=1}^N \mathbb{E}_{q_n \sim \hat{P}_n(t)} \big[\rho_{\tau_n}(Y_t - q_n)\big]$.

\vspace{-0.2cm}
\paragraph{Discretization.}
As introduced in Section~\ref{nssec:discretization}, we discretize both the predictions and the ground truth into a finite set of values. 
We use grid spacings from $\{2,5,10,20,50,100,500\}$ depending on the range of forecast values. 
This results in more than $500$ grid points for all $208 = 52 \times 4$ time-series, and more than $1{,}000$ grid points for over $150$ time-series. 
Additional details on the choice of discretization are provided in Appendix~\ref{appsubsec:experiment_details}.

\vspace{-0.2cm}
\paragraph{Learning rates.}
Our algorithm and Hedge-QL each use a learning rate hyperparameter. Rather than tuning it explicitly, we evaluate each method over a range of learning rates, and we report results for the best-performing value on each metric, along with a sensitivity analysis showing how performance varies with learning rate. In plots and legends, the best performing learning rates are denoted by \texttt{*Our~algorithm} and \texttt{*Hedge-QL}. 
For some plots, we also report results using a fixed learning rate. 
Finally, consistent with our theory, when referring to an instance of our algorithm with learning rate constant $c_O$, this corresponds to an implemented learning rate of $\eta = c_O \sqrt{(\log mNB) / T}$. 
Similarly, for Hedge-QL with constant $c_H$, the learning rate is set as $\eta = c_H \sqrt{(\log B) / T} / Y_{\max}$. We evaluate our algorithm over \(c_O \in \{10^{i/4} \mid i=-8,\ldots,5\}\) and Hedge-QL over \(c_H \in \{10^{i/4} \mid i=-4,\ldots,12\}\) (note that larger constants are needed for Hedge-QL to account for the normalization by 1/$Y_{\max}$ which gives it a smaller effective learning rate).

\paragraph{Results.}
We present plots of omni-error over time, as well as total omni-error and total quantile loss for our algorithm and baseline ensembles. We also provide a sensitivity analysis with respect to the learning rate. Overall, the results show that our algorithm is comparable to or outperforms the baseline ensembles, with particularly strong performance on more difficult prediction tasks. Additional plots, including comparisons of quantile forecast trajectories and distributions of ranks of each model at each (horizon, location, time) tuple, are provided in Appendix~\ref{appsubsec:add_plots}.

\begin{figure}[t] 
  \centering
  \includegraphics[width=\textwidth]{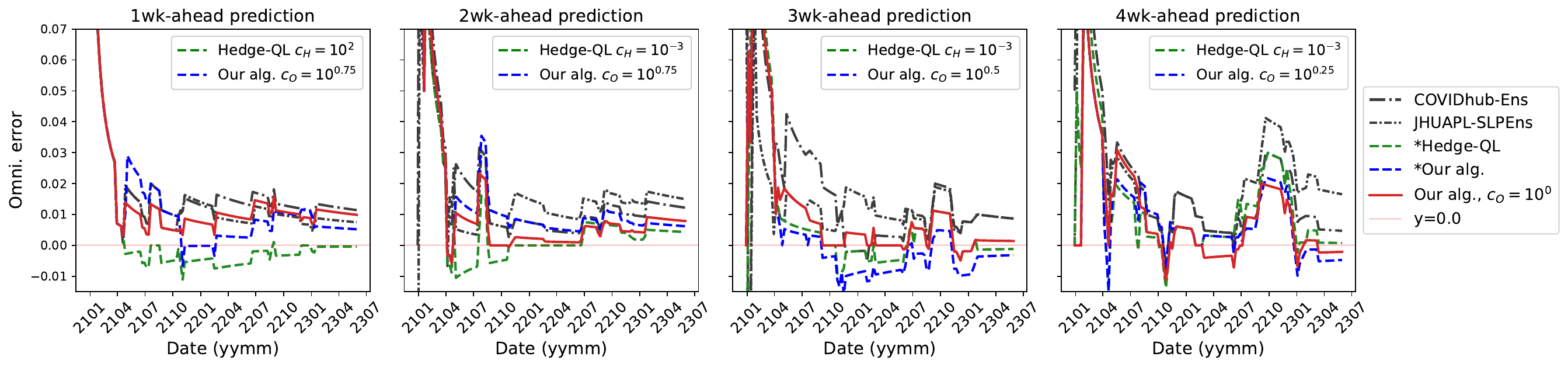}
  \caption{
  Omni-error (Eq. \eqref{eq:omni_error_def}) over time for $1$- to $4$-week-ahead forecasting horizons (left to right panels) on the US-agg data for \texttt{*Our algorithm} (blue), \texttt{*Hedge-QL} (green) and the COVIDhub-Ens and JHUAPL-SLPEns ensembles (both black). The values of $c_H, c_O$ shown in the legend of each panel correspond to the learning rate which achieved the lowest total omni-error. The red line shows the results of our algorithm with a fixed learning rate constant $c_O = 1$. Lower values indicate better performance.}
  \label{fig:omni_error_over_time}
\end{figure}

As a first evaluation, Figure~\ref{fig:omni_error_over_time} shows the omni-error over time for our algorithm, Hedge-QL, and the two existing ensembling models on the US-agg data. We find that our algorithm outperforms the COVIDhub-Ens and JHUAPL-SLPEns ensembles across all time horizons, with larger performance gaps occurring at longer time horizons where forecasting is more challenging. On the other hand, Hedge-QL performs similarly to our algorithm, with the precise relative  performance varying across horizons and learning rates. As a sensitivity check, this figure shows results for our algorithm obtained with both the best performing learning rate in hindsight and a fixed value of $c_O=1$. This latter value was chosen as it minimizes the theoretical bound given in Theorem \ref{thm:omni_guarantee}. Across all time horizons, forecasts from our algorithm with $c_O=1$ perform as well as or better than the baseline ensembles.

To evaluate the generalizability of these results, the left panel of Figure~\ref{fig:combined_results} shows the differences in total omni-error (left column) and total quantile loss (right column) between our algorithm and each of the three comparators (rows).  
Similar to the results for US-agg, we find that our algorithm consistently outperforms the COVIDhub-Ens and JHUAPL-SLPEns across all time horizons with greater gaps being observed for longer horizons. In comparison to Hedge-QL, we find that our algorithm has a small advantage in terms of the total omni-error and realizes a much bigger performance improvement in terms of the total quantile error, especially at three and four week horizons. 

The preceding results indicate that our algorithm shows the biggest performance gains on larger time horizons where forecasting is most difficult. To further investigate this trend, the right panel of Figure~\ref{fig:combined_results} displays the instantaneous quantile losses realized by our algorithm and Hedge-QL at each timestep. The color spectrum in this plot represents the difficulty of prediction at a given (horizon, location, time) tuple as measured by the average pinball loss of the base forecasters,
\(
\frac{1}{Y_t} \frac{1}{BN} \sum_{n=1}^N \sum_{b=1}^B \rho_{\tau_n}(Y_t - f_b^n(t)).
\)
We find that more difficult (darker red) points are predominantly located above the $y=x$ line indicating that our algorithm achieves lower quantile loss than Hedge-QL on harder prediction instances.

\begin{figure}[htbp]
  \centering

  \begin{minipage}{0.52\textwidth}
    \centering
    \includegraphics[width=\linewidth]{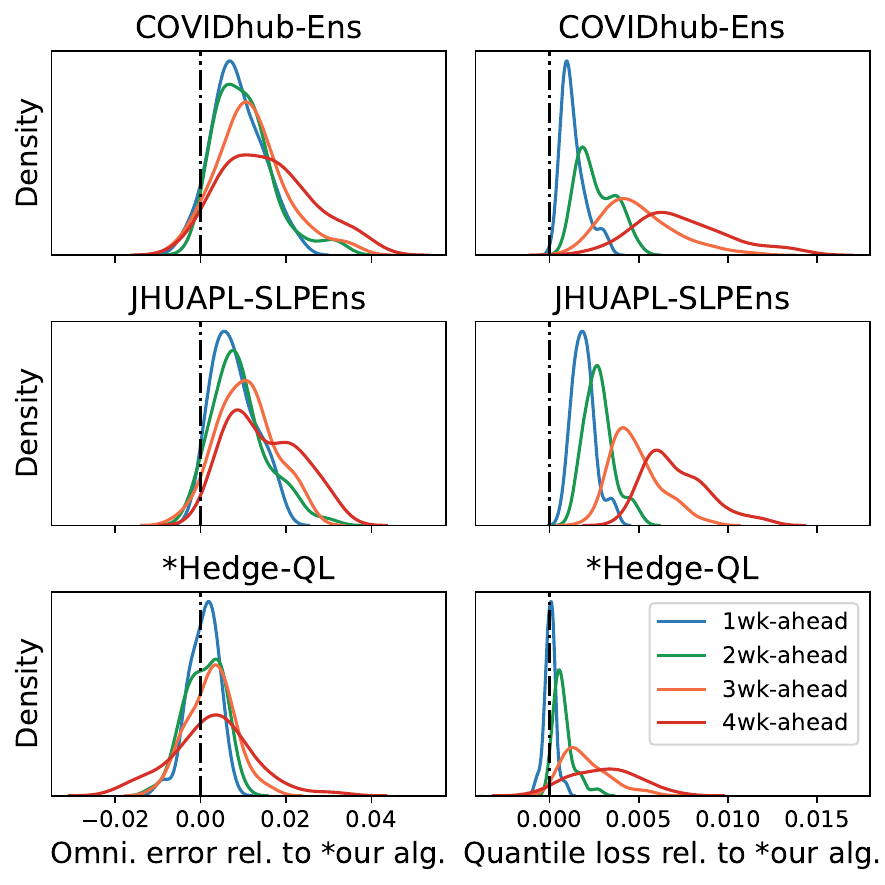}
  \end{minipage}
  \hfill
  \begin{minipage}{0.45\textwidth}
    \centering
    \includegraphics[width=\linewidth]{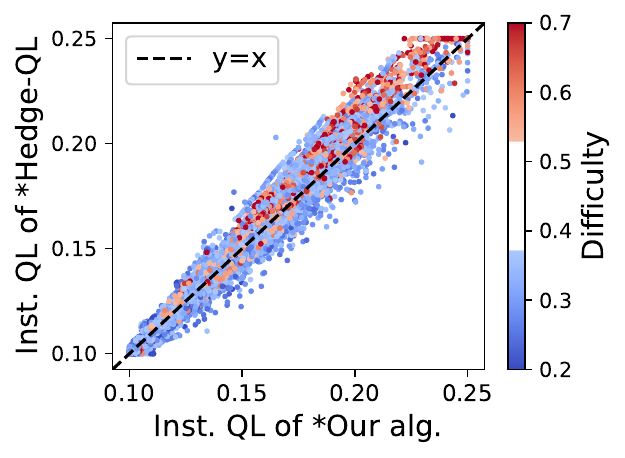}
  \end{minipage}

  \caption{
  \textbf{Left:} State-level comparison of performance relative to \texttt{*Our algorithm}. Rows show comparisons with three baseline ensembles. Each curve shows a Gaussian kernel density estimate of the difference in total omni-error (left column) or total quantile error (right column) across the $51$ state-level time series for a given forecasting horizon. Values to the right of zero (vertical black dotted line) indicate better performance of our algorithm.
  \textbf{Right:}  Comparison of instantaneous quantile loss realized by \texttt{*Our algorithm} (x-axis) and \texttt{*Hedge-QL} (y-axis). See text for detailed descriptions. 
  }
  \label{fig:combined_results}
\end{figure}


\begin{figure}[htbp]
  \centering
  \begin{subfigure}{0.98\textwidth}
      \centering
      \includegraphics[width=\textwidth]{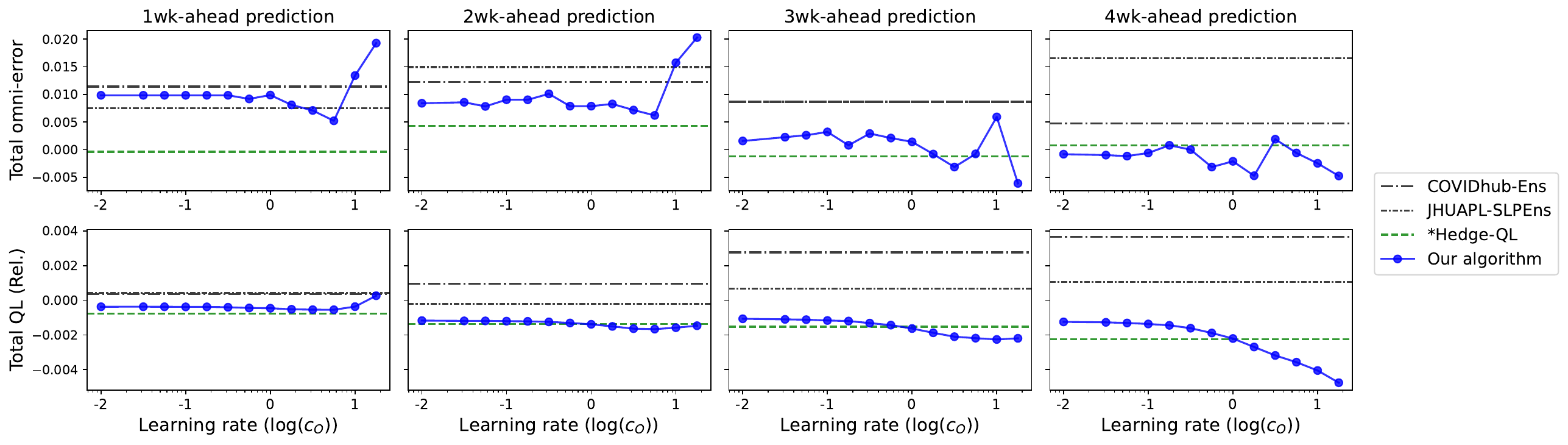}
  \end{subfigure}

  \caption{
    Sensitivity of our algorithm to the choice of learning rate. 
    Each panel shows the performance of our algorithm as a function of the learning rate (blue curve) as measured by the total omni-error (top row) or total quantile loss relative to best base forecaster (bottom row) on the US-agg data. Panels from left to right show results at $1$- to $4$-week time horizons, while horizontal lines  indicate the performance of three baseline methods. Lower values indicate better performance.
  }
  \label{fig:sensitivity}
\end{figure}

We conclude this section with a sensitivity analysis evaluating how the performance of our algorithm varies with the learning rate. Figure~\ref{fig:sensitivity} shows the realized total omni-error and total quantile loss realized by our algorithm for $1$- to $4$-week-ahead horizons on the US-agg data with learning rate constants $c_O \in \{10^{i/4} \mid i=-8,\ldots,5\}$.
We see that across almost all horizons and learning rates our algorithm outperforms COVIDhub-Ens and JHUAPL-SLPEns and matches or improves on the performance of Hedge-QL. In general, we find that the total quantile loss varies more smoothly with respect to the learning rate than the total omni-error, although neither metric appears to be significantly sensitive to its value. A similar sensitivity plot for Hedge-QL is given in Appendix \ref{appsubsec:add_plots}.  
\section{Limitations and future work}
\label{sec:discussion}

In this paper, we develop computationally and sample efficient omniprediction methods for continuous prediction targets. To make this problem tractable, our results focus on bounded proper losses for a finite set of quantile levels. While this setting captures a broad and practically relevant class of evaluation criteria, it does not cover unbounded losses or proper losses defined on the full predictive distribution. For example, the log-loss $\ell(f,y) = -\log f(y)$ falls into both categories and requires access to a forecasted density $f$. In general, quantile-based representations may be insufficient to evaluate such density-based proper losses, suggesting the need for new representations and algorithms beyond quantile aggregation. 
Exploring alternative aggregation criteria and determining the full scope of tractable omniprediction problems remain interesting directions for future work.

\begin{ack}
MWM would like to acknowledge DARPA, NSF, the DOE Competitive Portfolios grant, and the DOE SciGPT grant for partial support of this work.
\end{ack}

\bibliographystyle{alpha}
\bibliography{references}

\newpage
\appendix
\section{Proofs of Theoretical Results} 

\subsection{Proof of Lemma \ref{lem:discretize_multi}} \label{pf:discretize_multi}
\begin{proof}
  Let $\theta_{i^*} \in \{\theta_i\}_{i=1}^{m}$ be the closest value  from $\theta$, where the ties are broken by rounding up. Observe that $\one(\theta < a) = \one(\theta_{i^*} < a)$ for $a \in \{\frac{j}{m}\}_{j=0}^m$. This gives 
  \begin{align*}
  \ell_{\tau, \theta}(x,y) & = (\one(y<x)-\tau)(\one(\theta<x)-\one(\theta<y)) \\
  & = (\one(y<x)-\tau)(\one(\theta_{i^*}<x)-\one(\theta_{i^*}<y)) = \ell_{\tau, \theta_{i^*}}(x,y),
  \end{align*}
  for $x,y \in \{\frac{j}{m}\}_{j=0}^m$ and $\theta \in [0,1)$ and $\ell_{\tau, 1}(x,y) = 0$. Putting this together with characterization of $\ell \in \cl_1$ in Section \ref{ssec:characterization}, we obtain
  \begin{align*}
  & \sup_{\ell \in \cl_1, f\in\cf} \frac{1}{T} \sum_{t=1}^T \left[\ell(q(t),Y_t) - \ell(f(t),Y_t)\right] \\
  &= \sup_{H_n:\text{pos. measure}, f\in\cf} \frac{1}{T} \sum_{t=1}^T \sum_{n=1}^N \left[\int_0^1 \ell_{\tau_n, \theta}(q_n(t), Y_t) - \ell_{\tau_n, \theta}(f^n(t),Y_t) dH_n(\theta) \right] \\
  &\underset{\ell_{\tau, 1} \equiv 0}{=} \sup_{H_n, f} \frac{1}{T} \sum_{t=1}^T \sum_{n=1}^N \sum_{i=1}^{m} \bigl[\ell_{{\tau_n, \theta_i}}(q_n(t), Y_t) - \ell_{{\tau_n, \theta_i}}(f^n(t),Y_t)\bigr] \, 
  H_n\left(\theta \in \left[\theta_i-\frac{1}{2m}, \theta_i +\frac{1}{2m}\right)\!\right) \\
&= \sup_{H_n, f} \sum_{n=1}^N \sum_{i=1}^{m} \frac{1}{T} \sum_{t=1}^T\bigl[\ell_{{\tau_n, \theta_i}}(q_n(t), Y_t) - \ell_{{\tau_n, \theta_i}}(f^n(t),Y_t)\bigr] \, 
  H_n\left(\theta \in \left[\theta_i-\frac{1}{2m}, \theta_i +\frac{1}{2m}\right)\!\right) \\
  &\leq \sup_{i\in[m], n \in [N], f\in\cf} \frac{1}{T} \sum_{t=1}^T\bigl[\ell_{{\tau_n, \theta_i}}(q_n(t), Y_t) - \ell_{{\tau_n, \theta_i}}(f^n(t),Y_t)\bigr] \, \left(\sum_{n=1}^ NH_n([0,1))\right) .
  \end{align*}
  Now we compute an upper bound of $\left(\sum_{n=1}^ NH_n([0,1))\right)$ in terms of maximum possible value of $\ell((x_n)_{n=1}^N,y)$. Observe that
  \begin{align*}
  \ell(0,...,0,1) 
  &= \sum_{n=1}^N \ell_n(0,1) = \sumn \tau_n(g_n(1)-g_n(0)) \underset{\text{Thm.}\, \ref{thm:decomp_single_q}}{=} \sumn \tau_n H_n([0,1)) \\    
  \ell(1,...,1,0) 
  &= \sum_{n=1}^N \ell_n(1,0) = \sumn (1-\tau_n)(g_n(1)-g_n(0)) \underset{\text{Thm.}\, \ref{thm:decomp_single_q}}{=} \sumn (1-\tau_n) H_n([0,1)).
  \end{align*}
  Note that $g_n(1)-g_n(0) = H_n([0,1))$ from left-continuity of $g_n$. Since $\max\ell$ is larger than $\ell(0,...,0,1)$ and $\ell(1,...,1,0)$, it's also larger than the average of these two, which is $(1/2)\sum_{n=1}^ NH_n([0,1))$. Putting together with the condition $|\ell| \leq 1$, we have $\sum_{n=1}^ NH_n([0,1)) \leq 2$, and this proves 
  \[
  \frac{\sup_{\ell \in \cl_1, f\in\cf} \frac{1}{T}\sum_{t=1}^T \left[\ell(q(t),Y_t) - \ell(f(t),Y_t)\right]}
{\sup_{i,n,f} \frac{1}{T}\sum_{t=1}^T \left[\ell_{\tau_n, \theta_i}(q_n(t),Y_t) - \ell_{\tau_n, \theta_i}(f^n(t),Y_t)\right]}
\leq 2.
  \]
  It remains to prove that the ratio above is at least $1$. It follows from the fact that $\ell_{\tau_n, \theta_i} \in \cl_1$.
\end{proof}

\subsection{Proof of Theorem \ref{thm:Pn_property}} \label{pf:Pn_property}
In this proof, we denote $\Pbhat(t) = (\Phat_n(t))_{n=1}^N$ by $(P_n^*)_{n=1}^N$. We introduce one lemma and two propositions that are essential for proving that the minimax solution $P_n^*$ satisfies the monotonicity constraint (Property $2$ in the theorem statement). First, Lemma~\ref{lem:jn_sort} defines a quantity $V_n^*$, on which our closed-form solution $P_n^*$ depends, for each $n \in [N-1]$, and shows that there exists a non-decreasing sequence $\{j_n^*\}_{n=1}^{N-1}$, where each $j_n^*$ is an index that minimizes $V_n(j)$. 

\begin{lemma}\label{lem:jn_sort}
Denote $[m]_0 := [m] \cup \{0\}$ and define a function $V_n(j)$ and a quantity $V_n^*$ for $n \in [N-1]$ as
\begin{align} \label{eq:Vn_def}
V_n(j) &:= \sum_{i=j+1}^{m} \sum_{s=1}^{n} \sum_{b=1}^B w_{ib}^s \indi{\theta_i < f_b^s(t)} + \sum_{i=1}^{j} \sum_{s=n+1}^{N}  \sum_{b=1}^b w_{ib}^s (1 - \indi{\theta_i < f_b^s(t)}) \geq 0, \nonumber\\
V_n^* &:= \min_{j \in [m]_0} V_n(j) ,
\end{align}
and $V_0^* = V_N^* := 0$.
Let $J_n^*$ be the set of indices $j \in [m]_0$ that minimizes the $V_n(j)$. 
\smallskip \\
Then, there exists a sorted sequence $0=j_0^* \leq j_1^* \leq ... \leq j_{N-1}^* \leq j_N^* = m$ such that $j_n^* \in J_n^*$ for $n \in [N-1]$.
\end{lemma}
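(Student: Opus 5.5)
We need to show that the minimizer sets $J_n^*$ admit a monotone selection. The standard tool is a submodularity (decreasing differences) argument in $(n,j)$. Write $a_{i}^s := \sum_b w_{ib}^s \indi{\theta_i < f_b^s(t)} \ge 0$ and $c_i^s := \sum_b w_{ib}^s(1-\indi{\theta_i<f_b^s(t)}) \ge 0$. Then
\[
V_n(j) = \sum_{i=j+1}^m A_i^{n} + \sum_{i=1}^j C_i^{n}, \qquad A_i^n := \sum_{s\le n} a_i^s,\quad C_i^n := \sum_{s>n} c_i^s .
\]
As $n$ increases, $A_i^n$ is non-decreasing and $C_i^n$ is non-increasing, for every $i$.

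The key step is to show that $V$ has decreasing differences. For $j<j'$ and $n<n'$ we claim
\[
V_{n'}(j') - V_{n'}(j) \le V_n(j') - V_n(j).
\]
Indeed, $V_n(j')-V_n(j) = \sum_{i=j+1}^{j'} (C_i^n - A_i^n)$. Each summand $C_i^n - A_i^n$ is non-increasing in $n$, since $C$ decreases and $A$ increases, so the sum is non-increasing in $n$. This is the only place the structure of $V_n$ enters, and it is a direct computation.

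With decreasing differences established, the monotone-selection argument is standard (Topkis). Take $j_n^* := \max J_n^*$, the largest minimizer; it exists because $[m]_0$ is finite. Suppose for contradiction that $n<n'$ but $j' := j_{n'}^* < j := j_n^*$. Minimality of $j$ for $V_n$ gives $V_n(j) - V_n(j') \le 0$. Applying decreasing differences to the pair $j'<j$ yields
\[
V_{n'}(j) - V_{n'}(j') \le V_n(j) - V_n(j') \le 0.
\]
Hence $j$ is also a minimizer of $V_{n'}$, and $j > j'$ contradicts the choice of $j'$ as the largest minimizer of $V_{n'}$. Therefore $j_1^* \le \dots \le j_{N-1}^*$.

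The endpoints are immediate: $0 = j_0^* \le j_1^*$ and $j_{N-1}^* \le m = j_N^*$, because every $j_n^* \in [m]_0$. Nonnegativity of $V_n(j)$ also holds, since all weights and indicator factors are nonnegative.

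The main point needing care is the direction of the inequality, together with consistently picking the largest minimizer (picking the smallest minimizer throughout also works). Beyond that, the argument is routine.
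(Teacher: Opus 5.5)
Your proof is correct and rests on the same fact as the paper's. For adjacent levels the paper writes $V_n(j)=C_j+D_j$ and $V_{n+1}(j)=C_j+E_j$ with $E_j-D_j$ non-increasing in $j$, which is exactly your decreasing-differences property, and it uses this to show that an inverted pair $j_n^*>j_{n+1}^*$ can be swapped. Your extremal-selection argument finishes more cleanly: it gives the monotone choice for all $n<n'$ at once instead of relying on the paper's informal ``repeating this process'' step, and your smallest-minimizer variant is exactly the paper's final choice $j_n^*=\min J_n^*$.
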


\begin{proof}[Proof of Lemma \ref{lem:jn_sort}]
Define $w_i^s := (w_{ib}^s)_{b=1}^B$, $\one_i^s := (\indi{\theta_i < f_b^s(t)})_{b=1}^B$, and $\one_B := (1, ..., 1) \in \R^B$. Then, $V_n^*$ can be rewritten as
\[
V_n^* = \min_{j \in [m]_0} \left( \sum_{i=j+1}^{m} \sum_{s=1}^{n} w_i^s \cdot \one_i^s + \sum_{i=1}^{j} \sum_{s=n+1}^{N}  w_i^s \cdot (\one_B - \one_i^s) \right).
\]
Suppose there exists a pair of indices $n, n+1$ such that $j_n^* > j_{n+1}^*$. Using the definition of $V_n^*$, we have 
\begin{align*}
j_n^* &= \argmin_{j \in [m]_0} \left( \sum_{i=j+1}^{m} \sum_{s=1}^{n} w_i^s \cdot \one_i^s + \sum_{i=1}^{j} \sum_{s=n+2}^{N}  w_i^s \cdot (\one_B - \one_i^s) + \sum_{i=1}^{j} w_i^{n+1} \cdot (\one_B - \one_i^{n+1})\right), \\
j_{n+1}^* &= \argmin_{j \in [m]_0} \left( \sum_{i=j+1}^{m} \sum_{s=1}^{n} w_i^s \cdot \one_i^s + \sum_{i=1}^{j} \sum_{s=n+2}^{N}  w_i^s \cdot (\one_F - \one_i^s) + \sum_{i=j+1}^{m} w_i^{n+1} \cdot \one_i^{n+1}\right).
\end{align*}
Two expressions share the first two terms and differ in the last term. We abbreviate these terms as the following:
\begin{align*}
C_j & := \sum_{i=j+1}^{m} \sum_{s=1}^{n} w_i^s \cdot \one_i^s + \sum_{i=1}^{j} \sum_{s=n+2}^{N}  w_i^s \cdot (\one_B - \one_i^s) \\
D_j & := \sum_{i=1}^{j} w_i^{n+1} \cdot (\one_B - \one_i^{n+1}) \\
E_j & := \sum_{i=j+1}^{m} w_i^{n+1} \cdot \one_i^{n+1}.
\end{align*}
Note that $D_j$ is non-decreasing in $j$ and $E_j$ is non-increasing in $j$.
Since $j_n^*$ and $j_{n+1}^*$ are the indices that minimize the $V_n(j)$ and $V_{n+1}(j)$, respectively, we have $C_{j_n^*} + D_{j_n^*} \leq C_{j_{n+1}^*} + D_{j_{n+1}^*}$ and $C_{j_n^*} + E_{j_n^*} \geq C_{j_{n+1}^*} + E_{j_{n+1}^*}$. This implies that 
\[
E_{j_{n+1}^*} - E_{j_n^*} \leq C_{j_n^*} - C_{j_{n+1}^*} \leq D_{j_{n+1}^*} - D_{j_n^*}.
\]
Because of the monotonicity of $D_j$ and $E_j$ and the assumption that $j_n^* > j_{n+1}^*$, the leftmost term should be non-negative and the rightmost term should be non-positive. Therefore, we obtain 
\[
C_{j_n^*} - C_{j_{n+1}^*} = D_{j_n^*} - D_{j_{n+1}^*} = E_{j_n^*} - E_{j_{n+1}^*} = 0 \quad \Rightarrow j_{n+1}^* \in J_n^*, \,\, j_n^* \in J_{n+1}^*.
\]
By choosing $j_{n+1}^*$ as a minimizer of $V_n(j)$ and $j_n^*$ as a minimizer of $V_{n+1}(j)$, we resolve the unsorted pair of indices. Repeating this process yields a sorted sequence $0=j_0^* \leq j_1^* \leq ... \leq j_{N-1}^* \leq j_N^* = m$ and this proves the existence of sorted sequence.
\\
Due to its existence, we can simply obtain a sorted sequence by choosing $j_n^* = \min J_n^*$ for all $n \in [N]$.
\end{proof}

\bigskip

Next, we introduce two propositions. The first proposition (Proposition \ref{prop:Pn_def}) constructs $(P_n^*)_{n=1}^N$ and prove their properties. The second proposition (Proposition \ref{prop:Pn_is_minimax_soln}) then show this $(P_n^*)_{n=1}^N$ is indeed a solution for the minimax program \eqref{eq:Pn_def_minimax} in Algorithm \ref{alg:omni_online}. Note that Property 1 in \ref{prop:Pn_def} statement below proves Property 1 in the original theorem (Theorem \ref{thm:Pn_property}) statement. 

\begin{proposition} \label{prop:Pn_def}
Given a sorted sequence $0=j_0^* \leq j_1^* \leq ... \leq j_{N-1}^* \leq j_N^* = m$ from Lemma \ref{lem:jn_sort}, there exists a distribution $P_n^*$ for $n \in [N]$ of the form
\begin{align}\label{eq:Pn_def}
P_n^* = (1-C_n)\delta_{k_n / m} + C_n\delta_{(k_n+1)/m},
\end{align}
such that the following properties hold:
\begin{enumerate}
  \item $j_{n-1}^* \leq k_n$
  \item $k_n < j_n^*$ if $j_{n-1}^* \neq j_n^*$ and $k_n = j_n^*$ with $C_n=0$ if $j_{n-1}^* = j_n^*$.
  \item The support of $P_n^*$ is contained in $[j_{n-1}^*/m, j_n^*/m]$.
  \item $\E_{q_n \sim P_n^*} \left[ \sum_{i=1}^{m} \sum_{b=1}^B w_{ib}^n \left( \indi{\theta_i < q_n} - \indi{\theta_i < f_b^n(t)} \right) \right] = V_{n-1}^*-V_n^*$.
\end{enumerate}
\end{proposition}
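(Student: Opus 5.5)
The plan is to identify the function whose expectation appears in Property 4 with a difference of the functions $V_{n-1}$ and $V_n$ from Lemma~\ref{lem:jn_sort}, evaluated on the grid. Then $P_n^*$ is built by a discrete intermediate-value argument between the two sorted indices $j_{n-1}^*$ and $j_n^*$.

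\emph{Step 1: the key identity.} Define the step function
\[
G_n(q) := \sum_{i=1}^m \sum_{b=1}^B w_{ib}^n\bigl(\indi{\theta_i<q} - \indi{\theta_i<f_b^n(t)}\bigr).
\]
It is non-decreasing in $q$, and $\indi{\theta_i < j/m} = \indi{i \le j}$. Comparing the definitions in \eqref{eq:Vn_def}, $V_{n-1}(j)$ and $V_n(j)$ differ only in the terms with superscript $s=n$. In $V_{n-1}(j)$ the $s=n$ terms sit in the second sum, $\sum_{i\le j} w_i^n\cdot(\one_B-\one_i^n)$. In $V_n(j)$ they sit in the first sum, $\sum_{i>j} w_i^n\cdot \one_i^n$. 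Subtracting gives
\[
V_{n-1}(j)-V_n(j) = \sum_{i\le j} w_i^n\cdot \one_B - \sum_{i=1}^m w_i^n\cdot\one_i^n = G_n(j/m), \qquad j\in[m]_0 .
\]
For the boundary cases $n=1$ and $n=N$, I would set $V_0(j):=\sum_{i\le j}\sum_{s\ge1} w_i^s\cdot(\one_B-\one_i^s)$ and $V_N(j):=\sum_{i>j}\sum_{s\le N} w_i^s\cdot\one_i^s$. These are the natural extensions of \eqref{eq:Vn_def}, they satisfy $V_0(0)=0=V_0^*$ and $V_N(m)=0=V_N^*$, and the identity above continues to hold for them. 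Since $j_0^*=0$ and $j_N^*=m$, these are exactly the indices used below, so nothing breaks at the ends.

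\emph{Step 2: sandwiching the target.} Let $\Delta_n:=V_{n-1}^*-V_n^*$. At $j=j_n^*$ we have $V_n(j_n^*)=V_n^*$ and $V_{n-1}(j_n^*)\ge V_{n-1}^*$, so $G_n(j_n^*/m)\ge \Delta_n$. At $j=j_{n-1}^*$ we have $V_{n-1}(j_{n-1}^*)=V_{n-1}^*$ and $V_n(j_{n-1}^*)\ge V_n^*$, so $G_n(j_{n-1}^*/m)\le\Delta_n$. For the boundary cases I would check directly, for instance $G_1(0)=-V_1(0)\le -V_1^*$; this is the part most likely to hide sign or indexing slips.

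\emph{Step 3: constructing $P_n^*$.} If $j_{n-1}^*=j_n^*$, the two inequalities force $G_n(j_n^*/m)=\Delta_n$. Take $k_n=j_n^*$ and $C_n=0$.

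Otherwise $j_{n-1}^*<j_n^*$. Let $k_n$ be the largest $k\in\{j_{n-1}^*,\dots,j_n^*-1\}$ with $G_n(k/m)\le\Delta_n$; this set is nonempty by Step 2. Then $G_n((k_n+1)/m)\ge \Delta_n$: either $k_n+1=j_n^*$, or $k_n+1$ is in the range and fails the defining inequality by maximality. Set
\[
C_n=\frac{\Delta_n-G_n(k_n/m)}{G_n((k_n+1)/m)-G_n(k_n/m)}\in[0,1],
\]
with $C_n=0$ if the denominator vanishes (in that case both values already equal $\Delta_n$).

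By linearity of expectation, $\E_{q_n \sim P_n^*}[G_n(q_n)]=\Delta_n$, which is Property 4. Properties 1 and 2 hold by the choice $j_{n-1}^*\le k_n<j_n^*$. Property 3 follows because the support $\{k_n/m,(k_n+1)/m\}$ lies in $[j_{n-1}^*/m,j_n^*/m]$. The main obstacle is Step 1 together with the boundary conventions for $n\in\{1,N\}$; once the identity $G_n(j/m)=V_{n-1}(j)-V_n(j)$ is in place, the rest is the routine monotone interpolation above.
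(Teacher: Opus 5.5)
Your proposal is correct and follows essentially the same route as the paper. The paper's identity $\sum_{i=1}^m w_i^n\cdot\one_i^n + V_{n-1}(j) = \sum_{i=1}^{j} w_i^n\cdot\one_B + V_n(j)$ is exactly your $G_n(j/m)=V_{n-1}(j)-V_n(j)$, and the paper uses it, as you do, to sandwich $V_{n-1}^*-V_n^*$ between the values at $j_{n-1}^*$ and $j_n^*$ before interpolating between two adjacent grid points; your choice of the largest admissible $k$ merges the paper's second and third cases, and your explicit $V_0$, $V_N$ make rigorous the boundary cases $n\in\{1,N\}$ that the paper leaves implicit.
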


\begin{proof}[Proof of Proposition \ref{prop:Pn_def}]
  We use the notation $w_i^s$, $\one_i^s$, and $\one_B$ from the proof of Lemma~\ref{lem:jn_sort}. 
  Note that the quantity $V_n^*$ defined in Lemma \ref{lem:jn_sort} is equal to $V_n(j_n^*)$ for all $n \in [N-1]$ and that $V_n(j) \geq V_n(j_n^*)$ for all $j \in [m]_0$.
  \\
  Using the fact that $\indi{\theta_i < k/m} = \indi{i \leq k}$, the last property in the proposition statement is equivalent to 
  \begin{align*} 
  & V_{n-1}^* - V_n^* \\
  & = \E_{q_n \sim P_n^*} \left[ \sum_{i=1}^{m} \sum_{b=1}^B w_{ib}^n \left( \indi{\theta_i < q_n} - \indi{\theta_i < f_b^n(t)} \right) \right] \nonumber\\
  & = (1-C_n) \sum_{i=1}^{m} \sum_{b=1}^B w_{ib}^n \left( \indi{i \leq k_n} - \indi{\theta_i < f_b^n(t)} \right) \\
  & \qquad\qquad\qquad + C_n \sum_{i=1}^{m} \sum_{b=1}^B w_{ib}^n \left( \indi{i \leq k_n+1} - \indi{\theta_i < f_b^n(t)} \right) \nonumber\\
  & = (1-C_n) \sum_{i=1}^{m} w_{i}^n \cdot \left( \indi{i \leq k_n} \one_B - \one_i^n \right) + C_n \sum_{i=1}^{m} w_{i}^n \cdot \left( \indi{i \leq k_n+1} \one_B - \one_i^n \right) \nonumber\\
  & = \sum_{i=1}^{k_n} w_i^n \cdot \one_B + C_n w_{k_n}^n \cdot \one_B - \sum_{i=1}^{m} w_i^n \cdot \one_i^n.
  \end{align*}
  The equality between the first and the last line is equivalent to
  \begin{align}
  \sum_{i=1}^{k_n} w_i^n \cdot \one_B + C_n w_{k_n}^n \cdot \one_B
  = \sum_{i=1}^{m} w_i^n \cdot \one_i^n + V_{n-1}^* - V_n^* 
  = \sum_{i=1}^{j_{n-1}^*} w_i^n \cdot \one_B + V_n(j_{n-1}^*) - V_n(j_n^*), \label{eq:Pn_eq}
  \end{align}
  where we used the identity $\sum_{i=1}^{m} w_i^n \cdot \one_i^n + V_{n-1}(j) = \sum_{i=1}^{j} w_i^n \cdot \one_B + V_n(j)$ evaluated at $j=j_{n-1}^*$.
  \smallskip \\
  Now we divide the cases into three cases. 
  First, consider the case where $j_{n-1}^* = j_n^*$. In this case, simply $k_n = j_{n-1}^*$ with $C_n=0$ satisfies the equation \eqref{eq:Pn_eq}. Therefore, $P_n^* = \delta_{j_{n-1}^*/m}$ satisfies the conditions.
  Second, consider the case where $j_{n-1}^* < j_n^*$ but $V_n(j_{n-1}^*) = V_n(j_n^*)$. $k_n = j_{n-1}^*$ with $C_n=0$ again satisfies the equation \eqref{eq:Pn_eq} and $k_n = j_{n-1}^* < j_n^*$ in this case.
  Lastly, consider the case where $V_n(j_{n-1}^*) > V_n(j_n^*)$. We have 
  \begin{align*}
    \sum_{i=1}^{j_{n-1}^*} w_i^n \cdot \one_F
    & < \sum_{i=1}^{j_{n-1}^*} w_i^n \cdot \one_F + V_n(j_{n-1}^*) - V_n(j_n^*) \\
    & = \sum_{i=1}^{m} w_i^n \cdot \one_i^n + V_{n-1}^* - V_n^* \\
    & = \sum_{i=1}^{j_{n}^*} w_i^n \cdot \one_F + V_{n-1}(j_{n-1}^*) - V_{n-1}(j_n^*) 
    \leq \sum_{i=1}^{j_{n}^*} w_i^n \cdot \one_F,
  \end{align*}
  
  where two equalities used the identity $\sum_{i=1}^{m} w_i^n \cdot \one_i^n + V_{n-1}(j) = \sum_{i=1}^{j} w_i^n \cdot \one_B + V_n(j)$ evaluated at $j=j_n^*-1$ and $j=j_n^*$ respectively.
  This implies that there exists a $j_{n-1}^* \leq k_n < j_n^*$ such that 
  \begin{align}\label{eq:Pn_kn_condition}
    \sum_{i=1}^{k_n} w_i^n \cdot \one_B < \sum_{i=1}^{m} w_i^n \cdot \one_i^n - V_n^* + V_{n-1}^* \quad \text{and} \quad \sum_{i=1}^{k_n+1} w_i^n \cdot \one_B \geq \sum_{i=1}^{m} w_i^n \cdot \one_i^n - V_n^* + V_{n-1}^*.
  \end{align}
  With such $k_n$, we define $P_n^*$ with 
  \begin{align}\label{def:cn}
  C_n = \frac{1}{w_{k_n+1}^n \cdot \one_B} \left( \sum_{i=1}^{m} w_i^n \cdot \one_i^n - V_n^* + V_{n-1}^* - \sum_{i=1}^{k_n} w_i^n \cdot \one_B \right),
  \end{align}
  and this satisfies $\text{supp}(P_n^*) \in [j_{n-1}^*/m, j_n^*/m]$ since $k_n+1 \leq j_n^*$.
\end{proof}

\bigskip

The following theorem shows that $(P_n^*)_{n=1}^N$ is a solution for the minimax program \eqref{eq:Pn_def_minimax} in Algorithm \ref{alg:omni_online}.

\begin{proposition}\label{prop:Pn_is_minimax_soln} 
Given quantile levels $\tau_1 < \tau_2 < ... < \tau_N$ and quantile forecasts ${f}_b^n(t)$ for $b \in [B]$, $n \in [N]$, define $V_n^*$ for $n \in [N-1]$ as in equation \eqref{eq:Vn_def}, with $V_0^* = V_N^* := 0$. Due to Lemma \ref{lem:jn_sort}, there exists an increasing sequence $\{j_n^*\}_{n=1}^{N-1}$ where $j_n^*$ is an index that minimizes $V_n(j)$. Define $j_0^* = 0$ and $j_N^* = m$.
\smallskip \\
Then, $P_n^*$ for $n \in [N]$ given by \eqref{eq:Pn_def} with 
\[
P^{y*} = \tau_1 \delta_0 + (1-\tau_N)\delta_1 + \sum_{n=1}^{N-1}(\tau_{n+1}-\tau_n)\delta_{j_n^*/m}
\]
is a solution of the minimax program \eqref{eq:Pn_def_minimax} and the value of the minimax program is 
\[
-\sum_{n=1}^{N-1}(\tau_{n+1}-\tau_n)V_n^* \leq 0.
\]
\end{proposition}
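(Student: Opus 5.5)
We will verify that the pair $((P_n^*)_n, P^{y*})$ is a saddle point of the bilinear objective
\[
\Phi(\mathbf{P},P^y) := \sum_{i,n,b} w_{ib}^n\,\E_{q_n\sim P_n,\,Y\sim P^y}\bigl[\ell_{\tau_n,\theta_i}(q_n,Y)-\ell_{\tau_n,\theta_i}(f_b^n(t),Y)\bigr].
\]
Since $\Phi$ is bilinear in $(\mathbf{P},P^y)$ over products of simplices, it suffices to establish two inequalities. First, $\Phi(\mathbf{P}^*,\delta_y)\le v$ for every grid point $y=k/m$. Second, $\Phi(\mathbf{P},P^{y*})\ge v$ for every $\mathbf{P}$, where $v:=-\sum_{n=1}^{N-1}(\tau_{n+1}-\tau_n)V_n^*$. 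Together these give both the minimax value and optimality of $\mathbf{P}^*$.

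The basic tool is the identity $\ell_{\tau,\theta}(q,y)-\ell_{\tau,\theta}(f,y)=(\one(\theta<q)-\one(\theta<f))(\one(y\le\theta)-\tau)$. This holds because $\one(y<q)(\one(\theta<q)-\one(\theta<y))$ equals $\one(y\le\theta<q)$. With it, $\Phi$ becomes
\[
\sum_{n}\sum_{i}\E\bigl[\one(Y\le\theta_i)-\tau_n\bigr]\,A_{in}(P_n), \qquad A_{in}(P_n):=\sum_b w_{ib}^n\bigl(P_n(q>\theta_i)-\one(\theta_i<f_b^n)\bigr).
\]

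\emph{Second inequality (fixing $P^{y*}$).} On the grid, $P^{y*}(Y\le\theta_i)=\tau_1+\sum_{n'}(\tau_{n'+1}-\tau_{n'})\one(j_{n'}^*<i)$. Hence $P^{y*}(Y\le\theta_i)-\tau_n$ is a telescoping sum of $\pm(\tau_{s+1}-\tau_s)$ terms, with signs determined by whether $i\le j_s^*$ and whether $s<n$. Swapping sums, we will rewrite $\Phi(\mathbf{P},P^{y*})$ as $\sum_{s=1}^{N-1}(\tau_{s+1}-\tau_s)\,G_s(\mathbf{P})$. Here $G_s$ collects, for $n\le s$ and $i>j_s^*$, the terms $A_{in}$ with coefficient $+$, and for $n>s$ and $i\le j_s^*$, the terms $A_{in}$ with coefficient $-$. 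Using $P_n(q>\theta_i)\in[0,1]$, each $G_s$ is bounded below by dropping the $P_n$ parts: drop the nonnegative contributions for $n\le s$, and use $-P_n(q>\theta_i)\ge -1$ for $n>s$. This yields exactly $G_s\ge -V_s(j_s^*)=-V_s^*$, giving $\Phi(\mathbf{P},P^{y*})\ge v$. The key is that $V_s(j)$ in \eqref{eq:Vn_def} is precisely this worst-case residual.

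\emph{First inequality (fixing $\mathbf{P}^*$).} For $Y=\delta_y$, write $\Phi(\mathbf{P}^*,\delta_y)=\sum_n\bigl[\sum_{i\ge y m+1}A_{in}-\tau_n\sum_i A_{in}\bigr]$. By property 4 of Proposition~\ref{prop:Pn_def}, $\sum_i A_{in}(P_n^*)=V_{n-1}^*-V_n^*$. Summation by parts then converts $-\sum_n\tau_n(V_{n-1}^*-V_n^*)$ into $v$. It remains to show $\sum_n\sum_{i>ym}A_{in}(P_n^*)\le0$ for every $y$. We will exploit the support property (property 3) and the ordering $j_{n-1}^*\le k_n<j_n^*$. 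Let $n_0$ be the level with $j_{n_0-1}^*\le ym< j_{n_0}^*$. For $n<n_0$, $P_n^*$ lies below $y$, so $P_n^*(q>\theta_i)=0$ for $i>ym$ and those terms are $\le 0$. For $n>n_0$, $P_n^*$ lies above $y$, so $P_n^*(q>\theta_i)=1$ for $i\le ym$. We can then rewrite $\sum_{i>ym}A_{in}=\sum_i A_{in}-\sum_{i\le ym}A_{in}$, and the quantities combine to $V_{n_0-1}(ym)$-type expressions minus $V^*$. Concretely, we expect $\sum_n\sum_{i>ym}A_{in}(P^*_n)$ to equal $V^*_{n_0-1}-V_{n_0-1}(ym)$ or $V^*_{n_0}-V_{n_0}(ym)$, up to the fractional mass $C_{n_0}$ of the straddling level. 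Each of these is $\le0$ by minimality of $V^*$.

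\textbf{Main obstacle.} The delicate step is the straddling level $n_0$, where $P_{n_0}^*$ may put mass on both sides of $y$. There we will use the defining equation \eqref{eq:Pn_eq} for $C_{n_0}$, together with the identity $\sum_i w_i^n\cdot\one_i^n+V_{n-1}(j)=\sum_{i\le j}w_i^n\cdot\one_B+V_n(j)$. The goal is to express the sum as a convex combination of $V^*_{n_0-1}-V_{n_0-1}(ym)$ and $V^*_{n_0}-V_{n_0}(ym)$, with careful index bookkeeping. Finally, $v\le0$ holds trivially since $V_n^*\ge0$.
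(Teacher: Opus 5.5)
Your proposal is correct, and it certifies the saddle point differently from the paper. The paper shows that $\mathbf{P}^*$ and $P^{y*}$ are mutual best responses:
\begin{itemize}
\item with $\mathbf{P}^*$ fixed, it computes the coefficient $A_j$ of $P^y(j/m)$ and proves $A_j\le 0$ and $A_{j_n^*}=0$;
\item with $P^{y*}$ fixed, it appeals to propriety of $\ell_{\tau_n,\theta_i}$, since every support point of $P_n^*$ is a $\tau_n$-quantile of $P^{y*}$;
\item it then evaluates $\Phi(\mathbf{P}^*,P^{y*})$ separately.
\end{itemize}
You instead sandwich $v$. Your first inequality is the paper's computation: $\sum_n\sum_{i>k}A_{in}(P_n^*)$ is exactly the paper's $A_k$. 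However, you never need $A_{j_n^*}=0$, because equality at the saddle point follows from your two inequalities. Your second inequality replaces propriety by telescoping $P^{y*}(Y\le\theta_i)-\tau_n$ and bounding $G_s\ge -V_s(j_s^*)$ for every $\mathbf{P}$. This is self-contained, handles ties among the $j_s^*$ automatically, and shows why the value is $-\sum_s(\tau_{s+1}-\tau_s)V_s^*$: $V_s(j)$ is exactly the worst-case residual of $G_s$. The paper's route is shorter on that side because it delegates the work to propriety.

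The step you flag as the main obstacle is easy, and you should not aim for a genuine convex combination. The atoms $k_{n_0}/m$ and $(k_{n_0}+1)/m$ of $P^*_{n_0}$ are adjacent grid points, $y=k/m$ is a grid point, and the $\theta_i$ sit at half-grid points. Hence only two cases arise.

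If $k\le k_{n_0}$, then $P^*_{n_0}(q>\theta_i)=1$ for all $i\le k$. Writing $\sum_{i>k}A_{in_0}=(V^*_{n_0-1}-V^*_{n_0})-\sum_{i\le k}A_{in_0}$ gives exactly
\[
\sum_{n}\sum_{i>k}A_{in}(P_n^*)=V^*_{n_0-1}-V_{n_0-1}(k)\le 0 .
\]
If $k\ge k_{n_0}+1$, then $P^*_{n_0}(q>\theta_i)=0$ for all $i>k$, and the sum equals exactly $V^*_{n_0}-V_{n_0}(k)\le 0$.

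So you need nothing about $C_{n_0}$ beyond property 4 of Proposition~\ref{prop:Pn_def}. In fact, weighting the two expressions by $(1-C_{n_0},C_{n_0})$ does not give an identity when $k=k_{n_0}$ and $C_{n_0}>0$.

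Three small points complete the bookkeeping. First, define $V_0(\cdot)$ and $V_N(\cdot)$ by the same formula as \eqref{eq:Vn_def}, so that $V_0^*=V_N^*=0$ are still their minima. Second, $n_0$ is well defined for $k<m$ because the intervals $[j^*_{n-1},j^*_n)$ partition $\{0,\dots,m-1\}$. Third, the case $k=m$ is trivial because the sum over $i>m$ is empty.
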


\begin{proof}[Proof of Proposition \ref{prop:Pn_is_minimax_soln}]
  Denote the minimax objective function by 
  \[O\left((P_n)_{n=1}^N, P^y\right) = \sum_{n=1}^N \sum_{i=1}^{m} \sum_{b=1}^B w_{ib}^n \E_{q_n \sim P_n, Y \sim P^y}[\ell_{\tau_n, \theta_i}(q_n,Y) - \ell_{\tau_n, \theta_i}(f_b^n(t),Y)].
  \]
It suffices to show that $P^{y*}$ maximizes $O((P_n^*)_{n=1}^N,P^y)$ over $P^y$, and that $(P_n^*)_{n=1}^N$ minimizes $O((P_n)_{n=1}^N,P^{y*})$ over $P_n\in\Delta_m$, $n\in[N]$.
We start with decomposing $O\left((P_n)_{n=1}^N, P^y\right)$. First, observe the following equality:
\begin{align}
\label{eq:oppy_q}
&\ell_{\tau_n, \theta_i}(q_n,Y) - \ell_{\tau_n, \theta_i}(f_b^n(t), Y) \nonumber\\
&= (\one(Y<q_n) - \tau_n)\big(\one(\theta_i<q_n)-\one(\theta_i<Y)\big) \nonumber\\
& \qquad - (\one(Y<f_b^n(t))-\tau_n)\big(\one(\theta_i<f_b^n(t))-\one(\theta_i<Y)\big) \nonumber\\
&= \one(Y, \theta_i < q_n) - \tau_n \one(\theta_i<q_n) - \one(\theta_i<Y<q_n) \nonumber\\
& \qquad - \one(Y, \theta_i < f_b^n(t)) + \tau_n \one(\theta_i<f_b^n(t)) + \one(\theta_i<Y<f_b^n(t)) \nonumber\\
&= \one(Y \le \theta_i < p) - \tau_n \one(\theta_i<q_n) - \one(Y \le \theta_i < f_b(t))  + \tau_n \one(\theta_i < f_b^n(t)).
\end{align}

Then, $O\left((P_n)_{n=1}^N, P^y\right)$ can be written as
  \begin{align}
  & O\left((P_n)_{n=1}^N, P^y\right) \nonumber\\
  & = \sum_{n=1}^N \sum_{i=1}^{m} \sum_{b=1}^B w_{ib}^n \E_{q_n \sim P_n, Y \sim P^y}[\one(Y \leq \theta_i < q_n) - \tau_n \one(\theta_i<q_n) - \one(Y \leq \theta_i < f_b^n(t))] + C_n,
  \end{align}
  where $C_n =\tau_n \sum_{i=1}^{m} \sum_{b=1}^B w_{ib}^n \indi{\theta_i < f_b^n(t)}$ which does not depend on $P_n$ or $P^y$.
  
  Using this decomposition, we prove $O\left((P_n^*)_{n=1}^N, P^{y*}\right) = \max_{P^y} O\left((P_n^*)_{n=1}^N, P^y\right)$. Let $p_{yj}=P^y(\frac{j}{m})$ for $j=0,...,m$. Then, again using the notation $w_i^s$, $\one_i^s$, and $\one_B$ from the proof of Lemma~\ref{lem:jn_sort}, $O\left((P_n^*)_{n=1}^N, P^y\right)$ is rephrased as the following:
  \begin{align}\label{eq:O_Pstar_Py_decomp}
  O\left((P_n^*)_{n=1}^N, P^y\right) 
  & = \sum_{n=1}^N \E_{q_n \sim P_n^*} \left[\sum_j p_{yj} \sum_{i=j+1}^{m} w_i^n \cdot \left( \indi{\theta_i < q_n} \one_F - \one_i^n \right)\right] + C(P_n^*),
  \end{align}
  where $C(P_n^*) = - \tau_n \E_{q_n \sim P_n^*} \sum_{i=1}^{m} w_i^n \cdot (\indi{\theta_i < q_n} \one_F - \one_i^n)$ only depends on $P_n^*$ and does not depend on $P^y$.
  
  Plugging in $P_n^*$ given as \eqref{eq:Pn_def}, the summand can be expressed as 
  \begin{align*}
  A_j^n 
  &:= \E_{q_n \sim P_n^*} \left[\sum_{i=j+1}^{m} w_i^n \cdot \left( \indi{\theta_i < q_n} \one_F - \one_i^n\right)\right]  \\
  &= \begin{cases}
  -\sum_{i=1}^{j} w_i^n \cdot (\one_B-\one_i^n) - V_n^* + V_{n-1}^* & \quad j \leq k_n^* \\
  - \sum_{i=j+1}^{m} w_i^n \cdot \one_i^n & \quad j > k_n^* .
  \end{cases}
  \end{align*}
  
  This leads to 
  \begin{align*}
  A_j & := \sum_{n=1}^N A_j^n \\
  &= \begin{cases}
  -\sum_{n=1}^N \sum_{i=1}^{j} w_i^n \cdot (\one_B - \one_i^n) + V_0^* & \quad j \leq k_1 \\
  -\sum_{s=1}^n \sum_{i=j+1}^{m} w_i^s \cdot \one_i^s & \\
  \qquad - \sum_{s=n+1}^{N} \sum_{i=1}^{j} w_i^s \cdot (\one_B-\one_i^s) + V_n^* & \quad  j \in (k_n, k_{n+1}], \,\, n \in [N-1] \\
  -\sum_{n=1}^N \sum_{i=j+1}^{m} w_i^n \cdot \one_i^n + V_N^* & \quad j > k_N .
  \end{cases} 
  \end{align*}
  Notice that $A_j$ is non-increasing on $j \leq k_1$ and non-decreasing on $j > k_N$. $A_0 = 0$ and $A_m = 0$ is the maximum value of $A_j$ in these ranges, respectively. Next, observe that $A_j \leq 0$ for all $j=0,1,...,m$ since $V_n^*$ is the minimum value that $\sum_{s=1}^n \sum_{i=j+1}^{m} w_i^s \cdot \one_i^s + \sum_{s=n+1}^{N} \sum_{i=1}^{j} w_i^s \cdot (\one_B-\one_i^s)$ can achieve by its definition. It suffices to show that $A_{j_n^*} = 0$ for $n \in [N-1]$ since $P^{y*}$ only puts mass on these points. Note that $A_{j_0} = A_0 = 0$ and $A_{j_N} = A_m = 0$.
  
  From the first and second properties in the statement of Proposition \ref{prop:Pn_def}, we have $k_n < j_n^* \leq k_{n+1}$ if $j_{n-1}^* \neq j_n^*$ and $j_{n-1}^* = j_n^*= k_n$ if $j_{\nu-1}^* = j_\nu^*$, for $n \in [N-1]$. In the former case, we obtain $A_{j_n^*} = 0$ from $j \in (k_n, k_{n+1}]$ case in the definition of $A_j$ above. In the latter case, $j_n^* = k_n$ does not fall into the range $(k_n, k_{n+1}]$. Still, from $j \in (k_{n-1}, k_{n}]$ case ($j \leq k_1$ case if $n=1$) in the definition of $A_j$ , we have
  \begin{align*}
  A_{j_n^*} 
  & = -\sum_{s=1}^{n-1} \sum_{i=j+1}^{m} w_i^s \cdot \one_i^s - \sum_{s=n}^{N} \sum_{i=1}^{j} w_i^s \cdot (\one_B - \one_i^s) + V_{n-1}^* \\
  & =  - V_{n-1}(j_n^*) + V_{n-1}^* = -V_{n-1}(j_{n-1}^*) + V_{n-1}^* = 0,
  \end{align*}
  where $V_n(j)$ is defined as equation \eqref{eq:Vn_def}. Therefore, $A_{j_n^*} = 0$ for all $n \in [N-1]$ and $O\left((P_n^*)_{n=1}^N, P^{y*}\right) = \max_{P^y} O\left((P_n^*)_{n=1}^N, P^y\right)$. \smallskip
  
  Next, we show that $O\left((P_n^*)_{n=1}^N, P^{y*}\right) = \min_{P_n \in \Delta_m, n\in[N]} O\left((P_n)_{n=1}^N, P^{y*}\right)$. This follows directly from the propriety of $\ell_{\tau_n,\theta_i}$ losses and the third property in Proposition \ref{prop:Pn_def}. Any point in the support of $P_n^*$ is $\tau_n$-level quantile of $P^{y*}$ since $P^{y*}(Y \leq j_{n-1}^*/m) = \tau_n$ and $P^{y*}(Y \geq j_n^*/m) = \tau_n$.
  
  Lastly, we compute the value of the minimax program. Decomposition \eqref{eq:O_Pstar_Py_decomp} can be rewritten as 
  \[
  O\left((P_n^*)_{n=1}^N, P^y\right) = \sum_{j=0}^m p_{yj} A_j + \sum_{n=1}^N C(P_n^*).
  \]
  Since $P^{y*}$ only puts mass on points $j/m$ such that $A_j = 0$, plugging in $P^{y*}$ into the equation above yields
  \begin{align*}
  O\left((P_n^*)_{n=1}^N, P^{y*}\right) 
  &= \sum_{n=1}^N C(P_n^*) = -\sum_{n=1}^N \tau_n \E_{q_n \sim P_n^*} \sum_{i=1}^{m} w_i^n \cdot (\indi{\theta_i < q_n}\one_F - \one_i^n) \\
  &  = -\sum_{n=1}^N \tau_n (V_{n-1}^* - V_n^*) = -\sum_{n=1}^{N-1}(\tau_{n+1}-\tau_n)V_n^* \leq 0,
  \end{align*}
  where the second from the last equality used the fourth property in Proposition \ref{prop:Pn_def}.
\end{proof}

\bigskip

Lastly, we prove the second and third property in Theorem \ref{thm:Pn_property} statment using Proposition \ref{prop:Pn_def} and Proposition \ref{prop:Pn_is_minimax_soln}. 
\begin{proof}[Proof of Property 2 in Theorem \ref{thm:Pn_property}]
  It suffices to prove that $P_{n+1}$ is stochastically larger than $P_n$ since we can couple these two by $U \sim \text{Unif}(0,1)$ and $P_n = F_{P_n}^{-1}(U)$ and $P_{n+1} = F_{P_{n+1}}^{-1}(U)$ where $F_X^{-1}$ denotes a inverse of cumulative distribution function of $X$. 
  
  Combining the first three properties in the statement in Proposition \ref{prop:Pn_def}, we have $k_n < j_n^* \leq k_{n+1}$ if $j_{n-1}^* \neq j_n^*$. In this case, $P_{n+1}^*$ is stochastically larger than $P_n^*$ since $P_n^*$ is supported on $\{k_n / m, (k_n+1)/m\}$ and $k_{n+1} > k_n$. In case $j_{n-1}^* = j_n^*$, $P_n^* \equiv j_{n}^*$ and $P_{n+1}$ is stochastically larger than $P_n$ since $k_{n+1} \geq j_n^*$.
\end{proof}

\bigskip

\begin{proof}[Proof of Property 3 in Theorem \ref{thm:Pn_property}]
  Since $P_n^*$ is stochastically ordered, it suffices to prove that $P_1^* \geq m_1 / m$ and $P_N^* \leq m_2 / m$ almost surely.
  First we prove that $P_1^* \geq m_1 / m$ almost surely. Observe that, for $i \leq m_1$, $\indi{\theta_i < f_b^n(t)} = 1$ for all $b \in [B]$ and $n \in [N]$. This implies that $\one_i^n = \one_B$ for all $n \in [N]$ and $i \leq m_1$. Therefore, for $j \leq m_1$,
  \[
  V_1(j) = \sum_{i=j+1}^m w_i^1 \cdot \one_i^1 + \sum_{i=1}^j \sum_{s=2}^N w_i^s \cdot (\one_B - \one_i^s) 
  = \sum_{i=j+1}^{m_1} w_i^1 \cdot \one_B + \sum_{i=m_1+1}^{m} w_i^1 \cdot \one_i^1
  \]
  holds and $V_1(j)$ is non-increasing in $j$ on $j \leq m_1$. This implies that $V_1^* \leq V_1(m_1) = \sum_{i=m_1+1}^m w_i^1 \cdot \one_i^1$ and thus 
  \[
  \sum_{i=1}^{m_1} w_i^1 \cdot \one_B 
  = \sum_{i=1}^{m_1} w_i^1 \cdot \one_i^1 
  \leq \sum_{i=1}^m w_i^1 \cdot \one_i^1 - V_1^* + V_0^*, 
  \] 
  since $\one_i^s = \one_B$ for $i \leq m_1$ and $V_0^* = 0$. 
  In case the inequality is strict, $k_1$ that satisfies the condition \eqref{eq:Pn_kn_condition} should be at least $m_1$, which proves $P_1^* \geq m_1 / m$ almost surely. In case the equality holds, $k_1+1 = m_1$ since $w_{m_1}^1 \cdot \one_B = \sum_{b=1}^B w_{m_1}^1 > 0$ from the corollary condition. However, $C_1 = 1$ in this case and $P_1^* = \delta_{m_1 / m}$ which satisfies $P_1^* \geq m_1 / m$ almost surely.
  
  Next, we prove that $P_N^* \leq m_2 / m$ almost surely in a similar manner. For $i > m_2$, $\indi{\theta_i < f_b^n(t)} = 0$ for all $b \in [B]$ and $n \in [N]$. This implies that $\one_i^n = \one_B$ for all $n \in [N]$ and $i > m_2$. Therefore, for $j \geq m_2$,
  \[
  V_{N_1}(j) = \sum_{i=j+1}^m \sum_{s=1}^{N-1} w_i^s \cdot \one_i^s + \sum_{i=1}^j w_i^N \cdot (\one_B - \one_i^N) 
  = \sum_{i=1}^{m_2} w_i^N \cdot (\one_B - \one_i^N) + \sum_{i=m_2+1}^{m} w_i^N \cdot \one_B
  \]
  holds and $V_{N-1}(j)$ is non-decreasing in $j$ on $j \geq m_2$. This implies that $V_{N-1} \leq V_{N-1}(m_2) = \sum_{i=1}^{m_2} w_i^N \cdot (\one_B - \one_i^N)$ and thus 
  \[
  \sum_{i=1}^m w_i^N \cdot \one_i^N - V_{N}^* + V_{N-1}^* 
  \leq \sum_{i=1}^m w_i^N \cdot \one_i^N + \sum_{i=1}^{m_2} w_i^N \cdot (\one_B - \one_i^N) 
  = \sum_{i=1}^{m_2} w_i^N \cdot \one_B
  \]
  since $\one_i^N = \mathbf{0}$ for $i > m_2$ and $V_N^* = 0$. Because $w_{m_2}^N \cdot \one_B = \sum_{b=1}^B w_{m_2}^N > 0$ from the corollary condition, $k_n$ that satisfies the condition \eqref{eq:Pn_kn_condition} should satisfy $k_N < m_2$, which proves $P_N^* \leq m_2 / m$ almost surely.
\end{proof}

\subsection{Proof of Theorem \ref{thm:omni_guarantee}}\label{pf:omni_guarantee}

\begin{theorem}[Full statement of Theorem \ref{thm:omni_guarantee}]
For a set of quantile levels $\tau_1 < \tau_2 < ... < \tau_N$, let $\cl_1$ denote the class of proper losses for multiple quantile levels defined as equation \eqref{def:cl}. Consider base forecasters $\{f_b\}_{b=1}^B$ where each forecaster $f_b$ provides $\{\tau_n\}_{n=1}^N$-level quantile forecasts $\{f_b^n(t)\}_{n=1}^N$ at each time $t \in [T]$. With the target values $Y_t$, forecaster predictions $f_b^n(t)$ for $t \in [T], n \in [N], b \in [B]$ each taking values in a uniform grid of size $m$, and the learning rate $\eta = c\sqrt{(\log mNB)/T}$, the output $\Pbhat(t) = (\hat{P}_n(t))_{n=1}^N$ from Algorithm \ref{alg:omni_online} achieves the following omniprediction error:
  \begin{align*}
  \sup_{\ell \in \cl_1,\, b \in [B]} \frac{1}{T} \sum_{t=1}^T \E_{q_n \sim \hat{P}_n(t), \forall n} [\ell((q_n)_{n=1}^N, Y_t)] - \ell(f_b(t), Y_t)]
  \leq 2\left(c+\frac{1}{c}\right)\left(\sqrt{\frac{\log mNB}{T}}\right)
  \end{align*}
\end{theorem}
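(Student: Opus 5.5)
The proof combines three ingredients. Lemma~\ref{lem:discretize_multi} reduces the class $\cl_1$ to the finite family of elementary losses. The standard Hedge regret bound controls the weights $w_{ib}^n(t)$. Proposition~\ref{prop:Pn_is_minimax_soln} shows that the minimax step makes the Hedge player's per-round payoff nonpositive.

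\textbf{Step 1: reduce to elementary losses.} Since $\ell \mapsto \E_{q\sim\Pbhat(t)}\ell(q,Y_t)$ is linear, the decomposition argument of Lemma~\ref{lem:discretize_multi} applies verbatim with $q_n(t)$ replaced by an expectation over $\hat P_n(t)$. Each elementary loss $\ell_{\tau_n,\theta}$ is evaluated at grid points, and the $\hat P_n(t)$ are supported on the grid. Hence the left-hand side of the theorem is at most twice
\[
R_T := \sup_{i,n,b}\frac1T\sum_{t=1}^T g_{ib}^n(t), \qquad g_{ib}^n(t):=\E_{q_n\sim\hat P_n(t)}[\ell_{\tau_n,\theta_i}(q_n,Y_t)]-\ell_{\tau_n,\theta_i}(f_b^n(t),Y_t).
\]
It therefore suffices to show $R_T \le (c+1/c)\sqrt{\log(mNB)/T}$.

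\textbf{Step 2: Hedge regret bound.} The update \eqref{eq:w_update} is exponential weights maximizing gains $g_{ib}^n(t)$ over $K=mNB$ experts. These gains lie in $[-1,1]$, since elementary losses take values in $[0,1]$. I will use the standard potential argument. Take $\Phi_t=\log\sum w(t)$ with unnormalized weights, and apply Hoeffding's lemma, or $e^x\le 1+x+x^2$ for $|x|\le1$. This gives, for every expert $(i,n,b)$,
\[
\sum_t g_{ib}^n(t) \le \sum_t \sum_{i',n',b'} w_{i'b'}^{n'}(t) g_{i'b'}^{n'}(t) + \frac{\log K}{\eta} + \eta T.
\]
The precise second-order constant matters only to match $c+1/c$. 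With $\eta=c\sqrt{\log K/T}$, the last two terms total $(c+1/c)\sqrt{T\log K}$. A Hoeffding-type bound gives $\eta T/2$, which is even better. I also note that the initialization $1/(mnB)$ in Algorithm~\ref{alg:omni_online} is meant to be uniform, $1/(mNB)$.

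\textbf{Step 3: the mixture term is nonpositive.} This is the crux. Fix $t$. The mixture payoff $\sum w_{ib}^n(t) g_{ib}^n(t)$ equals the minimax objective \eqref{eq:Pn_def_minimax} evaluated at $\Pbhat(t)$ and the point mass $P^y=\delta_{Y_t}$. Because $\Pbhat(t)$ is the argmin of the max over $P^y$, this quantity is at most the minimax value. One subtlety is that $\delta_{Y_t}$ must lie in the feasible set: $Y_t$ lies on the grid $\{0,1/m,\dots,1\}$, and the proof of Proposition~\ref{prop:Pn_is_minimax_soln} indexes $P^y$ by $j=0,\dots,m$, so this holds.

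Proposition~\ref{prop:Pn_is_minimax_soln} gives the minimax value as $-\sum_n(\tau_{n+1}-\tau_n)V_n^*\le 0$. A more conceptual route is to reverse the order of min and max (von Neumann) and, for each $P^y$, take $P_n$ to be the true $\tau_n$-quantile of $P^y$. Propriety of $\ell_{\tau_n,\theta_i}$ then makes each bracketed difference nonpositive. I would cite the proposition directly but note this intuition.

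\textbf{Conclusion and main obstacle.} Summing Step 3 over $t$ and dividing Step 2 by $T$ gives $R_T\le (c+1/c)\sqrt{\log(mNB)/T}$. Step 1 then yields the stated bound with the factor $2$. I expect the main obstacle to be Step 3: justifying that evaluating at the realized $Y_t$ is dominated by the minimax value, and that this value is nonpositive. Everything else is standard Hedge bookkeeping.
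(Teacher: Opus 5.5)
Your proposal is correct and follows the paper's proof essentially step for step. It uses Lemma~\ref{lem:discretize_multi} for the factor $2$, the Hedge regret bound with error terms $\eta$ and $\log(mNB)/(\eta T)$, and Proposition~\ref{prop:Pn_is_minimax_soln} to make the weighted mixture term nonpositive; your observation that this term is the minimax objective at the feasible point $P^y=\delta_{Y_t}$ makes explicit a step the paper leaves implicit.
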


\begin{proof}
Observe that $\ell_{\tau_n, \theta_i}(x,y) \in \{0, \tau_n, 1-\tau_n\}$ and thus $|\ell_{\tau_n, \theta_i}(x_1,y_1) - S_{\tau_n, \theta_i}(x_2,y_2)| \leq 1$. Without loss of generality, we assume that $Y_t, f_b(t) \in \{0, \frac{1}{m}, ..., 1\}$ for all $t \in [T], b \in [B]$, as this construction extends to any bounded interval $[-B_\ell, B_u]$ via rescaling. 

The regret bound of hedge algorithm (Theorem \ref{appthm:hedge_regret}) gives the following:
\begin{align}
  & \max_{i \in [m], n \in [N], b \in [B]} \frac{1}{T} \sum_{t=1}^T \E_{q_n \sim \hat{P}_n(t), \forall n}\left[ \ell_{\tau_n, \theta_i}(q_n, Y_t) - \ell_{\tau_n, \theta_i}(f_b^n(t), Y_t) \right] \nonumber\\
  & \leq \frac{1}{T} \sum_{t=1}^T \sum_{i=1}^{m} \sumn \sum_{b=1}^B w_{ib}^n(t) \, \E_{q_n \sim \hat{P}_n(t), \forall n}\left[ \ell_{\tau_n, \theta_i}(q_n, Y_t) - \ell_{\tau_n, \theta_i}(f_b^n(t), Y_t) \right] + \eta + \frac{\log mNB}{T\eta}  . 
  \label{ieq:hedge_w_update}
\end{align}

Proposition \ref{prop:Pn_is_minimax_soln} states that the value of the minimax program \eqref{eq:Pn_def_minimax} is non-positive. Putting these together with previous lemmas, we have:
\begin{align*}
  & \sup_{\ell \in \cl_1,\, f \in \cf} \frac{1}{T} \sum_{t=1}^T \E_{q_n \sim \hat{P}_n(t), \forall n} [\ell((q_n)_{n=1}^N, Y_t) - \ell(f(t), Y_t)] \\
  & \underset{\text{Lem. } \ref{lem:discretize_multi}}{\leq} 2\sup_{i \in [m], n \in [N], b \in [B]} \frac{1}{T} \sum_{t=1}^T \E_{q_n \sim \hat{P}_n(t), \forall n} [\ell_{\tau_n, \theta_i}(q_n, Y_t) - \ell_{\tau_n, \theta_i}(f_b^n(t), Y_t)] \\
  & \underset{Ieq. \eqref{ieq:hedge_w_update}}{\leq} \frac{2}{T} \sum_{t=1}^T \sum_{i=1}^{m} \sumn \sum_{b=1}^B w_{ib}^n(t) \, \E_{q_n \sim \hat{P}_n(t), \forall n}\left[ \ell_{\tau_n, \theta_i}(q_n, Y_t) - \ell_{\tau_n, \theta_i}(f_b^n(t), Y_t) \right] \\
  & \qquad \qquad + 2\left(c+\frac{1}{c}\right)\left(\sqrt{\frac{\log mNB}{T}}\right) \\
  & \underset{\text{Prop. } \ref{prop:Pn_is_minimax_soln}}{\leq} 2\left(c+\frac{1}{c}\right)\left(\sqrt{\frac{\log mNB}{T}}\right).
\end{align*}
\end{proof}

\subsection{Proof of Proposition \ref{prop:grid_consistency}} \label{pf:grid_consistency}
It suffices to treat an enlargement by one cell; the general case follows by iterating. Concretely, we compare two scenarios:
\begin{itemize}
  \item[\textup{S1:}] grid $\{0,\tfrac1m,\dots,\tfrac{m}{m}\}$, weights $w_{ib}^n$, $i \in [m]$;
  \item[\textup{S2:}] grid $\{0,\tfrac1m,\dots,\tfrac{m+1}{m}\}$, weights $\tilde w_{ib}^n$, $i \in [m+1]$,
\end{itemize}
and we mark S2 quantities with a tilde.

\smallskip
Consider un-normalized weights $u_{ib}^n(t)$ and their sum $U(t)$:
\[
u_{ib}^n(t)=\frac{1}{mNB}\exp\!\Big(\eta\!\sum_{s=1}^{t-1}\ell_{ib}^n(s)\Big),\quad U(t)=\!\!\sum_{i,b,n}\! u_{ib}^n(t),
\]
where $\ell_{ib}^n(s):=\mathbb{E}_{q_n\sim\hat P_n(s)}[\ell_{\tau_n,\theta_i}(q_n,Y_s)]-\ell_{\tau_n,\theta_i}(f_b^n(s),Y_s)$
is the update exponent in equation \eqref{eq:w_update}, and the analogous $\tilde u,\tilde U$ use initial mass $\tfrac1{(m+1)NB}$. Note that $u_{ib}^n(t) = w_{ib}^n(t)U(t)$ and that $U(t{+}1)=U(t)\,D(t)$, where $D(t)=\sum_{i,b,n}w_{ib}^n(t)e^{\eta\ell_{ib}^n(t)}$ is the normalizing denominator already computed in equation \eqref{eq:w_update}; thus maintaining $U(t)$ requires only one additional computation.

\paragraph{In-range equivalence.} We proceed by induction. We claim the following:
\[
\text{(a) } (\tilde{w}_{ib}^n(s))_{i,b,n} = c(s)({w}_{ib}^n(s))_{i\in[m], b,n} \quad \text{and} \quad \text{(b) } \tilde{\hat{P}}_n(s) = {\hat{P}}_n(s), \quad \forall s \leq t.
\]
First, (a) holds for $s=1$ since $w \equiv \frac{1}{mNB}$ and $\tilde{w} \equiv \frac{1}{(m+1)NB}$. Next, we show that if (a) holds at round $s$, then (b) holds at round $s$. Property 4 of Proposition~\ref{prop:Pn_def} states that
\begin{align}\label{eq:pn_def_property4}
\E_{q_n \sim \tilde{\hat P}_n(t)} \left[ \sum_{i=1}^{m+1} \sum_{b=1}^B \tilde{w}_{ib}^n \left( \indi{\theta_i < q_n} - \indi{\theta_i < f_b^n(t)} \right) \right] = \tilde{V}_{n-1}^*-\tilde{V}_n^*.
\end{align}
Since $f_b^n(t) \leq 1 < \theta_{m+1}$, we have $\indi{\theta_{m+1} < f_b^n(t)} = 0$, so index $m+1$ contributes nothing to $\tilde V_n(j)$ (eq. \eqref{eq:Vn_def}) for $j\le m$, and $\tilde V_n(m{+}1)-\tilde V_n(m)=\sum_{s=n+1}^N\sum_b\tilde w_{m+1, b}^s > 0$. This gives $\tilde{j}_n^* = j_n^*$ and $\tilde{V}_n^* = c(s)V_n^*$ at round $s$. The proof of Proposition~\ref{prop:Pn_def} implies $\tilde{k}_n = k_n \leq m$ and $\tilde{C}_n = C_n$ (eq. \eqref{def:cn}), and thus (b) holds at round $s$.
Lastly, we show that if (b) holds at round $s$ and $(f_b^n(s))_{b,n}, Y_s \leq 1$, then (a) holds at round $s+1$. This follows from the fact that, for all $i\in[m]$ and all $b, n$, $w_{ib}^n(s)$ and $\tilde{w}_{ib}^n(s)$ are multiplied by the same quantity, $\ell_{ib}^n(s)=\tilde{\ell}_{ib}^n(s)$, in the weight update (eq. \eqref{eq:w_update}). This closes the induction and proves in-range equivalence.

\paragraph{Reconstruction.}
Note that $\tilde{\ell}_{m+1,b}^n(s) = 0$ for $s \leq t$, and thus $\tilde{u}_{m+1,b}^n(s) = \frac{1}{(m+1)NB}$ for $s \leq t$. Also note that
\begin{align*}
\tilde{u}_{ib}^n(s) &= \frac{1}{(m+1)NB}\exp\left(\eta \sum_{s=1}^{t-1} \tilde{\ell}_{ib}^n(s)\right) \\
&= \frac{m}{m+1}\frac{1}{mNB}\exp\left(\eta \sum_{s=1}^{t-1} \ell_{ib}^n(s)\right) = \frac{m}{m+1}u_{ib}^n(s), \quad i \in [m].
\end{align*}
With $u_{ib}^n(s) = U(s)w_{ib}^n(s)$, we can recover $\tilde{w}(s)$ from $w(s)$.

\smallskip
\noindent\textbf{Expansion protocol.}
Within a round, the forecasts $f_b^n(t)$ (hence $\hat P(t)$) precede the target $Y_t$ (hence the weight update). If the first out-of-range value is a target $Y_t$, then $\hat P(t)$ was computed in range and carries no mass at $\tfrac{m+1}{m}$. We reconstruct $\tilde w(t)$ from $w(t)$ before updating. If the first out-of-range value is a forecast $f_b^n(t)$, the enlarged grid is already needed at prediction time, so we reconstruct $\tilde w(t)$ at the start of round $t$.

\subsection{Single quantile level case} \label{appssec:single_q}
In case $N=1$ with single quantile level $\tau$, the minimax program \eqref{eq:Pn_def_minimax} in Algorithm \ref{alg:omni_online} is replaced by 
\begin{align}\label{eq:minimax_single_q}
  \Phat(t) = \arg\min_{P \in \Delta_m} \max_{P^y \in \Delta_m} \sum_{i=1}^{m} \sum_{b=1}^B w_{ib} \,\E_{q\sim P, Y \sim P^y}[\ell_{\tau,\theta_i}(q,Y) - \ell_{\tau,\theta_i}(f_b(t),Y)].
\end{align}
The following proposition provides a closed-form expression for $\Phat(t)$ and show that the value of the minimax program is $0$.

\begin{proposition} \label{prop:minimax_sol_single_q}
Define $B_k := \sum_{i=1}^{k} \sum_{b=1}^B w_{ib} - \sum_{i=1}^{m} \sum_{b=1}^B w_{ib} \one(\theta_i < f_b(t))$ and let $k^* = \max\{k: B_k \leq 0\}$. Note that $B_{k^*} \leq 0$ and $B_{k^*+1}>0$. Then, 
\begin{align*}
P^* = \frac{B_{k^*+1}}{B_{k^*+1} + |B_{k^*}|}\delta_{\frac{k^*}{m}} + \frac{|B_{k^*}|}{B_{k^*+1} + |B_{k^*}|}\delta_{\frac{k^*+1}{m}} 
\quad\text{and}\quad
P^{y*} = \tau \delta_0 + (1-\tau) \delta_1 
\end{align*}
solves equation \eqref{eq:minimax_single_q}, and the value of the minimax program is $0$.
\end{proposition}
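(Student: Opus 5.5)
The plan is a direct saddle-point verification. I will show two things. First, $P^{y*}$ is a best response to $P^*$, and the resulting payoff is $0$. Second, every $P\in\Delta_m$, in particular $P^*$, is a best response to $P^{y*}$, again with payoff $0$. Together these give $\min_P\max_{P^y}\le \max_{P^y}O(P^*,P^y)=0=\min_P O(P,P^{y*})\le \max_{P^y}\min_P$. So the minimax value is $0$ and $(P^*,P^{y*})$ is a saddle point.

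\textbf{Notation.} Write $O(P,P^y)$ for the objective in \eqref{eq:minimax_single_q}. Set $w_i:=\sum_b w_{ib}$ and $a_i:=\sum_b w_{ib}\one(\theta_i<f_b(t))\in[0,w_i]$. For $q=k/m$ we have $\one(\theta_i<q)=\one(i\le k)$ and $\one(Y\le\theta_i)=\one(i>j)$ when $Y=j/m$. Using the pointwise identity \eqref{eq:oppy_q} with $N=1$, I expect
\[
O(P,\delta_{j/m})=\E_{k\sim P}\Big[\sum_{i>j}\big(w_i\one(i\le k)-a_i\big)\Big]-\tau\,\E_{k\sim P}\big[B_k\big],
\]
since $\sum_i (w_i\one(i\le k)-a_i)=B_k$. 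Before anything else I must check that $k^*$ is well defined with $k^*<m$. Here $B_0=-\sum_i a_i\le 0$ and $B_m=\sum_i(w_i-a_i)\ge 0$. If $B_m=0$, then every $a_i=w_i$; in this degenerate case I will take $P^*=\delta_1$, and the argument below goes through with $B_{k^*}=0$. Otherwise $k^*<m$, $B_{k^*}\le0<B_{k^*+1}$, and the weights in $P^*$ are valid probabilities.

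\textbf{Step 1 (best response of $Y$).} The mixing weights of $P^*$ are chosen exactly so that $\E_{k\sim P^*}[B_k]=0$, which kills the $\tau$ term. Hence $O(P^*,\delta_{j/m})=A_j:=\E_{P^*}\sum_{i>j}(w_i\one(i\le k)-a_i)$. We get $A_m=0$ (empty sum) and $A_0=\E_{P^*}B_k=0$. For general $j$ I split into two cases.
\begin{itemize}
\item If $j\le k^*$, subtract the zero quantity $\E_{P^*}B_k$ to obtain $A_j=-\sum_{i\le j}(w_i-a_i)\le 0$, using that $\one(i\le k)=1$ for $i\le j\le k^*$ on the support of $P^*$.
\item If $j\ge k^*+1$, then $\one(i\le k)=0$ for all $i>j$ on the support of $P^*$, so $A_j=-\sum_{i>j}a_i\le0$.
\end{itemize}
Thus $\max_{P^y}O(P^*,P^y)=\max_j A_j=0$, and this maximum is attained by any $P^y$ supported on $\{0,1\}$, in particular by $P^{y*}$.

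\textbf{Step 2 (best response of the forecaster).} For arbitrary $P$, the formula gives $O(P,\delta_0)=(1-\tau)\E_P B_k$ and $O(P,\delta_1)=-\tau\E_P B_k$. Therefore
\[
O(P,P^{y*})=\tau(1-\tau)\E_PB_k-(1-\tau)\tau\E_PB_k=0\quad\text{for every }P.
\]
Alternatively, every point of $[0,1]$ is a $\tau$-quantile of $P^{y*}$, so propriety of each $\ell_{\tau,\theta_i}$ gives the same conclusion. Hence $P^*$ minimizes $O(\cdot,P^{y*})$, and the value is $0$.

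\textbf{Main obstacle.} The main obstacle is the bookkeeping in Step 1. The key trick is to use the identity $\E_{P^*}B_k=0$ to rewrite $A_j$ as a tail sum in the case $j\le k^*$, so that its sign becomes evident. The edge case $B_m=0$ also needs the separate treatment described above.
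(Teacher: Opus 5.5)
Your proposal is correct and follows essentially the same route as the paper's proof. You verify the saddle point directly via decomposition \eqref{eq:oppy_q}, use the fact that the mixing weights of $P^*$ force $\E_{P^*}B_k=0$, and observe that $O(P,P^{y*})$ is constant (zero) in $P$. Two small differences: your explicit two-case evaluation of $A_j$ is a cleaner substitute for the paper's monotonicity-in-$j$ argument, and your separate treatment of the degenerate case $B_m=0$ (where $k^*=m$ and $B_{k^*+1}$ is undefined) covers an edge case the paper glosses over.
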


\begin{proof}
  Let $O(P,P^y) := \sum_{i=1}^{m} \sum_{b=1}^B w_{ib}\E_{q\sim P, Y \sim P^y}[\ell_{\tau,\theta_i}(q,Y) - \ell_{\tau,\theta_i}(f_b(t),Y)]$.
  It suffices to prove that $O(P^*, P^{y*}) = \max_{P^y} O(P^*, P^y) = \min_P O(P, P^{y*})$. Using the decomposition result from equation \eqref{eq:oppy_q}, we have 
\begin{align}\label{eq:oppy_q_single}
    & O(P, P^y) = \sum_{i=1}^{m} \sum_{b=1}^B w_{ib}
    \mathbb{E}_{q\sim P,\, Y \sim P^y} \Big[
    \one(Y \le \theta_i < q)
    - \tau \one(\theta_i < q)
    - \one(Y \le \theta_i < f_b(t))
    \Big] + C_\tau  ,
\end{align}
  where $C_\tau=\tau \sum_{i=1}^{m} \sum_{b=1}^B w_{ib} \one(\theta_i < f_b(t))$ which does not depend on $P$ or $P^y$.
  
  \smallskip
  We first prove $O(P^*, P^{y*}) = \max_{P^y} O(P^*, P^y)$. Let $P^y(\frac{j}{m})=p_{yj}$ for $j=0,...,m$. 
  Using the fact that $\one(\theta_i < k/m) = \one(i \leq k)$, $O(P^*, P^y)$ is rephrased as the following:
  \begin{align}
  &O(P^*, P^y) 
  = \E_{q\sim P^*} \left[\sum_{j=0}^m p_{yj} 
  \left(\sum_{i=j+1}^{m} \sum_{b=1}^B w_{ib} \left(\one(\theta_i < q) - \one(\theta_i < f_b(t))\right) \right) \right] + C(P^*) \nonumber\\
  &\qquad= \frac{B_{k^*+1}}{B_{k^*+1} + |B_{k^*}|} \left[\sum_{j=0}^m p_{yj} 
  \left(\sum_{i=j+1}^{k^*} \sum_{b=1}^B w_{ib} - \sum_{i=j+1}^{m} \sum_{b=1}^B w_{ib} \one(\theta_i < f_b(t)) \right) \right] \nonumber\\
  &\qquad\,\,\,+ \frac{|B_{k^*}|}{B_{k^*+1} + |B_{k^*}|} \left[\sum_{j=0}^m p_{yj} 
  \left(\sum_{i=j+1}^{k^*+1} \sum_{b=1}^B w_{ib} - \sum_{i=j+1}^{m} \sum_{b=1}^B w_{ib} \one(\theta_i < f_b(t)) \right) \right] + C(P^*) \label{eq:opstarpy_q},
  \end{align}
  where $C(P^*)$ is a value that does not depend on $P^y$ and we ignore the term hereafter.
  
  Now observe that $A_k^j := \sum_{i=j+1}^{k} \sum_{b=1}^B w_{ib} - \sum_{i=j+1}^{m} \sum_{b=1}^B w_{ib} \one(\theta_i < f_b(t))$ is non-increasing on $j\leq k$ and non-decreasing on $j \in [k, m)$ since $\one(\theta_i < f_b(t))$ is $0$ or $1$. Also note that $A_k^0 \geq A_k^{m-1}$ if $k=m$ and $A_k^m = 0 \geq A_k^{m-1}$ if $k \neq m$. Therefore, $A_k^j$ achieves maximum at $j=0$ or $j=m$:
  \begin{align*}
  & \max_{\sum_j{p_{yj}}=1}\sum_{j=0}^{m} p_{yj} \left(\sum_{i=j+1}^{k} \sum_{b=1}^B w_{ib} - \sum_{i=j+1}^{m} \sum_{b=1}^B w_{ib} \one(\theta_i < f_b(t))\right) \\
  &= \max_{p_{y0}+p_{ym}=1} p_{y0}\left(\sum_{i=1}^{k} \sum_{b=1}^B w_{ib} - \sum_{i=1}^{m} \sum_{b=1}^B w_{ib} \one(\theta_i < f_b(t))\right) + p_{ym}\cdot 0 \\
  &= \max_{p_{y0}+p_{ym}=1} p_{y0} \cdot B_k + p_{ym} \cdot 0.
  \end{align*}
  
  Putting this together with equation \eqref{eq:opstarpy_q} and the fact that $B_{k^*}\leq 0$ gives
  \begin{align*}
  &\max_{P^y}O(P^*, P^y) \\
  &= \max_{p_{y0}+p_{ym}=1} p_{y0} \left(\frac{B_{k^*+1}}{B_{k^*+1} + |B_{k^*}|} \cdot B_{k*} + \frac{|B_{k^*}|}{B_{k^*+1} + |B_{k^*}|} \cdot B_{k^*+1}\right) + p_{ym}\cdot 0 + C(P^*),
  \end{align*}
  which is achieved by $P^{y*}$ since $P^{y*}$ only puts mass on $j=0, m$. This proves that $O(P^*, P^{y*}) = \max_{P^y} O(P^*, P^y)$.
  
  \smallskip
  Next, we prove $O(P^*, P^{y*}) = \min_P O(P, P^{y*})$. The decomposition \eqref{eq:oppy_q_single} of $O(P, P^y)$ reduces to
  \begin{align*}
O(P, P^{y*}) &= \tau \sum_{i=1}^{m} \sum_{b=1}^B w_{ib}\E_{q\sim P}[(1-\tau)\one(\theta_i < q) - \one(\theta_i < f_b(t))] + \\
  & \qquad (1-\tau) \sum_{i=1}^{m} \sum_{b=1}^B w_{ib}\E_{q\sim P}[-\tau\one(\theta_i < q)] + C_\tau \\
  &= -\tau \sum_{i=1}^{m} \sum_{b=1}^B w_{ib}\one(\theta_i < f_b(t)) + C = 0,
  \end{align*}
  which does not depend on $P$. Thus, $O(P^*, P^{y*}) = \min_P O(P, P^{y*})$.
\end{proof}

\section{Additional algorithmic details} \label{appsec:fast}

In this section, we provide additional algorithmic details.
To start, Algorithm \ref{alg:omni_online_full} provides the full description of our main omniprediction algorithm, Algorithm \ref{alg:omni_online}. 
Appendix \ref{appsubsec:multi_q_fast} discusses an efficient subroutine for computing $V_n^*$ and $j_n^*$, defined in equations \eqref{eq:Vn_def_multi_q}; and Appendix \ref{appsubsec:wql_fast} discusses an efficient subroutine for computing $\theta_n^*$, defined in equations \eqref{eq:Vn_def_wql}.

\begin{algorithm}[htbp]
  \caption{Full description of Algorithm \ref{alg:omni_online}} \label{alg:omni_online_full}
  \begin{algorithmic}[1]
  \State \textbf{Data:} Data stream $(Y_t)_{t=1}^T$, quantile levels $\tau_1 < \tau_2 < ... < \tau_N$, Grid size $m\in \mathbb{N}$, Learning rate $\eta> 0$, Base forecasters: $\{f_b\}_{b=1}^B$ where $f_b$ outputs $f_b^n(t)$ as $\tau_n$-level quantile estimate at time $t$;
  \State \textbf{Initialize:} $w_{ib}^n(1) = \frac{1}{mNB}$, $\forall i \in [m], n \in [N], b \in [B]$;
  \For{$t = 1, \ldots, T$}
    \State \textbf{1) Make prediction at time $t$}:
      \State Define $V_0^* = V_N^* = 0$, $j_0^* = 0$, $j_N^* = m$;
      \State Define $w_i^n = (w_{ib}^n(t))_{b=1}^B$, $\one_i^n = (\indi{\theta_i < f_b^n(t)})_{b=1}^B$ and $\one_B = (1,..,1) \in \R^B$ for $\forall i, n$.
      \For{$n=1, \ldots, N-1$}
      \State Define $V_n(j)$ as 
        \(
        V_n(j) = \sum_{i=j+1}^{m} \sum_{s=1}^{n} w_i^s \cdot \one_i^s + \sum_{i=1}^{j} \sum_{s=n+1}^{N}  w_i^s \cdot (\one_B - \one_i^s)
        \)
      \State Compute $V_n^*$ and $j_n^*$:
        \begin{align} \label{eq:Vn_def_multi_q}
          V_n^* = \min_{j \in [m]} V_n(j), \qquad
          j_n^* = \min\left\{j \mid j\in [j_{n-1}^*, m],  V_n(j) = V_n^* \right\}
        \end{align}
      \EndFor \vspace{-0.2cm}\\
      \For{$n=1, \ldots, N$}
        \If{$V_n(j_{n-1}^*) = V_n(j_n^*)$}
          \State $\hat{P}_n(t) = \delta_{j_{n-1}^*/m}$;
        \Else
          \State Define $B_k := \sum_{i=1}^{k} w_i^n \cdot \one_B - \sum_{i=1}^{m}w_i^n \cdot \one_i^n$
          \State Find $j_{n-1}^* \leq k_n < j_n^*$ s.t. $B_{k_n} \!<\! V_{n-1}^* \!-\! V_n^* \!\leq\! B_{k_n+1}$ (Existence proved in Prop. \ref{prop:Pn_def})
          \State $\hat{P}_n(t) = (1\!-\!C_n)\delta_{k_n/m} + C_n\delta_{((k_n\!+\!1)/m)}$, where $C_n = \left( V_{n\!-\!1}^* \!-\! V_n^* \!-\! B_{k_n} \right) / (w_{k_n}^n \!\cdot\! \one_B)$
        \EndIf
      \EndFor
      \State \textbf{Output} $\Pbhat(t) = (\hat{P}_n(t))_{n=1}^N$;  \vspace{-0.1cm}\\
      \State \textbf{2) Update the weights for each $S_{n,i}$ and $f_b$}:
          \[
          w_{ib}^n(t+1) 
          = \frac{w_{ib}^n(t) \exp\bigl(\eta \big(\mathbb{E}_{q_n \sim \hat{P}_n(t)}[\ell_{\tau_n,\theta_i}(q_n, Y_t)] 
          - \ell_{\tau_n,\theta_i}({f}_b^n(t), Y_t)\big)\bigr)}
          {\sum_{i, n, b} w_{ib}^n(t) \exp\bigl(\eta \big(\mathbb{E}_{q_n \sim \hat{P}_n(t)}[\ell_{\tau_n,\theta_i}(q_n, Y_t)] - \ell_{\tau_n,\theta_i}({f}_b^n(t), Y_t)\big)\bigr)}, \forall i, n, b
          \]
  \EndFor
  \end{algorithmic}
\end{algorithm}

\subsection{Efficient subroutine for Algorithm \ref{alg:omni_online_full}} \label{appsubsec:multi_q_fast}
This subsection introduces Algorithm \ref{alg:compute_vn}, an efficient subroutine to compute $V_n^*$ and $j_n^*$ for each $n \in [N-1]$ in $O(mNB)$ time. Recall the definition of $V_n^*$ from Lemma \ref{lem:jn_sort}:
\[
  V_n^* := \min_{j \in [m]_0} \left( \sum_{i=j+1}^{m} \sum_{s=1}^{n} \sum_{b=1}^B w_{ib}^s \indi{\theta_i < f_b^s(t)} + \sum_{i=1}^{j} \sum_{s=n+1}^{N}  \sum_{b=1}^B w_{ib}^s (1 - \indi{\theta_i < f_b^s(t)})\right) .
\]

\begin{algorithm}[h]
  \caption{Efficient computation of $V_n$ and its minimizer $j_n^*$} \label{alg:compute_vn}
  \begin{algorithmic}[1]
  \State \textbf{Input:} Weights $(w_{ib}^s)_{s \in [N],\, i \in [m],\, b \in [B]}$, indicators $(\indi{\theta_i < f_b^s(t)})_{s \in [N],\, i \in [m],\, b \in [B]}$
  \State \textbf{Output:} Values $(V_n)_{n=1}^{N-1}$ and minimizers $(j_n^*)_{n=1}^{N-1}$
  
  \State \textbf{1) Compute aggregated weights and indicators:}
  \[
  w_i^s = \sum_{b=1}^B w_{ib}^s,
  \qquad
  I_i^s = \sum_{b=1}^B w_{ib}^s \indi{\theta_i < f_b^s(t)}, \qquad \forall i \in [m], s \in [N].
  \]
  
  \State \textbf{2) Compute cumulative sums:}
  \begin{align*}
    \text{FrontSum}[n, i]= \sum_{s=1}^{n} I_i^s, \quad
    \text{BackSum}[n, i] = \sum_{s=n+1}^{N} (w_i^s - I_i^s), \quad \forall i \in [m], n \in [N-1].
  \end{align*}
  
  \State \textbf{3) Find $j_n^*$ for each $n \in [N-1]$:}
  \State Define $j_0^* = 0$. Initialize $\mathrm{cost}[j] = 0, \forall j \in [m]_0$.
  \For{$n = 1,\dots,N-1$}
      \State Compute the cost at $j=j_{n-1}^*$:
      \[
      \mathrm{cost}[j_{n-1}^*] = \sum_{i=j_{n-1}^*+1}^m \text{FrontSum}[n,i]
      \;+\;
      \sum_{i=1}^{j_{n-1}^*} \text{BackSum}[n,i], \qquad
      V_n \gets \mathrm{cost}[j_{n-1}^*]
      \]
      \For{$j = j_{n-1}^*\!+\!1, \dots, m$}
          \State $\mathrm{cost}[j] = \mathrm{cost}[j\!-\!1] - \text{FrontSum}[n, j\!-\!1] + \text{BackSum}[n, j\!-\!1]$
          \State $V_n \gets \min(V_n, \, \mathrm{cost}[j])$
      \EndFor
      \State $j_n^* = \argmin_{j_{n-1}^* \leq j \leq m} \mathrm{cost}[j]$
  \EndFor
  
  \State \textbf{Return:} $(V_n)_{n=1}^{N-1}$ and $(j_n^*)_{n=1}^{N-1}$
  \end{algorithmic}
\end{algorithm}

\subsection{Efficient subroutine for Algorithm \ref{alg:omni-twoplayer_online_wql}} \label{appsubsec:wql_fast}
This subsection introduces Algorithm \ref{alg:compute_theta_n_wql} that computes $\{\theta_n^*\}_{n=0}^N$ in $O(K\log(K))$ time, where $K$ is number of unique values in $\{f_b^n(t)\}_{b,n}$. With this algorithm, now the computational bottleneck lies in computing $V_n^* = V_n(\theta_n^*)$ for each $n \in [N-1]$, which takes $O(N^2B)$ in total. To reduce the time complexity even further, we introduce Algorithm \ref{alg:compute_vn_diff_wql} that computes $V_{n-1}^* - V_n^*$ for all $n \in [N-1]$, which is all we need to to compute the minimax solution $q^G_n(t)$ (See Proposition \ref{prop:theta_n_pn_sorted_wql}), in $O(NB)$ time. \smallskip

We begin with introducing an efficient algorithm to compute $\{\theta_n^*\}_{n=0}^N$ in $O(K\log(K))$ time. Recall the definition of $V_n^*$ in Proposition \ref{prop:theta_n_pn_sorted_wql}:
\[
V_n(\theta) := \sum_{s=1}^{n} \sum_{b=1}^B w_b^s \max(f_b^s(t)-\theta, 0) + \sum_{s=n+1}^{N}\sum_{b=1}^B  w_b^s \max(\theta - f_b^s(t), 0), 
\qquad V_n^* := \min_{\theta \in \R} V_n(\theta).
\]
The key point in efficiently computing $V_n^*$ is that $V_n(\theta)$ is a sum of piecewise linear functions (hinge functions) in $\theta$. This implies that the minimizer $\theta_n^*$ is the point where the slope of $V_n(\theta)$ changes from negative to positive. Another key observation is that the slope of $V_n(\theta)$ only changes at $\{f_b^s(t)\}_{b,s}$ and it increases by $w_b^s$. More precisely, the change of slope at $f_b^s(t)$ is $\sum_{\{(x,z): f_x^z(t) = f_b^s(t)\}} w_x^z$. It is important to note that the amount of change at each $f_b^s(t)$ is same for all $V_n(\theta)$, $n \in [N-1]$.

Then, we observe that the slope of $V_n(\theta)$ at $\theta < \min_{b,n} f_b^n(t)$ is $- \sum_{s=1}^{n} \sum_{b=1}^B w_b^s$. This implies that finding $\theta_n^*$ reduces to find the smallest $f_b^s(t)$ such that cumulative slope increase is larger than or equal to $\sum_{s=1}^{n} \sum_{b=1}^B w_b^s$. Formally, $\theta_n^*$ equals to the smallest $f_b^s(t)$ such that $\sum_{\{(x,z): f_x^z(t) \leq f_b^s(t)\}} w_x^z \geq \sum_{s=1}^{n} \sum_{b=1}^B w_b^s$.

\begin{algorithm}[h]
  \caption{Efficient computation of $\{\theta_n^*\}_{n=0}^N$}
  \label{alg:compute_theta_n_wql}
  \begin{algorithmic}[1]
  \State \textbf{Input:} Weights $(w_b^n)_{n \in [N],\, b \in [B]}$, forecasts $\{f_b^n(t)\}_{n \in [N],\, b \in [B]}$

  \State \textbf{1) Compute cumulative row sums:}
  \State Define $W[n] = \sum_{s=1}^{n} \sum_{b=1}^B w_b^s, \quad n \in [N]$.

  \State \textbf{2) Compute the slope increases at each $f_b^s(t)$:}
  \State Flatten $\{(w_b^n, f_b^n(t))\}_{b,n}$ into pairs $\{(\tilde{w}_k,\tilde{f}_k)\}_{k=1}^{NB}$ and sort them by $\tilde{f}_k$ in nondecreasing order.
  \State Group equal forecast values and let the distinct sorted values be $\{g_k\}_{k=1}^K$ with $g_1 \!<\! g_2 \!<\! \cdots \!<\! g_K$.
  \State For each $k \in [K]$, compute the sum of corresponding weights: $v_k = \sum_{\{(b,n): f_b^n(t) = g_k\}} w_b^n$.
  \State Compute the cumulative sum of $v_k$: $Vcum[k] = \sum_{\ell=1}^{k} v_\ell$.
  
  \State \textbf{3) Find $\theta_n^*$ for each $n \in [N-1]$:}
  \For{$n = 1, \ldots, N-1$}
    \State Binary search on $Vcum$ (sorted b/c $v_k > 0$) to find smallest $k_n^*$ such that $Vcum[k_n^*] \geq W[n]$.
    \State $\theta_n^* \gets g_{k_n^*}$.
  \EndFor
  \end{algorithmic}
\end{algorithm}

\smallskip
Next, we introduce an algorithm that computes $V_{n-1}^* - V_n^*$ for all $n \in [N-1]$ in $O(NB)$ time. Note that naive computation of $V_n^*$ using $\theta_n*$ takes $O(NB)$ time and thus $O(N^2B)$ in total. Before introducing the algorithm, we observe what $V_{n-1}^* - V_n^*$ is:
\begin{align*} 
V_{n-1}(\theta_{n-1}^*) - V_n(\theta_n^*) 
& = \sum_{s=1}^{n-1} \sum_{b=1}^B w_b^s \left(\max(f_b^s(t)-\theta_{n-1}^*, 0) - \max(f_b^s(t)-\theta_n^*, 0)\right) \\
& \quad + \sum_{s=n+1}^{N}\sum_{b=1}^B  w_b^s \left(\max(\theta_{n-1}^* - f_b^s(t), 0) - \max(\theta_n^* - f_b^s(t), 0)\right) \\
& \quad + \sum_{b=1}^B w_b^n \left( \max(\theta_{n-1}^* - f_b^n(t), 0) - \max(f_b^n(t) - \theta_n^*, 0)\right) .
\end{align*}
Each component results in different values depending on whether $f_b^n(t) < \theta_{n-1}^*$ or $\theta_{n-1}^* \leq f_b^n(t) < \theta_n^*$ or $\theta_n^* \leq f_b^n(t)$. Fortunately, some calculations on the quantity above (divide cases, etc.) gives the following simple formula:
\begin{align*}
& V_{n-1}(\theta_{n-1}^*) - V_n(\theta_n^*) \\
& = \sum_{s=1}^N \sum_{b=1}^B w_b^n 
\bigl( \theta_{n-1}^* \, \indi{f_b^s(t) < \theta_{n-1}^*}  
+ f_b^s(t) \,\indi{\theta_{n-1}^* \leq f_b^s(t) < \theta_n^*}  
+ \theta_n^*  \, \indi{\theta_n^* \leq f_b^s(t)} \bigr)  \\
& \quad - \theta_{n-1}^*\left(\sum_{s=1}^{n-1} \sum_{b=1}^B w_b^s\right) - \theta_n^* \left(\sum_{s=n+1}^{N}\sum_{b=1}^B w_b^s\right) - \sum_{b=1}^B w_b^n f_b^n(t).
\end{align*}
Observe that most quantities can be precomputed in $O(NB)$ time. Algorithm \ref{alg:compute_vn_diff_wql} shows the details.

\begin{algorithm}[h]
  \caption{Efficient computation of $V_{n-1}^* - V_n^*$ for all $n \in [N-1]$}
  \label{alg:compute_vn_diff_wql}
  \begin{algorithmic}[1]
  \State \textbf{Input:} Weights $(w_b^n)_{b,n}$, forecasts $\{f_b^n(t)\}_{b,n}$, $(g_k)_{k=1}^K$, $\{\theta_n^* = g_{k_n^*}\}_{n=0}^N$ from Algorithm \ref{alg:compute_theta_n_wql}
  \State \textbf{Output:} $(V_{n-1}^* - V_n^*)_{n=1}^{N-1}$
  \State \textbf{1) Compute cumulative row sums:}
  \State Define $W[n] = \sum_{s=1}^{n} \sum_{b=1}^B w_b^s, \quad n \in [N]$, $W[0] = 0$.
  \State Define $WB[n] = \sum_{s=n+1}^N \sum_{b=1}^B w_b^s, \quad n \in [N-1]$, $WB[N] = 0$. \smallskip
  \State Define $V[k] = \sum_{\{(b,n): f_b^n(t) = g_k\}} w_b^n, \quad k \in [K]$.
  \State Define $VF[k] = \sum_{\{(b,n): f_b^n(t) = g_k\}} w_b^n * f_b^n(t), \quad k \in [K]$.
  \State Define $Vcum[k] = \sum_{\ell=1}^{k} V[k], \quad k \in [K]$, $Vcum[0] = 0$.
  \State Define $VFcum[k] = \sum_{\ell=1}^{k} VF[k], \quad k \in [K]$, $VFcum[0] = 0$. \smallskip

  \State \textbf{2) Compute $V_{n-1}^* - V_n^*$ for each $n \in [N-1]$:}
  \For{$n = 1, \ldots, N-1$}
    \begin{align*}
      V_{n-1}^* - V_n^* 
      & = \theta_{n-1}^* (Vcum[\max(k_{n-1}^*,0)]) + (VFcum[k_n^*-1] - VFcum[\max(k_{n-1}^*,0)]) \\
      & \quad + \theta_n^* (1- Vcum[k_n^*-1]) - \theta_{n-1}^* (W[n-1]) - \theta_n^* (WB[n]) - \sum_{b=1}^B w_b^n f_b^n(t).  
    \end{align*}
    
  \EndFor
  \end{algorithmic}
\end{algorithm}

\subsection{Runtime analysis} 
\label{appsubsec:runtime_analysis}
\begin{figure}[h]
  \centering
  \includegraphics[width=0.9\textwidth]{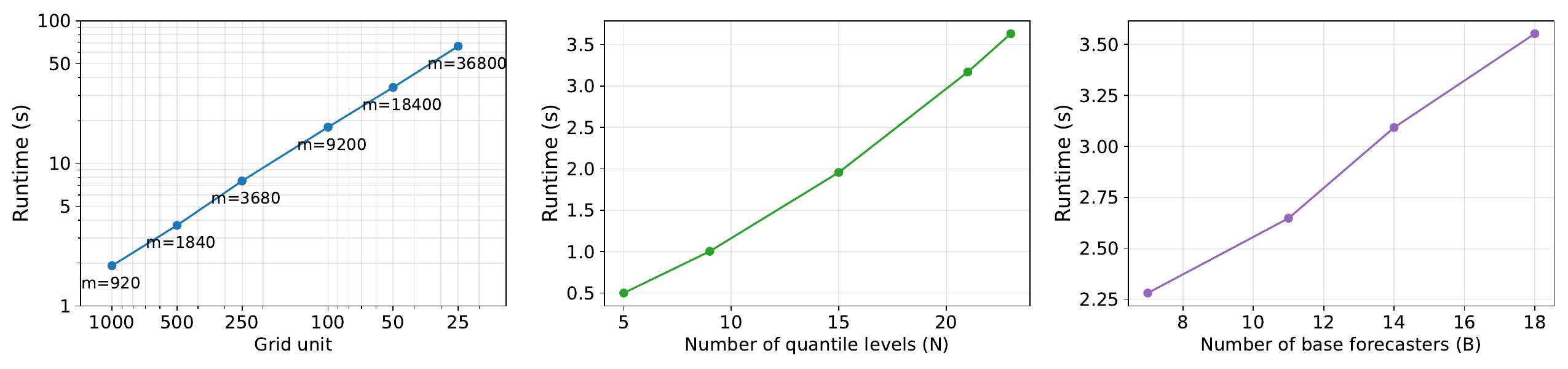}
  \caption{Runtime of Algorithm \ref{alg:omni_online} with $c_O=1$ on U.S.-aggregate, 4-week-ahead prediction data versus grid size $m$ (\textbf{Left}), number of quantile levels $N$ (\textbf{Middle}), and number of base forecasters $B$ (\textbf{Right}). Middle and right plots use grid size $500$.}
  \label{fig:runtime_analysis}
\end{figure}

{We empirically verify the $O(mNB)$ per-step complexity established in Proposition~\ref{prop:omni_subroutine}. Figure~\ref{fig:runtime_analysis} shows the runtime of Algorithm~\ref{alg:omni_online} while varying one of grid size $m$, number of quantile levels $N$, or number of base forecasters $B$ and holding the other two fixed. In each case the runtime grows linearly, consistent with the result of Proposition \ref{prop:omni_subroutine}.}

\section{Omniprediction under weighted quantile losses} \label{appsec:multi_wql}
In this section, we develop omnipredictors for the restricted subset of proper loss functions that can be expressed as weighted combinations of pinball losses. Although this is a much smaller set than the set of all proper losses for multiple quantile levels, which we discussed in Section \ref{sec:theory}, it has the advantage of not requiring any discretization of the target outcome or base forecasters' predictions. 

We introduce two algorithms for omniprediction for weighted quantile losses. In Appendix \ref{subsec:hedge_wql}, we discuss the first algorithm which runs the Hedge algorithm on each quantile level then combines quantile estimates from each level. In Appendix \ref{subsec:twoplayer_wql}, we introduce a two-player game based algorithm that has a similar form to the omniprediction algorithm for multiple quantile levels given in Algorithm~\ref{alg:omni_online}. Finally, in Appendix \ref{subsec:comparison_wql}, we compare the two algorithm in terms of time complexity, quantile crossing, and omniprediction error bound.

\paragraph{Notation.}
Define the pinball loss at quantile level $\tau$ as $\rho_\tau(x) := \tau x\indi{x \geq 0} +(1-\tau)|x|\indi{x < 0} = (\tau - \indi{x < 0})x$. The weighted quantile loss at quantile levels $\{\tau_n\}_{n=1}^N$ is defined as 
\[
\ell\left((x_n)_{n=1}^N, y\right) := \sum_{n=1}^N w_n \rho_{\tau_n}(y - x_n),
\]
where $w_n$ is the weight for the $\tau_n$-level quantile prediction and $\sum_{n=1}^N w_n = 1$. We denote $\cl_{wql}$ as the set of all weighted quantile loss functions and use $\rho_n(x, y)$ to denote $\rho_{\tau_n}(y-x)$ for $n \in [N]$. We define $f_\text{min}(t) = \min_{n \in [N], b \in [B]} f_b^n(t)$ and $f_\text{max}(t) = \max_{n \in [N], b \in [B]} f_b^n(t)$ for each time $t$. \smallskip

We aim to come up with $\mathbf{q}(t) = (q_n(t))_{n=1}^N$ that minimizes the following omniprediction objective:
\[
\sup_{\ell \in \cl_{wql},\, b \in [B]} \frac{1}{T} \sum_{t=1}^T [\ell(\mathbf{q}(t), Y_t) - \ell(f_b(t), Y_t)],
\]
where $f_b(t) := (f_b^n(t))_{n=1}^N$.

\subsection{Hedge algorithm on each quantile level} \label{subsec:hedge_wql}
In this subsection, we introduce the first algorithm that essentially runs Hedge algorithm on each quantile level then provide the omniprediction error bound for this algorithm. We denote the output from Algorithm \ref{alg:wql_hedge} by $\mathbf{q}^H(t) := (q_n^H(t))_{n=1}^N$, where the superscript $H$ denotes it is from the Hedge algorithm.
 
\begin{algorithm}[htbp]
\caption{Hedge algorithm for multiple quantile levels in parallel} \label{alg:wql_hedge}
\begin{algorithmic}[1]
\State \textbf{Data:} Data stream $(Y_t)_{t=1}^T$, Quantile levels $\{\tau_n\}_{n=1}^N$, Learning rate $\eta> 0$, Base forecasters: $\{f_b\}_{b=1}^B$ where $f_b$ outputs $f_b^n(t)$ as $\tau_n$-level quantile estimate at time $t$;
\State \textbf{Initialize:} $w_{b}^n(1) = \frac{1}{B}$, $\forall b \in [B], n \in [N]$;
\For{$t = 1, \ldots, T$}
    \State \textbf{Output:} Output $\mathbf{q}^H(t) := (q_n^H(t))_{n=1}^N$ where $q_n^H(t) = \sum_{b=1}^B w_{b}^n(t) f_b^n(t)$;
    \State Update weights $w_{b}^n$ for $b \in [B], n \in [N]$:
    \[
    w_{b}^n(t+1) 
    = \frac{w_{b}^n(t) \exp\left(-\eta \rho_n(f_b^n(t), Y_t)\right)}{\sum_{b=1}^B w_{b}^n(t) \exp\left(-\eta \rho_n(f_b^n(t), Y_t)\right)}
    \]
\EndFor
\end{algorithmic}
\end{algorithm}

This algorithms yields the following guarantee for each quantile level $\tau_n$:
\begin{align*}
\frac{1}{T}\sum_{t=1}^T \rho_n(q_n^H(t), Y_t) 
& \leq \frac{1}{T}\sum_{t=1}^T w_b^n(t) \rho_n(f_b^n(t), Y_t)  \\
& \leq \min_b \frac{1}{T}\sum_{t=1}^T \rho_n(f_b^n(t), Y_t) + \frac{\eta}{T} \sum_{t=1}^T \sum_{b=1}^B w_{b}^n(t) \rho_n(f_b^n(t), Y_t)^2 + \frac{\log B}{\eta T},
\end{align*}
where the first inequality is due to convexity of $\rho_\tau$ and the second inequality is due to the regret bound of the Hedge algorithm (Theorem \ref{appthm:hedge_regret}). This gives the omniprediction error bound for Algorithm \ref{alg:wql_hedge} as follows:
\begin{align}
    & \sup_{\ell \in \cl_{wql},\, b \in [B]} \frac{1}{T} \sum_{t=1}^T [\ell(\mathbf{q}^H(t), Y_t) - \ell(f_b(t), Y_t)] \nonumber\\
    & \qquad = \max_{n \in [N], b \in [B]} \frac{1}{T} \sum_{t=1}^T [\rho_n(q_n^H(t), Y_t) - \rho_n(f_b^n(t), Y_t)] \nonumber \\
    & \qquad \leq \eta \max_{n\in[N]} \left( \frac{1}{T} \sum_{t=1}^T \sum_{b=1}^B w_{b}^n(t) \rho_n(f_b^n(t), Y_t)^2\right)  + \frac{\log B}{\eta T} \nonumber \\
    & \qquad \leq \eta \max_{n\in[N]} \left( \frac{1}{T} \sum_{t=1}^T \max_{b} \rho_n(f_b^n(t), Y_t)^2  \right) + \frac{\log B}{\eta T} .
    \label{eq:hedge_wql_error_bound}
\end{align}

\subsection{Two-player game based algorithm} \label{subsec:twoplayer_wql}
In this subsection, we introduce a second algorithm that is based on two-player game that is similar to Algorithm \ref{alg:omni_online} in Section \ref{sec:theory}, and then we provide the omniprediction error bound for this algorithm. In the process, we prove that the output from this algorithm does not exhibit quantile crossing. We denote the output from Algorithm \ref{alg:omni-twoplayer_online_wql} by $\mathbf{q}^G(t) := (q^G_n(t))_{n=1}^N$, where the superscript $G$ denotes it is from the two-player Game based algorithm.

\begin{algorithm}[htbp]
    \caption{Two-player game based omniprediction for weighted quantile losses} \label{alg:omni-twoplayer_online_wql}
    \begin{algorithmic}[1]
    \State \textbf{Data:} Data stream $(Y_t)_{t=1}^T$, quantile levels $\tau_1 < \tau_2 < ... < \tau_N$, Learning rate $\eta> 0$, Base forecasters: $\{f_b\}_{b=1}^B$ where $f_b$ outputs $f_b^n(t)$ as $\tau_n$-level quantile estimate at time $t$;
    \State \textbf{Initialize:} $w_{b}^n(1) = \frac{1}{NB}$, $\forall n \in [N], b \in [B]$;
    \For{$t = 1, \ldots, T$}
      \State \textbf{1) Make prediction at time $t$}:
        \State Define $V_0^* = V_N^* = 0$, $\theta_0^* = f_\text{min}(t)$, $\theta_N^* = f_\text{max}(t)$;
        \For{$n=1, \ldots, N-1$}
        \State Define $V_n(\theta)$ as:
          \[
          V_n(\theta) = \sum_{s=1}^{n} \sum_{b=1}^B w_b^s \max(f_b^s(t)-\theta, 0) + \sum_{s=n+1}^{N} \sum_{b=1}^B  w_b^s \max(\theta - f_b^s(t), 0)
          \]
        \State Compute $V_n^*$ and $\theta_n^*$:
          \begin{align} \label{eq:Vn_def_wql}
            V_n^* := \min_{\theta \in [f_\text{min}(t), f_\text{max}(t)]} V_n(\theta), \qquad
            \theta_n^* = \min\left\{\theta \mid \theta\in [\theta_{n-1}^*, f_\text{max}(t)],  V_n(\theta) = V_n^* \right\}
          \end{align}
        \EndFor \vspace{-0.2cm}\\
        \State \textbf{Output} $\mathbf{q}^G(t) := (q^G_n(t))_{n=1}^N$ where 
        \[
        q^G_n(t) = \frac{1}{\sum_{b=1}^B w_b^n(t)} \left( \sum_{b=1}^B w_b^n(t) f_b^n(t) + V_{n-1}^* - V_n^*  \right);
        \] 
        \State \textbf{2) Update the weights for each $\rho_n$ and $f_b$}:
            \[ 
            w_{b}^n(t+1) 
            = \frac{w_{b}^n(t) \exp\bigl(\eta [\rho_n(q^G_n(t), Y_t) - \rho_{n}(f_b^n(t), Y_t)]\bigr)}
            {\sum_{n=1}^N \sum_{b=1}^B w_{b}^n(t) \exp\bigl(\eta [\rho_n(q^G_n(t), Y_t) - \rho_{n}(f_b^n(t), Y_t)]\bigr)}, \quad \forall n \in [N], b \in [B];
            \]
    \EndFor
    \end{algorithmic}
\end{algorithm}

We show that $(q^G_n(t))_{n=1}^N$ does not exhibit quantile crossing.
\begin{proposition}\label{prop:theta_n_pn_sorted_wql}
Define a function $V_n(\theta)$ and a quantity $V_n^*$ for $n \in [N-1]$ as
\begin{align*}
    V_n(\theta) &:= \sum_{s=1}^{n} \sum_{b=1}^B w_b^s \max(f_b^s(t)-\theta, 0) + \sum_{s=n+1}^{N}\sum_{b=1}^B  w_b^s \max(\theta - f_b^s(t), 0) \geq 0 \\
    V_n^* &:= \min_{\theta \in \R} V_n(\theta)  ,    
\end{align*}
and $V_0^* = V_N^* = 0$. Let $J_n^*$ be the set of $\theta$s that minimizes the $V_n(\theta)$. Then, there exists a sorted sequence $f_\text{min}(t) \leq \theta_0^* \leq \theta_1^* \leq ... \leq \theta_{N-1}^* \leq \theta_N^* = f_\text{max}(t)$ such that $\theta_n^* \in J_n^*$ for $n \in [N-1]$.
\smallskip

Additionally, $q_n$ defined as 
\begin{align} \label{eq:pn_def_wql}
q_n = \frac{1}{\sum_{b=1}^B w_b^n(t)} \left( \sum_{b=1}^B w_b^n(t) f_b^n(t) + V_{n-1}^* - V_n^*  \right) 
\end{align}
satisfies $\theta_{n-1}^* \leq q_n \leq \theta_n^*$ for $n \in [N-1]$, and thus $f_\text{min}(t) \leq q_1 \leq q_2 ... \leq q_N \leq f_\text{max}(t)$ holds.
\end{proposition}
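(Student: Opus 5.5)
The plan is to use a single algebraic identity linking consecutive $V_n$'s. It gives both the sortedness of the minimizers and the sandwich bounds on $q_n$. Extend the definition of $V_n(\theta)$ to $n=0$ and $n=N$ with the same formula. Then $V_0(\theta)=\sum_{s,b} w_b^s\max(\theta-f_b^s(t),0)$ is nonnegative, vanishes exactly on $\theta\le f_\text{min}(t)$, and is minimized at $\theta_0^*=f_\text{min}(t)$ with value $V_0^*=0$. Symmetrically, $V_N$ is minimized at $\theta_N^*=f_\text{max}(t)$ with $V_N^*=0$. So the conventions in the statement agree with these extended functions.

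The key identity, from $\max(f-\theta,0)-\max(\theta-f,0)=f-\theta$, is
\[
V_n(\theta)-V_{n-1}(\theta)=\sum_{b=1}^B w_b^n\bigl(f_b^n(t)-\theta\bigr)=F_n-W_n\theta ,
\]
where $W_n:=\sum_b w_b^n$ and $F_n:=\sum_b w_b^n f_b^n(t)$. Hence $V_n$ equals $V_{n-1}$ plus a strictly decreasing affine function. We may assume $W_n>0$, which holds for Hedge weights.

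\textbf{Sortedness.} Each $V_n$ is convex and piecewise linear. Its slope is $\le 0$ for $\theta< f_\text{min}(t)$ and $\ge 0$ for $\theta> f_\text{max}(t)$, so $J_n^*$ is a nonempty closed interval meeting $[f_\text{min}(t),f_\text{max}(t)]$. Take $\theta_n^*:=\min J_n^*$, clipped to $f_\text{min}(t)$ if needed. To show $\theta_{n-1}^*\le\theta_n^*$, run the exchange argument of Lemma~\ref{lem:jn_sort}. Suppose $\theta_n^*<\theta_{n-1}^*$. Optimality of $\theta_{n-1}^*$ for $V_{n-1}$ gives $V_{n-1}(\theta_{n-1}^*)\le V_{n-1}(\theta_n^*)$. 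Adding $F_n-W_n\theta$ to both sides, and using $-W_n\theta_{n-1}^*< -W_n\theta_n^*$, gives $V_n(\theta_{n-1}^*)<V_n(\theta_n^*)$. This contradicts the optimality of $\theta_n^*$. The strict decrease makes the argument cleaner than in the discrete lemma, so no tie-swapping is needed. Ties are the one place to be careful: I must check that choosing the smallest minimizer, or $\min\{\theta\ge\theta_{n-1}^*: V_n(\theta)=V_n^*\}$ as in \eqref{eq:Vn_def_wql}, is consistent. The strict inequality above shows that every minimizer of $V_n$ is $\ge$ every minimizer of $V_{n-1}$ whenever $W_n>0$, which settles it.

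\textbf{Sandwich bounds.} Evaluate the identity at the two neighboring minimizers. At $\theta_n^*$,
\[
V_{n-1}^*\le V_{n-1}(\theta_n^*)=V_n^*-F_n+W_n\theta_n^*,
\]
which rearranges to $q_n=(F_n+V_{n-1}^*-V_n^*)/W_n\le\theta_n^*$. At $\theta_{n-1}^*$,
\[
V_n^*\le V_n(\theta_{n-1}^*)=V_{n-1}^*+F_n-W_n\theta_{n-1}^*,
\]
which gives $q_n\ge\theta_{n-1}^*$. These hold for all $n\in[N]$ thanks to the extended $V_0,V_N$. Chaining them gives $f_\text{min}(t)=\theta_0^*\le q_1\le\theta_1^*\le q_2\le\cdots\le q_N\le\theta_N^*=f_\text{max}(t)$.

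I expect no real obstacle beyond verifying the boundary conventions. The subtle points are the claim that $V_0$ and $V_N$ attain their minima at the endpoints with value $0$, and that the minimizers of the interior $V_n$ can be taken in $[f_\text{min}(t),f_\text{max}(t)]$.
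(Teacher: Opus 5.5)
Your proposal is correct and follows essentially the paper's route. The sandwich bounds come from the identity $V_{n-1}(\theta)-V_n(\theta)=\sum_b w_b^n(\theta-f_b^n(t))$ evaluated at the neighboring minimizers, exactly as in the paper, and your sortedness argument is the continuous analogue of Lemma~\ref{lem:jn_sort}, which the paper cites. Two of your additions are small tidy-ups of details the paper leaves implicit: strictness removes the need for tie-swapping when $W_n>0$, and extending to $V_0,V_N$ covers the endpoint cases $n=1$ and $n=N$ that the final chain needs.
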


\begin{proof}
Since $V_n(\theta)$ shares the same structure with $V_n(j)$ defined in equation \eqref{eq:Vn_def}, existence of a sorted sequence follows the same proof in Lemma \ref{lem:jn_sort}. To prove that such $\theta_n^*$ takes values in $[f_\text{min}(t), f_\text{max}(t)]$ for all $n \in [N-1]$, it suffices to show that $J_n^* \cap [f_\text{min}(t), f_\text{max}(t)] \neq \emptyset$ for all $n \in [N-1]$. On $\theta \leq f_\text{min}(t)$, the second sum in $V_n(\theta)$ is simply zero and thus $V_n(\theta)$ is non-increasing. On $\theta \geq f_\text{max}(t)$, the first sum in $V_n(\theta)$ is zero and thus $V_n(\theta)$ is non-decreasing. Therefore, $J_n^*$ cannot only contain values that are not in $[f_\text{min}(t), f_\text{max}(t)]$. \smallskip

Next, we prove that $\theta_{n-1}^* \leq q_n \leq \theta_n^*$ for $n \in [N-1]$. Observe the identity:
\[
V_{n-1}(\theta) - V_{n}(\theta) = \sum_{b=1}^B w_b^n(t) (\max(\theta - f_b^n(t), 0) - \max(f_b^n(t)-\theta, 0)) = \sum_{b=1}^B w_b^n(t) (\theta-f_b^n(t)),
\] 
which implies that 
\begin{align*}
q_n & = \frac{1}{\sum_{b=1}^B w_b^n(t)}\left( \sum_{b=1}^B w_b^n(t) f_b^n(t) + V_{n-1}^* - V_{n-1}(\theta_n^*) + \sum_{b=1}^B w_b^n(t) (\theta_n^* - f_b^n(t)) \right) \leq \theta_n^*,
\end{align*}
since $V_{n-1}^* - V_{n-1}(\theta_n^*) \leq 0$ by the definition of $V_n^*$. Similarly, 
\begin{align*}
    q_n & = \frac{1}{\sum_{b=1}^B w_b^n(t)}\left( \sum_{b=1}^B w_b^n(t) f_b^n(t) + V_{n-1}(\theta_{n-1}^*) - V_n^* + \sum_{b=1}^B w_b^n(t) (\theta_{n-1}^* - f_b^n(t)) \right) \geq \theta_{n-1}^*,
\end{align*}
which completes the proof.
\end{proof}

Next, we show that $(q_n)_{n=1}^N$ are a solution to the minimax program . 
\begin{theorem}\label{thm:minimax_wql}
    Given quantile levels $\tau_1 < \tau_2 < ... < \tau_N$ and quantile forecasts ${f}_b^n(t)$ for $b \in [B]$, $n \in [N]$, define $V_n^*$ and $\theta_n^*$ for $n \in [N]\cap\{0\}$ as in Proposition \ref{prop:theta_n_pn_sorted_wql} and define $p_n^*$, $n \in [N]$ as in equation \eqref{eq:pn_def_wql}. Then, $P_n^* \equiv q_n$ for $n \in [N]$ with 
    \[
    P^{y*} = \tau_1 \delta_{f_\text{min}(t)} + (1-\tau_N)\delta_{f_\text{max}(t)} + \sum_{n=1}^{N-1}(\tau_{n+1}-\tau_n)\delta_{\theta_n^*}
    \]
    is a solution of the minimax program 
    \[
    \min_{P_1, P_2, ..., P_N \in \cp(\R)} \max_{P^y \in \cp(\R)} \sum_{n=1}^N \sum_{b=1}^B w_b^n(t) \E_{q_n \sim P_n, Y \sim P^y} [\rho_n(q_n, Y) - \rho_n(f_b^n(t), Y)]
    \]
    and the value of the minimax program is 
    \(
    -\sum_{n=1}^{N-1}(\tau_{n+1}-\tau_n) V_n^* \leq 0.
    \)
\end{theorem}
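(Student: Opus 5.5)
The plan is to follow the template of Proposition~\ref{prop:Pn_is_minimax_soln}. I will write $O((P_n),P^y)$ for the minimax objective and verify the saddle-point conditions directly. First, that $P^{y*}$ is a best response to the deterministic forecasts $P_n^*\equiv q_n$. Second, that $(q_n)_{n=1}^N$ is a best response to $P^{y*}$. The value will then be read off at the saddle point.

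\textbf{Step 1: expand the objective.} Using $\rho_\tau(y-x)=(\one(y<x)-\tau)(x-y)$, I write $\rho_n(q,y)=\int \ell_{\tau_n,\theta}(q,y)\,d\theta$. This is the mixture representation of Theorem~\ref{thm:decomp_single_q} with $g(\theta)=\theta$. Integrating identity \eqref{eq:oppy_q} over $\theta$ gives
\[
\rho_n(q,Y)-\rho_n(f,Y)=(q-Y)_+-(f-Y)_+-\tau_n(q-f).
\]
Fix deterministic $q_n$ and let $Y=y$ be a point mass. The objective becomes
\[
A(y)=\sum_{n}\sum_b w_b^n\bigl[(q_n-y)_+-(f_b^n-y)_+\bigr]-\sum_n\tau_n\sum_b w_b^n(q_n-f_b^n).
\]
The last sum does not depend on $y$. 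With $W_n:=\sum_b w_b^n$, the definition \eqref{eq:pn_def_wql} gives $\sum_b w_b^n(q_n-f_b^n)=V_{n-1}^*-V_n^*$. So this term equals $-\sum_n\tau_n(V_{n-1}^*-V_n^*)=-\sum_{n=1}^{N-1}(\tau_{n+1}-\tau_n)V_n^*$, which is exactly the claimed value. It therefore suffices to show that the $y$-dependent part $G(y):=\sum_n\sum_b w_b^n[(q_n-y)_+-(f_b^n-y)_+]$ satisfies $G\le 0$ everywhere, with equality on the support of $P^{y*}$.

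\textbf{Step 2: show $G\le0$ with equality on the support.} Proposition~\ref{prop:theta_n_pn_sorted_wql} gives the ordering $\theta_{n-1}^*\le q_n\le\theta_n^*$. For $y\in[\theta_{n-1}^*,\theta_n^*]$ it follows that $q_s\ge y$ for $s>n$ and $q_s\le y$ for $s<n$. I will then use $(x-y)_+=(x-y)+(y-x)_+$ on the upper block to rewrite $G(y)$ as
\[
G(y)=V_{n-1}^*-V_{n-1}(y)+W_n\bigl[(q_n-y)_+-(q_n-y)\bigr]\cdot(\text{sign bookkeeping}),
\]
or more cleanly as $\max\{V_{n-1}^*-V_{n-1}(y),\,V_n^*-V_n(y)\}$-type expressions. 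The key tool is the telescoping identity $V_{n-1}(\theta)-V_n(\theta)=\sum_b w_b^n(\theta-f_b^n)$ from the proof of Proposition~\ref{prop:theta_n_pn_sorted_wql}. I expect this bookkeeping to give:
\begin{itemize}
\item for $y\le q_n$: $G(y)=V_{n-1}^*-V_{n-1}(y)\le0$;
\item for $y\ge q_n$: $G(y)=V_n^*-V_n(y)\le0$.
\end{itemize}
Both quantities vanish at $y=\theta_{n-1}^*$ and $y=\theta_n^*$ respectively, because these points are minimizers. The boundary cases $y<f_{\min}$ and $y>f_{\max}$ reduce to the monotonicity of $V_n$ outside $[f_{\min},f_{\max}]$ already used in that proposition, with $V_0^*=V_N^*=0$ attained at $\theta_0^*=f_{\min}$ and $\theta_N^*=f_{\max}$. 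Hence $\max_{P^y}O=\max_y A(y)$ is attained by $P^{y*}$.

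\textbf{Step 3: best response of the forecaster.} Against $P^{y*}$, the forecaster's problem decouples across $n$ into minimizing the expected pinball loss $\E\rho_{\tau_n}(Y-q)$, which is minimized by any $\tau_n$-quantile of $P^{y*}$. By construction, $P^{y*}(Y\le\theta_{n-1}^*)\ge\tau_n$ and $P^{y*}(Y\ge\theta_n^*)\ge1-\tau_n$, using the ordering of the $\theta^*$'s. So every point of $[\theta_{n-1}^*,\theta_n^*]$, and in particular $q_n$, is a $\tau_n$-quantile. Randomized $P_n$ cannot improve on this, since the objective is linear in $P_n$.

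\textbf{Main obstacle.} The delicate step is the case bookkeeping in Step 2: establishing $G\le0$ globally and the piecewise representation through the telescoping identity. Ties among the $\theta_n^*$ (when $q_n=\theta_{n-1}^*=\theta_n^*$) must be handled as in the degenerate case of Proposition~\ref{prop:Pn_is_minimax_soln}.
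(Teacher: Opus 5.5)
Your proposal is correct and is essentially the paper's argument. You isolate the constant $-\sum_{n=1}^{N-1}(\tau_{n+1}-\tau_n)V_n^*$ via $\sum_b w_b^n(q_n-f_b^n)=V_{n-1}^*-V_n^*$, then show the $y$-dependent part equals $V_n^*-V_n(y)\le 0$ between consecutive $q$'s (your split of $[\theta_{n-1}^*,\theta_n^*]$ at $q_n$ gives exactly this), vanishing at the atoms of $P^{y*}$, and you get the other side because each $q_n$ is a $\tau_n$-quantile of $P^{y*}$. The piecewise formulas you only ``expect'' in Step 2 do hold: rewrite $(f_b^s-y)_+=(f_b^s-y)+(y-f_b^s)_+$ on the block where $q_s\ge y$, then telescope $\sum_{s\ge n}\sum_b w_b^s(q_s-f_b^s)=V_{n-1}^*$, so you can drop the ``sign bookkeeping'' display.
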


\begin{proof}
  Similar to the proof of Proposition \ref{prop:Pn_is_minimax_soln}, define $O\left((P_n)_{n=1}^N, P^y\right)$ as 
  \[
  O\left((P_n)_{n=1}^N, P^y\right) = \sum_{n=1}^N \sum_{b=1}^B w_b^n(t) \E_{q_n \sim P_n, Y \sim P^y} [\rho_n(q_n, Y) - \rho_n(f_b^n(t), Y)].
  \]
  It suffices to prove
  \[
  O\left((P_n^*)_{n=1}^N, P^{y*}\right)= \max_{P^y} O\left((P_n^*)_{n=1}^N, P^y\right) = \min_{P_n \in \cp(\R), n \in [N]} O\left((P_n)_{n=1}^N, P^{y*}\right).
  \] 
  Observe that $\rho_\tau(x,y) = \tau(y-x) + \max(x-y,0)$. This simplifies $O\left((P_n)_{n=1}^N, P^y\right)$ to 
  \begin{align*}
  &O\left((P_n)_{n=1}^N, P^y\right) \\
  & = \E_{Y \sim P^y} \sum_{n=1}^N \sum_{b=1}^B w_b^n(t) [\tau_n(Y-q_n) + \max(q_n-Y,0) - \tau_n(Y-f_b^n(t)) - \max(f_b^n(t)-q_n,0)] \\
  & = \E_{Y \sim P^y} \sum_{n=1}^N \sum_{b=1}^B w_b^n(t) [\max(q_n-Y,0) - \max(f_b^n(t)-q_n,0)] + C, 
  \end{align*}
  where $C = -\sum_{n=1}^N (V_n^* - V_{n-1}^*) \tau_n = - \sum_{n=1}^{N-1}(\tau_{n+1}-\tau_n) V_n^*$ does not depend on $P^y$. Now we divide cases into $y < q_1$, $q_1 \leq y < q_2$, ..., $q_{N-1} \leq y < q_N$: 
  \begin{align*}
  & \sum_{n=1}^N \sum_{b=1}^B w_b^n(t) [\max(q_n-y,0) - \max(f_b^n(t)-q_n,0)] \\
  & = 
  \begin{cases}
      - \sum_{n=1}^N \sum_{b=1}^B w_b^n \max(y - f_b^n(t), 0) & y < q_1 \\
      - \bigl( \sum_{s=n+1}^N \sum_{b=1}^B w_b^s \max(y - f_b^s(t), 0) \\
      \qquad\quad + \sum_{s=1}^n \sum_{b=1}^B w_b^s \max(f_b^s(t)-y, 0) \bigr) + V_n^* & y \in [q_n, q_{n+1}), \, n \in [N-1]  \\
      - \sum_{n=1}^N \sum_{b=1}^B w_b^n \max(f_b^n(t) - y, 0) & y \geq q_N .
  \end{cases}
  \end{align*}
  In the interval $(-\infty, q_1)$, the quantity is maximized at $y = \theta_0^* = f_\text{min}(t)$, and its maximum value is $0$. In interval $[q_n, q_{n+1})$, the quantity is maximized at $y = \theta_n^*$ (contained in the interval as shown in Proposition \ref{prop:theta_n_pn_sorted_wql}), and its maximum value is $0$. In interval $(q_{N-1}, \infty)$, the quantity is maximized at $y = \theta_N^* = f_\text{max}(t)$, and its maximum value is $0$. Since $P^{y*}$ only puts mass on the points $\{\theta_n^*\}_{n=0}^N$, $O\left((P_n^*)_{n=1}^N, P^{y*}\right) = \max_{P^y} O\left((P_n^*)_{n=1}^N, P^y\right) = C = - \sum_{n=1}^{N-1}(\tau_{n+1}-\tau_n) V_n^* \leq 0$.
  
  Next, $O\left((P_n^*)_{n=1}^N, P^{y*}\right) = \min_{P_n \in \cp(\R), n \in [N]} O\left((P_n)_{n=1}^N, P^{y*}\right)$ follows directly from the propriety of $\rho_\tau$ and the fact that $q_n$ is $\tau_n$-level quantile of $P^{y*}$. 
\end{proof}

Putting all these together, the omniprediction error bound for Algorithm \ref{alg:omni-twoplayer_online_wql} is the following:
\begin{align}
& \sup_{\ell \in \cl_{wql},\, b \in [B]} \frac{1}{T} \sum_{t=1}^T [\ell(\mathbf{q}^G(t), Y_t) - \ell(f_b(t), Y_t)] \nonumber\\
&\quad \leq \, \eta \left(\frac{1}{T} \sum_{t=1}^T \max_{n,b} \left(\rho_{n}(q^G_n(t), Y_t) - \rho_{n}(f_b^n(t), Y_t)\right)^2\right) + \frac{\log BN}{\eta T}. \label{eq:omni_twoplayer_wql_error_bound}
\end{align}

\begin{proof}
  \begin{align*}
    & \sup_{\ell \in \cl_{wql},\, b \in [B]} \frac{1}{T} \sum_{t=1}^T [\ell(\mathbf{q}^G(t), Y_t) - \ell(f_b(t), Y_t)] \\
    & = \max_{n \in [N], b \in [B]} \frac{1}{T} \sum_{t=1}^T [\rho_n(q^G_n(t), Y_t) - \rho_n(f_b^n(t), Y_t)] \\
    & \leq \frac{1}{T} \sum_{t=1}^T \sum_{n=1}^N \sum_{b=1}^B w_b^n(t) [\rho_n(q^G_n(t), Y_t) - \rho_n(f_b^n(t), Y_t)] \\
    & \quad + \frac{\eta}{T} \sum_{t=1}^T \sum_{n=1}^N \sum_{b=1}^B w_b^n(t) \left(\rho_{n}(q^G_n(t), Y_t) - \rho_{n}(f_b^n(t), Y_t)\right)^2 + \frac{\log BN}{\eta T} \\
    & \underset{\text{Thm. } \ref{thm:minimax_wql}}{\leq} \eta \left(\frac{1}{T} \sum_{t=1}^T \max_{n,b} \left(\rho_{n}(q^G_n(t), Y_t) - \rho_{n}(f_b^n(t), Y_t)\right)^2\right) + \frac{\log BN}{\eta T}.
  \end{align*}
\end{proof}

\subsection{Comparison between two algorithms} \label{subsec:comparison_wql}
We conclude the section by comparing the two algorithms in terms of time complexity, quantile crossing, and omniprediction error bound.

\paragraph{Time complexity.}
The time complexity of Algorithm \ref{alg:wql_hedge} is $O(NB)$ per time step. Algorithm \ref{alg:omni-twoplayer_online_wql} is more complicated, where the main computational bottleneck lies in computing $V_n^*$ and $\theta_n^*$ for each $n \in [N-1]$ (see Equation~\eqref{eq:Vn_def_wql}). A naive computation of the minimizers $\{\theta_n^*\}$ requires $O(N^3 B^2)$ per each time step. Appendix~\ref{appsubsec:wql_fast} introduces a subroutine that computes all $\{\theta_n^*\}$ and $\{V_n^*\}$ in $O(K\log K + NB)$ time, where $K$ is the number of unique values in $\{f_b^n(t)\}_{b,n}$, by exploiting the structure of $(V_n^*)_{n=0}^{N}$.

\paragraph{Quantile crossing.}
Due to Proposition \ref{prop:theta_n_pn_sorted_wql}, quantile forecasts from Algorithm \ref{alg:omni-twoplayer_online_wql} do not exhibit quantile crossing. On the other hand, Algorithm \ref{alg:wql_hedge} does not guarantee such a property, and post-sorting the resulting quantile does not always decrease the loss. Appendix~\ref{appsec:post_sorting} provides examples where post-sorting rather increases the loss.

\paragraph{Omniprediction error bound.}
Comparing the omniprediction error bound \eqref{eq:hedge_wql_error_bound} and \eqref{eq:omni_twoplayer_wql_error_bound}, we notice that the former bound is a function of the error of base forecasters, $\rho_n(f_b^n(t), Y_t)$, while the latter bound is a function of relative error of two-player game based algorithm output with respect to base forecasters, $\rho_n(q^G_n(t), Y_t) - \rho_n(f_b^n(t), Y_t)$. This suggests that the two-player based algorithm would achieve a better omniprediction error for harder prediction tasks since the error of base forecasters is likely to be larger compared to the relative error.

\section{Experiment details and additional plots}

\subsection{Experiment details} \label{appsubsec:experiment_details}
During the pandemic, the United States Center for Disease Control and Prevention solicited forecasts of future disease outcomes from research groups throughout the country \cite{cramer2022covidhub}. In our empirical illustration of our method, we focus in particular on forecasts of the future number of people that will be hospitalized with Covid-19 on a given day. Below is the detailed information about the data: 
\begin{itemize}
  \item Horizon: 1, 2, 3, 4 weeks ahead
  \item Location (total $52$): 51 state-level series (50 states + DC) and the U.S. aggregate series 
  \item Dates: 2020-12-29 to 2023-06-05 (127 weeks)
  \item Base forecasters (total $18$): CU-select, GT-DeepCOVID, COVIDhub-baseline, Karlen-pypm, JHU\_IDD-CovidSP, MOBS-GLEAM\_COVID, USC-SI\_kJalpha, JHUAPL-Bucky, JHUAPL-Gecko, MUNI-ARIMA, CMU-TimeSeries, UMass-trends\_ensemble, PSI-DICE, IHME-CurveFit, CUB\_PopCouncil-SLSTM, BPagano-RtDriven, LANL-GrowthRate, UVA-Ensemble
  \item Ensemble models (total $2$): COVIDhub-4\_week\_ensemble (COVIDhub-Ens), JHUAPL-SLPHospEns (JHUAPL-SLPEns)
  \item Quantile levels (total $23$): [0.01, 0.025, 0.05, 0.1, 0.15, 0.2, 0.25, 0.3, 0.35, 0.4, 0.45, 0.5, 0.55, 0.6, 0.65, 0.7, 0.75, 0.8, 0.85, 0.9, 0.95, 0.975, 0.99] 
\end{itemize}

We choose the grid spacing for each region so that the resulting discretization yields sufficiently fine resolution across time series based on their respective ranges. In particular, this choice ensures more than $500$ grid points for all $208 = 52 \times 4$ time series, and more than $1{,}000$ grid points for over $150$ time series. The corresponding grid spacing used for each region is as follows:
\begin{itemize}
  \item RI, ND, AK, VT, WY: 2
  \item NM, ID, NE, WV, HI, NH, ME, DE, SD: 5
  \item OR, UT, NV, IA, KS, MS, MT, DC: 10
  \item VA, WA, AZ, TN, MA, IN, MO, MD, CO, WI, MN, SC, AL, LA, KY, OK, CT, AR: 20
  \item PA, IL, OH, GA, NC, MI, NJ: 50
  \item CA, TX, FL, NY: 100
  \item US: 500
\end{itemize}

\subsection{Additional plots} \label{appsubsec:add_plots}

\begin{figure}[htbp]
  \centering
  \begin{subfigure}{0.98\textwidth}
      \centering
      \includegraphics[width=\textwidth]{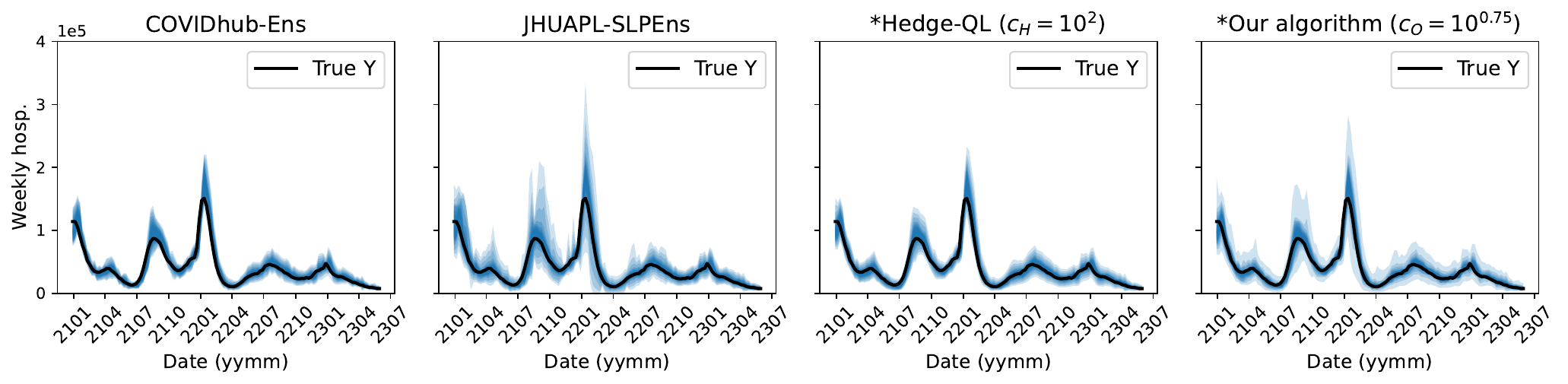}
      \caption{$1$-week-ahead prediction.}
      \label{fig:quantile_estimates_us_1wk}
  \end{subfigure}

  \vspace{0.5em}

  \begin{subfigure}{0.98\textwidth}
      \centering
      \includegraphics[width=\textwidth]{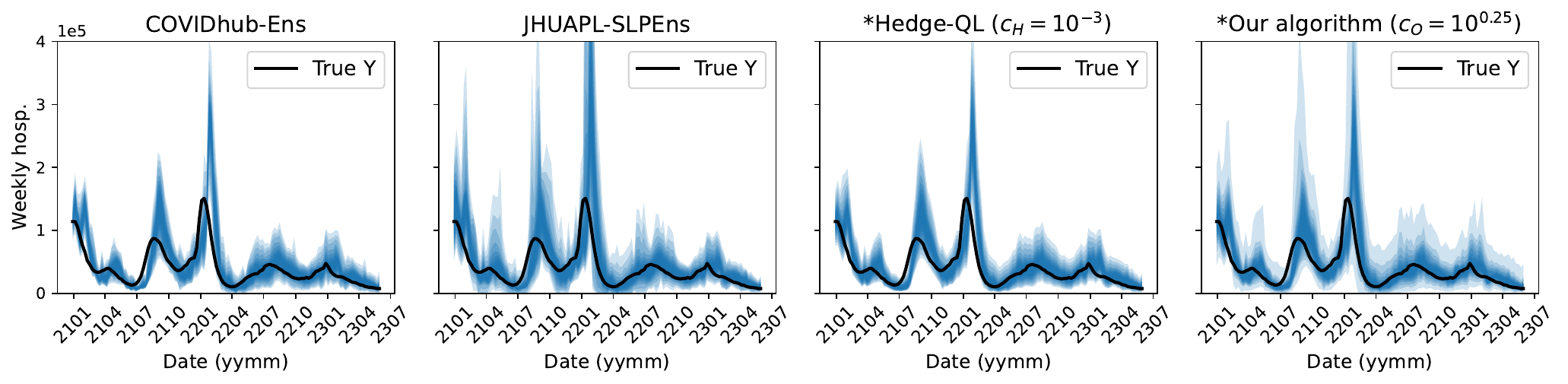}
      \caption{$4$-week-ahead prediction.}
      \label{fig:quantile_estimates_us_4wk}
  \end{subfigure}

  \caption{
  Quantile forecasts for US-agg weekly COVID-19 hospitalizations at $1$-week-ahead and $4$-week-ahead horizons. The learning rate for \texttt{*Hedge-QL} and \texttt{*Our algorithm} is selected to achieve the best total omni-error. The black curve shows the observed hospitalizations. The blue shaded regions show nested quantile prediction bands: darker regions correspond to more central quantile intervals, while lighter regions correspond to wider intervals.
  }
  \label{fig:quantile_estimates_us}
\end{figure}

Figure~\ref{fig:quantile_estimates_us} compares quantile forecasts for US-agg hospitalizations. During the pandemic, COVIDhub-Ens was constructed as a weighted average of the quantile estimates of the base forecasters, whereas JHUAPL-SLPEns was obtained by first constructing predictive distributions and then averaging the resulting CDFs \cite{jhuapl_slphospens}. 
Hedge-QL also aggregates quantile estimates via weighted averaging. 
It is well-known that averaging quantile forecasts produces sharper intervals than averaging predictive distributions~\cite{lichtendahl2013qorcdf}. Consistent with this, COVIDhub-Ens and Hedge-QL produce relatively sharp prediction bands, whereas JHUAPL-SLPEns yields wider intervals. 
Interestingly, although our algorithm only uses quantile forecasts, it produces intervals that are slightly wider than those of COVIDhub-Ens and Hedge-QL, yet noticeably sharper than JHUAPL-SLPEns, while still closely tracking the observed trajectory.

Figure~\ref{fig:HQL_sensitivity}, 
analogous to Figure~\ref{fig:sensitivity}, 
shows how the performance of Hedge-QL varies with the learning rate. The figure shows the realized total omni-error and total quantile loss realized by Hedge-QL for $1$- to $4$-week-ahead horizons on the US-agg data with learning rate constants $c_H \in \{10^{i/4} \mid i=-4,\ldots,12\}$. In addition, the bottom row shows that the quantile crossing of quantile estimates of Hedge-QL could be substantial, especially for large $c_H$s. Our algorithm, by construction, never produces crossings.

\begin{figure}[htbp]
  \centering
  \begin{subfigure}{0.98\textwidth}
      \centering
      \includegraphics[width=\textwidth]{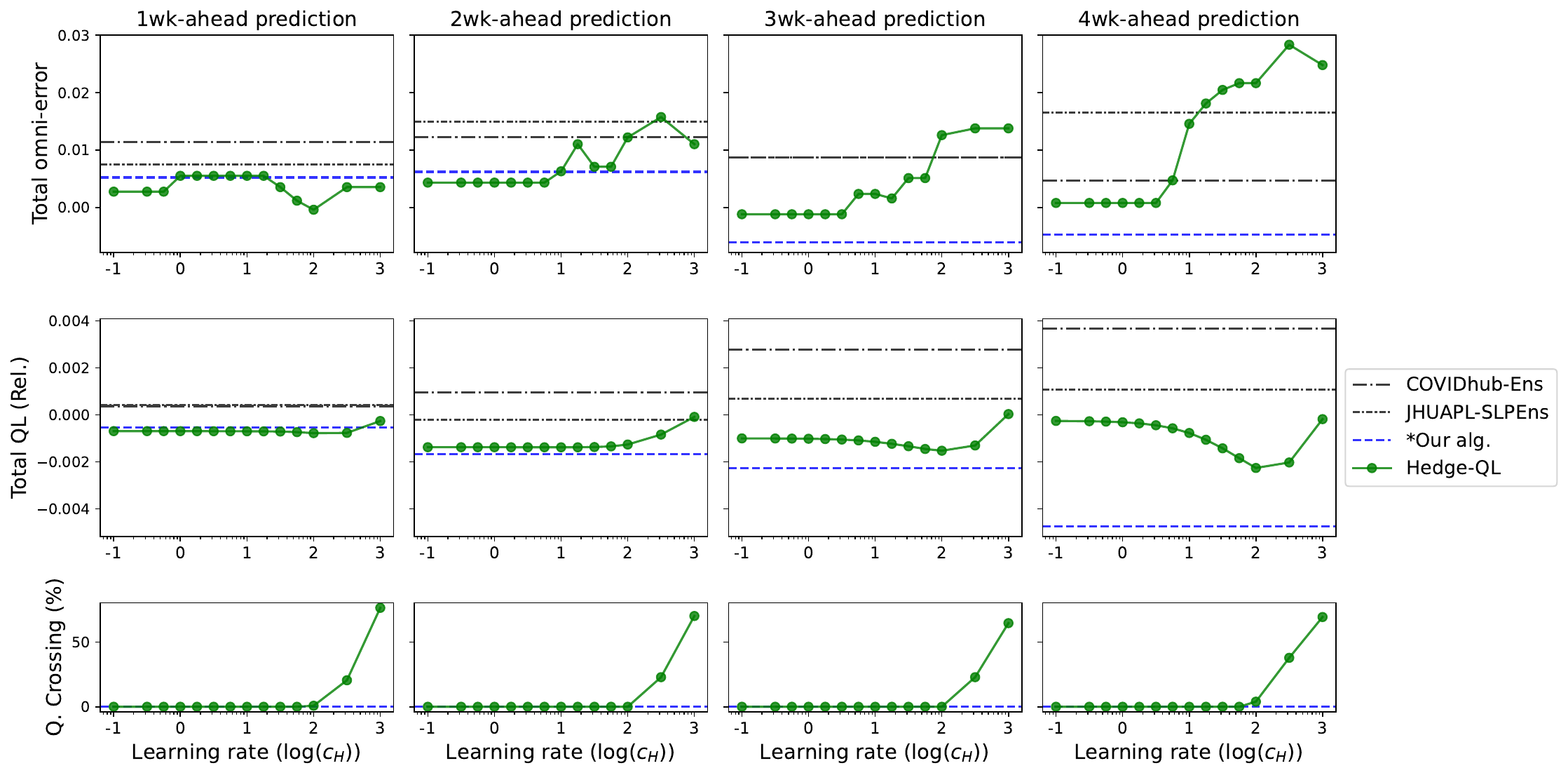}
  \end{subfigure}

  \caption{
    Sensitivity of Hedge-QL to the choice of learning rate. 
    Each panel shows the performance of Hedge-QL as a function of the learning rate (green curve) as measured by the total omni-error (top row) or total quantile loss relative to best base forecaster (middle row) on the US-agg data. Panels from left to right shows results at one to four week time horizons, while horizontal lines indicate the performance of \texttt{*Our algorithm} and the two baseline ensembles. Lower values indicate better performance. The bottom row shows the percentage of time steps at which the quantile estimates of Hedge-QL exhibit quantile crossing.
  }
  \label{fig:HQL_sensitivity}
\end{figure}

Figure~\ref{fig:grid_sensitivity} shows the effect of the grid resolution, which determines the trade-off between the computational cost of our algorithm (Proposition~\ref{prop:omni_subroutine}) and the granularity of its predictions. Across all four horizons, the total quantile loss of our algorithm with a fixed learning rate constant $c_O$ varies little with the grid unit. This indicates that a coarse grid, and hence a small $m$, suffices in practice: the accuracy of our method is largely insensitive to this discretization choice.

\begin{figure}[htbp]
  \centering
  \includegraphics[width=1.0\textwidth]{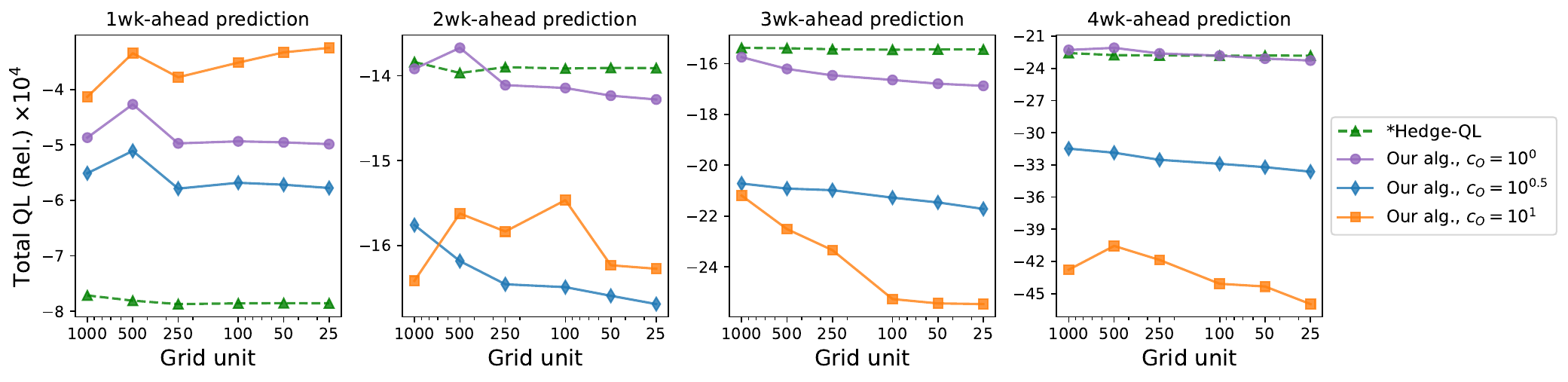}
  \caption{
    Sensitivity of our algorithm to the grid resolution.
    Each panel shows the total quantile loss relative to the best base forecaster on the US-agg data as a function of the grid unit $\{1000, 500, 250, 100, 50, 25\}$ (a smaller unit yields a larger grid size $m$). Lower values indicate better performance.
  }
  \label{fig:grid_sensitivity}
\end{figure}

\begin{figure}[htbp]
  \centering
  \includegraphics[width=0.8\textwidth]{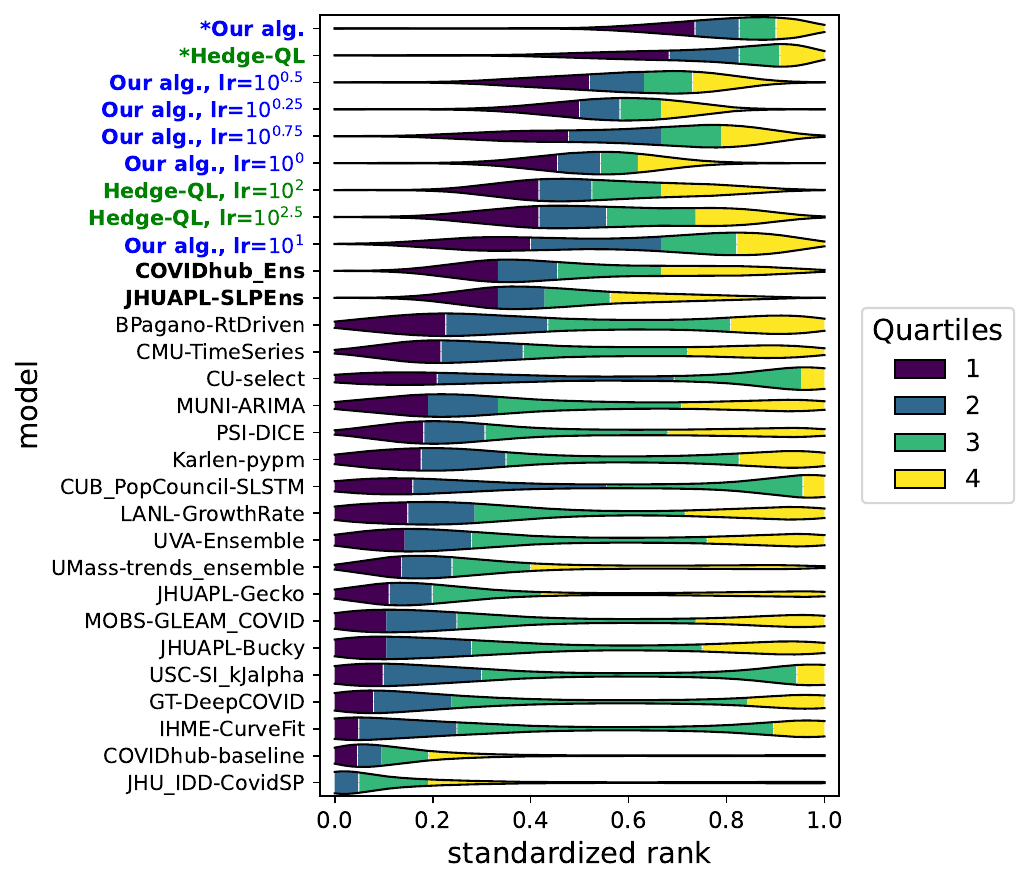}
  \caption{
    Distribution of standardized ranks of instantaneous quantile loss across all (horizon, location, time) tuples for the $51$ state-level series. 
    For each tuple, we rank the available forecasts by quantile loss and normalize the ranks to $[0,1]$. 
    The colored segments represent the empirical distribution of ranks grouped into quartiles. 
    }
  \label{fig:rank_distribution}
\end{figure}

Lastly, Figure~\ref{fig:rank_distribution} summarizes performance of all algorithms, including $18$ base forecasters using standardized ranks. Models in the column of the figure are ordered by their first-quartile performance (lowest quartile of standardized rank), with higher placement indicating better performance.
Besides \texttt{*Our algorithm} and \texttt{*Hedge-QL}, multiple instances of our algorithm with a fixed learning rate are ranked higher than COVIDHUB-Ens and JHUAPL-SLPEns as well as all of the base forecasters. 
In particular, the both the first and third quartile of standardized ranks of all instances  of our algorithm are comparable to or higher than the two ensemble models. This implies that our method, for a wide range of learning rates, can be viewed as better and more robust model compared to existing ensemble models, COVIDHUB-Ens and JHUAPL-SLPEns.

\section{The effect of post-sorting on proper loss functions} \label{appsec:post_sorting}

Prior work \cite{fakoor2023flexible} showed that post-sorting never increases the loss for \textbf{unweighted} sums of pinball losses at different quantile levels. Unfortunately, the same does not hold for \textbf{weighted} sums of pinball losses. We present two results in this section. First, for a class of weighted sums of pinball losses at two quantile levels, we fully characterize the conditions under which post-sorting always decreases the loss. Conversely, when these conditions fail, there exists an instance $(Y, x_1, x_2)$ for which post-sorting increases the loss. Second, we present a subclass of loss functions where post-sorting does not increase the loss.

Define the pinball loss at quantile level $\tau$ as $\rho_\tau(x) := \tau x\indi{x \geq 0} +(1-\tau)|x|\indi{x < 0} = (\tau - \indi{x < 0})x$.
\begin{proposition}
For $(\alpha, \beta) \in (0, 1)^2$ with $\alpha < \beta$, define a weighted quantile loss as
\[
\ell(x_1, x_2) := \E_Y [w_\alpha \rho_\alpha (Y-x_1) + w_\beta \rho_\beta (Y-x_2)].
\]
If $(w_\alpha, w_\beta)$ satisfies one of the following conditions, then post-sorting does not increase the loss:
\begin{enumerate}
  \item $w_\alpha = w_\beta$
  \item $w_\alpha < w_\beta$ and $(1-\alpha)w_\alpha \geq (1-\beta)w_\beta$
  \item $w_\alpha > w_\beta$ and $\beta w_\beta \geq \alpha w_\alpha$.
\end{enumerate}
Conversely, if $(w_\alpha, w_\beta)$ does not satisfy any of the above conditions, then there exists $(Y, x_1, x_2)$ such that post-sorting increases the loss.
\end{proposition}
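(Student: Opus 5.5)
The plan is to reduce to a deterministic outcome and then to a piecewise-linear computation in $y$. Post-sorting only acts when the two forecasts cross, so assume $x_1 > x_2$. Post-sorting replaces $(x_1,x_2)$ by $(x_2,x_1)$. Since $\ell$ is an expectation over $Y$, linearity of expectation shows the following suffices for the sufficiency direction:
\[
\Delta(y) := w_\alpha\bigl[\rho_\alpha(y-x_1)-\rho_\alpha(y-x_2)\bigr] - w_\beta\bigl[\rho_\beta(y-x_1)-\rho_\beta(y-x_2)\bigr] \;\geq\; 0 \quad \text{for every } y \in \R .
\]
Here $\Delta(y)$ is the pre-sorting loss minus the post-sorting loss at outcome $y$. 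Conversely, for the necessity direction we may take $Y$ to be a point mass, so it suffices to exhibit a single $y$ with $\Delta(y)<0$.

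Next I compute $g_\tau(y) := \rho_\tau(y-x_1)-\rho_\tau(y-x_2)$, writing $d := x_1-x_2>0$. For $y \le x_2$ both arguments are nonpositive, giving $g_\tau(y) = (1-\tau)d$. For $y \ge x_1$ both arguments are nonnegative, giving $g_\tau(y) = -\tau d$. On $[x_2,x_1]$ the function $g_\tau$ is affine in $y$. Hence $\Delta$ is constant on $(-\infty,x_2]$, constant on $[x_1,\infty)$, and affine in between, so its minimum is attained on one of the two outer pieces. Therefore $\Delta \ge 0$ everywhere if and only if both of the following hold:
\[
(1-\alpha)w_\alpha \ge (1-\beta)w_\beta \qquad\text{and}\qquad \beta w_\beta \ge \alpha w_\alpha .
\]

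It remains to match these two inequalities with the three listed conditions, using $\alpha<\beta$.
\begin{itemize}
\item If $w_\alpha = w_\beta$, both inequalities hold automatically, since $1-\alpha > 1-\beta$ and $\beta > \alpha$.
\item If $w_\alpha < w_\beta$, the second inequality is automatic because $\beta w_\beta > \alpha w_\alpha$. So nonnegativity is equivalent to $(1-\alpha)w_\alpha \ge (1-\beta)w_\beta$, which is condition 2.
\item If $w_\alpha > w_\beta$, the first inequality is automatic. So nonnegativity is equivalent to $\beta w_\beta \ge \alpha w_\alpha$, which is condition 3.
\end{itemize}

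For the converse, suppose none of the three conditions holds. By the case analysis above, one of the two endpoint inequalities then fails strictly. If the first fails, take any $x_1 > x_2$ and $Y \equiv y \le x_2$. If the second fails, take $Y \equiv y \ge x_1$. In either case $\Delta(y) = d\cdot(\text{negative}) < 0$, so post-sorting strictly increases the loss.

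The only mildly delicate step is the piecewise evaluation of $g_\tau$, specifically checking the sign conventions of $\rho_\tau$ at the boundary points $y = x_2$ and $y = x_1$. Because $\rho_\tau$ is continuous, these boundary points do not affect the conclusion.
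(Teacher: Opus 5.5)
Your proof is correct. It reaches the same conclusion as the paper by a different organization of the sufficiency direction.

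\textbf{The paper's route.} The paper keeps a general $Y$ and writes $\ell(x_1,x_2)-\ell(x_2,x_1)$ in terms of $A=\P(Y<x_2)$, $B=\P(x_2\le Y<x_1)$ and $\E_Y[\indi{x_2\le Y<x_1}(x_1-Y)]$. It then handles each condition separately:
\begin{itemize}
\item condition 1 follows from $c_\alpha+c_\beta=(\beta-\alpha)(x_1-x_2)>0$;
\item condition 2 follows by bounding the middle term by $B(x_1-x_2)$ and using $\beta w_\beta-\alpha w_\alpha\ge w_\beta-w_\alpha$;
\item condition 3 is left as ``similar''.
\end{itemize}

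\textbf{Your route.} You reduce to point masses by linearity. You then note that $\Delta$ is continuous and piecewise affine with constant outer pieces. Hence $\Delta\ge 0$ everywhere if and only if $(1-\alpha)w_\alpha\ge(1-\beta)w_\beta$ and $\beta w_\beta\ge\alpha w_\alpha$. Finally you check that the three listed conditions are exactly this conjunction.

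\textbf{What your route buys.} You get a single exact characterization that gives sufficiency and necessity together, and conditions 2 and 3 are treated symmetrically. The paper's bound on the middle term is really the same interpolation argument, done at the level of the distribution rather than pointwise.

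Your route also makes the sign in the converse transparent. The paper's counterexamples $Y\equiv x_2$ and $Y\equiv x_1$ are precisely your outer pieces. The paper displays the value there as $(w_\beta-w_\alpha)(x_1-x_2)>0$. It should read $((1-\alpha)w_\alpha-(1-\beta)w_\beta)(x_1-x_2)<0$, which your formula gives directly.

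\textbf{One minor point.} Your matching step silently uses $w_\alpha,w_\beta\ge 0$. For example, ``$\beta w_\beta>\alpha w_\alpha$ automatically when $w_\alpha<w_\beta$'' needs $w_\beta>0$. This holds in the paper's setting of nonnegative weights summing to one, but you should state it.
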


\begin{proof}
Choose any $x_1 > x_2$. The loss before post-sorting is $\ell(x_1, x_2)$ and the loss after post-sorting is $\ell(x_2, x_1)$. 
\begin{align*}
& \ell(x_1, x_2) - \ell(x_2, x_1) \\
& = w_\alpha \E_Y \left[(\alpha - \indi{Y < x_1})(Y-x_1) - (\alpha - \indi{Y < x_2})(Y-x_2) \right] + \\ 
& \qquad w_\beta \E_Y \left[(\beta - \indi{Y < x_2})(Y-x_2) - (\beta - \indi{Y < x_1})(Y-x_1) \right] \\
& = w_\alpha E_Y \left[\left\{-\alpha(x_1 - x_2) + \indi{Y < x_1}x_1 - \indi{Y < x_2}x_2 - \indi{x_2 \leq Y < x_1}Y \right\}\right] + \\
& \qquad w_\beta E_Y \left[ \left\{\beta(x_1 - x_2) + \indi{Y < x_2}x_2 - \indi{Y < x_1}x_1 + \indi{x_2 \leq Y < x_1}Y \right\} \right] \\
& = \E_Y \left[\left\{-\alpha(x_1 - x_2) + \indi{x_2 \leq Y < x_1}(x_1 - Y) + \indi{Y < x_2}(x_1 - x_2)\right\} \right] w_\alpha + \\
& \qquad \E_Y \left[\left\{\beta(x_1 - x_2) - \indi{x_2 \leq Y < x_1}(x_1 - Y) - \indi{Y < x_2}(x_1 - x_2)\right\} \right] w_\beta 
\end{align*} 
Denote the coefficient of $w_\alpha, w_\beta$ as $c_\alpha, c_\beta$ respectively. Notice that $c_\alpha + c_\beta = (\beta-\alpha)(x_1 - x_2) > 0$. Therefore $\ell(x_1, x_2) - \ell(x_2, x_1) > 0$ if $w_\alpha = w_\beta$ (Condition I).

Now, we consider Condition II: $w_\alpha < w_\beta$ and $(1-\alpha)w_\alpha \geq (1-\beta)w_\beta$. Denote $A = \P(Y < x_2)$, $B = \P(x_2 \leq Y < x_1)$ and observe that $\E_Y \indi{x_2 \leq Y < x_1}(x_1 - Y) \in [0, B(x_1 - x_2)]$. Then the difference in loss is written as:
\begin{align*}
& \ell(x_1, x_2) - \ell(x_2, x_1) \\
& = \left(w_\alpha(A-\alpha) + w_\beta(\beta-A)\right)(x_1 - x_2) - (w_\beta - w_\alpha)\E_Y \indi{x_2 \leq Y < x_1}(x_1 - Y) \\
& \geq -(w_\beta - w_\alpha)A + \beta w_\beta - \alpha w_\alpha - (w_\beta - w_\alpha)B(x_1 - x_2) \\
& \geq -(w_\beta - w_\alpha)A + (w_\beta - w_\alpha) - (w_\beta - w_\alpha)B(x_1 - x_2) \\
& \geq (w_\beta - w_\alpha)(1-A-B)(x_1 - x_2) \geq 0.
\end{align*}
Note that the second inequality used $\beta w_\beta - \alpha w_\alpha \geq w_\beta - w_\alpha$ which follows from Condition II and $A+B = \P(Y < x_1) \leq 1$. Condition III can be proved in a similar manner. \smallskip

Next, we prove that there exists $(Y, x_1, x_2)$ such that post-sorting increases the loss if $(w_\alpha, w_\beta)$ does not satisfy any of three conditions. First consider the case $w_\alpha < w_\beta$ and $(1-\alpha)w_\alpha < (1-\beta)w_\beta$. Let $Y \equiv x_2$ almost surely. Then $A = 0$, $B = 1$, and $\E_Y \indi{x_2 \leq Y < x_1}(x_1 - Y) = x_1 - x_2$. Therefore,
\begin{align*}
& \ell(x_1, x_2) - \ell(x_2, x_1) \\
& = \left(w_\alpha(A-\alpha) + w_\beta(\beta-A)\right)(x_1 - x_2) - (w_\beta - w_\alpha)\E_Y \indi{x_2 \leq Y < x_1}(x_1 - Y) \\
& = \left(w_\alpha(0-\alpha) + w_\beta(\beta-0)\right)(x_1 - x_2) - (w_\beta - w_\alpha)(x_1 - x_2) \\
& = (w_\beta - w_\alpha)(x_1 - x_2) > 0.
\end{align*}
Therefore, post-sorting increases the loss. Next, consider the case where $w_\alpha > w_\beta$ and $\beta w_\beta < \alpha w_\alpha$. Let $Y \equiv x_1$ almost surely. Similar calculation shows that post-sorting increases the loss.
\end{proof}

We prove another subclass of proper loss functions where post-sorting does not increase the loss. From Theorem \ref{thm:decomp_single_q} and Theorem \ref{thm:decomp_multi_q}, a proper loss for multiple quantile levels can be written as the following form under mild regularity conditions:
\begin{align} \label{eq:multi_q_consistent_loss}
\ell((q_n)_{n=1}^N) = \E_Y \sum_{n=1}^{N}(\indi{Y< q_n} - \tau_n)(g_n(q_n) - g_n(Y)),  
\end{align}
where $g_n$ is a non-decreasing left-continuous function, for $n \in [N]$.

\begin{proposition}
If a loss function $\ell$ defined as \eqref{eq:multi_q_consistent_loss} satisfies $g_n(x) = g(x)$ for all $i \in [N]$, then post-sorting does not increase the loss.
\end{proposition}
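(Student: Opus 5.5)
The plan is to reduce the statement to a pairwise swap argument and then verify the swap pointwise in $y$. Post-sorting a vector $(q_n)_{n=1}^N$ can be achieved by a finite sequence of adjacent transpositions, each of which exchanges an out-of-order pair $q_n > q_{n+1}$ (bubble sort). It therefore suffices to show that for any $n$ and any $x_1 > x_2$, replacing $(q_n,q_{n+1}) = (x_1,x_2)$ by $(x_2,x_1)$ does not increase the loss. Because the loss \eqref{eq:multi_q_consistent_loss} is a sum over levels, all other coordinates are unaffected, and we only need to compare the two summands at levels $\alpha=\tau_n<\beta=\tau_{n+1}$ with the common function $g$. Moreover, since the expectation over $Y$ is linear, it is enough to prove the inequality for every fixed $y$.

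So fix $y$, $\alpha<\beta$ and $x_1>x_2$, and set
\[
S_\alpha(x) := (\one(y<x)-\alpha)(g(x)-g(y)), \qquad S_\beta(x) := (\one(y<x)-\beta)(g(x)-g(y)).
\]
We want to show
\[
\Delta := S_\alpha(x_1)+S_\beta(x_2)-S_\alpha(x_2)-S_\beta(x_1)\ \geq\ 0 .
\]
Expanding, the indicator parts cancel, because they appear with the same $g$ in both terms. What remains is
\[
\Delta = -\alpha\bigl(g(x_1)-g(x_2)\bigr)+\beta\bigl(g(x_1)-g(x_2)\bigr) = (\beta-\alpha)\bigl(g(x_1)-g(x_2)\bigr).
\]
This is nonnegative since $g$ is non-decreasing and $\beta>\alpha$. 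To double-check the cancellation, write $S_\tau(x)=\one(y<x)(g(x)-g(y)) - \tau(g(x)-g(y))$. The first part is $\tau$-independent, so it contributes $+[\cdot](x_1)+[\cdot](x_2)-[\cdot](x_2)-[\cdot](x_1)=0$. The $-\tau(g(x)-g(y))$ parts give $-\alpha g(x_1)-\beta g(x_2)+\alpha g(x_2)+\beta g(x_1)$, with the $g(y)$ terms cancelling.

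Finally, we take expectations over $Y$ and sum over the bubble-sort steps. Each step weakly decreases the loss, so the sorted vector has loss no larger than the original. There is no real obstacle here. The only care needed is to observe that the adjacent-swap reduction is valid: each transposition only swaps values between two coordinates whose levels satisfy $\tau_n<\tau_{n+1}$ with the larger value moving to the higher level, and the computation above applies to any pair of levels $\tau_n<\tau_{n'}$, not only adjacent ones. This is exactly the point where a common $g$ is essential. With distinct $g_n$ the indicator terms would not cancel, which is consistent with the counterexamples in the previous proposition.
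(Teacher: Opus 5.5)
Your proof is correct and follows essentially the same argument as the paper. You reduce to a single out-of-order pair and note that with a common $g$ the indicator terms cancel, leaving $(\tau_{n+1}-\tau_n)\bigl(g(x_1)-g(x_2)\bigr)\geq 0$; your bubble-sort reduction via adjacent transpositions just spells out what the paper compresses into ``due to additivity of the loss function, it suffices'' to treat two levels.
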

\begin{proof}
Due to additivity of the loss function, it suffices to show $\ell(x_1, x_2, y) - \ell(x_2, x_1, y) \geq 0$ for $x_1 > x_2$ where $x_n$ corresponds to the $\tau_n$-level quantile estimate.
\begin{align*}
& \ell(x_1, x_2) - \ell(x_2, x_1) \\
& = \E_Y [ (\indi{Y < x_1} - \tau_1)(g_1(x_1) - g_1(Y)) - (\indi{Y < x_2} - \tau_1)(g_1(x_2) - g_1(Y)) + \\
& \quad \qquad (\indi{Y < x_2} - \tau_2)(g_2(x_2) - g_2(Y)) - (\indi{Y < x_1} - \tau_2)(g_2(x_1) - g_2(Y))] \\
& = \E_Y [\tau_1(g_1(x_2)-g_1(x_1)) + \indi{Y < x_1}g_1(x_1) - \indi{Y < x_2}g_1(x_2) - \indi{x_2 \leq Y < x_1}g_1(Y) + \\
& \quad \qquad \tau_2(g_2(x_1)-g_2(x_2)) + \indi{Y < x_2}g_2(x_2) - \indi{Y < x_1}g_2(x_1) + \indi{x_2 \leq Y < x_1}g_2(Y)]
\end{align*}
Now using the fact that $g_1 \equiv g_2 \equiv g$, most of the terms cancel out and we have 
\begin{align*}
\ell(x_1, x_2) - \ell(x_2, x_1) = (\tau_2 - \tau_1)(g(x_1) - g(x_2)) \geq 0.
\end{align*}
\end{proof}

\section{Additional details}

\subsection{Regularity conditions} \label{appssec:regularity}
This subsection provides details of the regularity conditions in Theorem \ref{thm:decomp_multi_q}. In Proposition 4.2 in \cite{fissler2016multiq}, the conditions are stated in terms of the identification function $V$ and its expectation over $Y$, $\bar{V}$ (see Definition 2.2 in \cite{fissler2016multiq}). For the $\tau$-level quantile functional, the identification function on $q,y \in [0,1]$ and $F$ a cdf supported on $[0,1]$ is as follows:
\[
V(q,y) = \one(y \leq x) - \tau 
\qquad \text{and} \qquad 
\bar{V}(q,F) := \E_{Y \sim F} V(q,Y) = F(q) - \tau. 
\]
For multiple quantile levels, $\{\tau_n\}_{n=1}^N$, the identification function is as follows: 
\begin{align*}
V((q_n)_{n=1}^N,y) 
&= \left(\one(y \leq q_1) - \tau_1, \,\, \one(y \leq q_2) - \tau_2, \,\, \ldots, \one(y \leq q_N) - \tau_N \right), \\
\bar{V}((q_n)_{n=1}^N, F) 
&= \left(F(q_1) - \tau_1, \,\, F(q_2) - \tau_2, \,\, \ldots, F(q_N) - \tau_N \right). 
\end{align*}

We now state the regularity conditions for Theorem \ref{thm:decomp_multi_q}. The conditions are referred to as (V1), (V3), (V4), (VS1), (S2), and (F1) in \cite{fissler2016multiq}. Among these, it is straightforward to check that (V1), (V3), (V4), and (F1) hold for quantile functionals. We state the remaining conditions (S2) and (VS1), which are conditions on the loss function, below.

\paragraph{Assumption (VS1)} The complement of the set 
\[
C := \{\left((x_n)_{n=1}^N, y\right) \in \R^N \times \R \mid \ell\left((x_n)_{n=1}^N,\cdot\right) \text{ are continuous at } y\}
\]
has $(N+1)$-dimensional Lebesgue measure zero. 

\paragraph{Assumption (S2)} For every continuously differentiable cumulative distribution function $F$ supported on $[0,1]$, the function $\bar{\ell}(\cdot, F) := \E_{Y \sim F}\ell(\cdot, Y)$ is continuously differentiable, and its gradient is locally Lipschitz continuous. Furthermore, $\bar{\ell}(\cdot, F)$ is twice continuously differentiable at $(x_n)_{n=1}^N = (q_{\tau_n}(F))_{n=1}^N$.

\subsection{Omniprediction for multi-horizon forecast} \label{appsec:multi-horizon}
We note that our omniprediction framework can be extended to multi-horizon prediction task, where the ground truth $Y_t$ and base forecasters' predictions $f_b^n(t)$ are not scalars but vectors. By denoting such vector ground truth and vector predictions as $Y_t = (Y_{t,1}, ..., Y_{t,H})$ and $f_b^n(t) = (f_b^n(t,1), ..., f_b^n(t,H))$, respectively, we can define the omniprediction objective for multiple-horizon prediction as
\[
\sup_{\ell \in \cl_1,\, b \in [B], h \in [H]} \frac{1}{T} \sum_{t=1}^T \E_{q_{n,h} \sim P_{n,h}(t)} [\ell\left((q_{n,h})_{n=1}^N, Y_{t,h}\right)] - \E[\ell\left((f_b^n(t,h))_{n=1}^N, Y_{t,h}\right)],
\]
which means we want to minimize the maximum of regret for each horizon rather than the average of regret over all horizons, which is commonly used in practice. Then, with Algorithm \ref{alg:omni_online} in hand, the task reduces into solving the following minimax program:
\begin{align*}
    \min_{\substack{P_{n,h} \in \Delta_m, \\ \forall n,h}} \max_{\substack{P^y_{h} \in \Delta_m, \\ \forall h}} \sum_{h=1}^H  \sum_{n=1}^N \sum_{b=1}^B  w_{ibh}^n \E_{q_{n,h} \sim P_{n,h}, Y_h \sim P^y_{h}}[\ell_{\tau_n, \theta_i}(q_{n,h}, Y_h) - \ell_{\tau_n,\theta_i}(f_b^n(t,h), Y_h)].
\end{align*}
However, this minimax program is of form 
\[
    \min_{\substack{P_{n,h} \in \Delta_m, \\ \forall n,h}} \max_{\substack{P^y_{h} \in \Delta_m, \\ \forall h}} \sum_{h=1}^H g_h\left((P_{n,h})_{n=1}^N, P^y_{h}\right) 
    = \sum_{h=1}^H \min_{\substack{P_{n,h} \in \Delta_m, \\ \forall n}} \max_{P^y_{h} \in \Delta_m} g_h\left((P_{n,h})_{n=1}^N, P^y_{h}\right) ,
\]
and this reduces into solving $H$ minimax program of form \eqref{eq:Pn_def_minimax}, solution of which is provided in Theorem \ref{thm:omni_guarantee}, separately for each horizon. Therefore, running Algorithm \ref{alg:omni_online} for each horizon separately, and combining the solutions together, will solve the omniprediction problem for multiple-horizon prediction.

\subsection{Regret bound of Hedge algorithm}
We provide the well-known regret bound of the Hedge algorithm, which is used frequently applied in the proofs of this paper.

\begin{theorem}[Regret of Hedge (e.g., Theorem 1.5 of \cite{hazanoco})] \label{appthm:hedge_regret}
  Consider an online learning problem with $B$ experts receiving bounded losses 
  $\{\ell_{b}(t)\}_{b \in [B], t \in [T]}$.
  
  Suppose that at time step $t$ we make the same prediction as expert $i$ with probability
  \[
  w_b(t) = \frac{\exp\!\left(-\eta \sum_{s < t} \ell_b(s)\right)}
  {\sum_{b'=1}^m \exp\!\left(-\eta \sum_{s < t} \ell_{b'}(s)\right)},
  \]
  for some $\eta > 0$. Then,
  \[
  \sum_{t=1}^T \sum_{b=1}^B w_b(t)\ell_b(t)
  \;\le\;
  \min_{1 \le b \le B} \sum_{t=1}^T \ell_b(t)
  \;+\; \eta \sum_{t=1}^T \sum_{b=1}^B w_b(t)\ell_b^2(t)
  \;+\; \frac{\log B}{\eta}.
  \]
  \end{theorem}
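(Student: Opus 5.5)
We will prove the Hedge regret bound, Theorem~\ref{appthm:hedge_regret}, with the standard potential-function argument. Define the unnormalized weights $W_b(t) := \exp(-\eta \sum_{s<t} \ell_b(s))$ and the potential $\Phi(t) := \sum_{b=1}^B W_b(t)$. Then $\Phi(1) = B$ and $w_b(t) = W_b(t)/\Phi(t)$. The plan is to bound $\log(\Phi(T+1)/\Phi(1))$ from above and from below, and to compare the two bounds.

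\textbf{Lower bound.} For every fixed expert $b^\star$, we have $\Phi(T+1) \geq W_{b^\star}(T+1) = \exp(-\eta \sum_{t} \ell_{b^\star}(t))$. Hence
\[
\log\frac{\Phi(T+1)}{\Phi(1)} \geq -\eta \min_{b} \sum_{t=1}^T \ell_b(t) - \log B .
\]

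\textbf{Upper bound.} Each step of the potential satisfies
\[
\frac{\Phi(t+1)}{\Phi(t)} = \sum_{b} w_b(t) e^{-\eta \ell_b(t)} .
\]
We apply the elementary inequality $e^{-x} \leq 1 - x + x^2$, which holds for $x \geq -1$. The argument is $x = \eta\ell_b(t)$, so the inequality requires $\eta \ell_b(t) \geq -1$; this is where boundedness of the losses enters. It gives
\[
\frac{\Phi(t+1)}{\Phi(t)} \leq 1 - \eta \sum_b w_b(t)\ell_b(t) + \eta^2 \sum_b w_b(t)\ell_b(t)^2 .
\]
Then $\log(1+z) \leq z$ yields
\[
\log\frac{\Phi(t+1)}{\Phi(t)} \leq -\eta \sum_b w_b(t)\ell_b(t) + \eta^2 \sum_b w_b(t)\ell_b^2(t) .
\]
Summing this over $t$ telescopes to an upper bound on $\log(\Phi(T+1)/\Phi(1))$.

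\textbf{Combining.} Chaining the lower and upper bounds and dividing by $\eta$ gives exactly
\[
\sum_t \sum_b w_b(t)\ell_b(t) \leq \min_b \sum_t \ell_b(t) + \eta \sum_t\sum_b w_b(t)\ell_b^2(t) + \frac{\log B}{\eta}.
\]

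The only delicate point is the range condition for $e^{-x}\le 1-x+x^2$. In the applications in this paper, the losses are of the form $-(\text{payoff gap})$, with gaps in $[-1,1]$; the Hedge update \eqref{eq:w_update} maximizes, so the role of $\ell_b$ is played by the negative gap. We therefore need $\eta |\ell_b(t)| \leq 1$, which holds whenever $\eta \leq 1$ and $|\ell_b(t)|\le 1$. The inequality itself is checked by elementary calculus: the function $1-x+x^2-e^{-x}$ vanishes at $0$, its derivative $-1+2x+e^{-x}$ is nonpositive on $[-1,0]$ and nonnegative on $[0,\infty)$, so $0$ is its minimum there. If the claim is meant without the condition $\eta|\ell|\le1$, one can instead invoke Hoeffding's lemma. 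For the instantiation in Theorem~\ref{thm:omni_guarantee}, bounding $\ell_b^2 \leq 1$ turns the middle term into $\eta T$, which recovers the $\eta + \log(mNB)/(T\eta)$ term used there.
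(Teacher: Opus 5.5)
Your potential-function argument is correct, and it is the standard proof. The paper gives no proof of its own; it only cites Hazan's Theorem~1.5, whose proof is exactly this argument. That theorem is stated for nonnegative losses, so $x=\eta\ell_b(t)\ge 0$ holds automatically.

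You are right that $\eta\ell_b(t)\ge -1$ is a genuine extra hypothesis and not a consequence of boundedness. As literally written, with merely bounded signed losses and arbitrary $\eta>0$, the statement is false. Take $T=1$, $B=100$, $\ell_1(1)=-1$, $\ell_b(1)=0$ for $b\ge 2$, and $\eta=20$. The left side is $-0.01$, while the right side is $-1+0.2+(\log 100)/20\approx -0.57$.

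For the same reason, your closing fallback is wrong. Hoeffding's lemma does not recover the stated inequality without that condition. It gives a different bound, with $\eta T(b-a)^2/8$ (for losses in $[a,b]$) in place of $\eta\sum_t\sum_b w_b(t)\ell_b(t)^2$.

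That range-based bound is nonetheless enough for the use in Theorem~\ref{thm:omni_guarantee}. There the gaps lie in $[-1,1]$, so it yields $\eta/2$ rather than $\eta$ in the normalized bound, for every $\eta>0$.
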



\end{document}